\def\tikz@plane@origin{\pgfpointxyz{0}{0}{#1}}%
        \def\tikz@plane@x{\pgfpointxyz{1}{0}{#1}}%
        \def\tikz@plane@y{\pgfpointxyz{0}{1}{#1}}%
\pgfplotsset{compat=1.17}
\DeclareMathOperator{\sg}{SG}
\DeclareMathOperator{\Id}{Id}
\DeclareMathOperator{\PA}{PA}
\DeclareMathOperator{\vectorize}{vec}
\DeclareMathOperator{\supp}{supp}
\newcommand{\taylor}{\hat{E}_{\delta, \text{Taylor}}}
\newcommand{\taylorPop}{E_{\delta, \text{Taylor}}}
\newcommand{\ipw}{\hat{E}_{\delta,\text{IS}}}
\definecolor{myred}{HTML}{FDA544}
\definecolor{mygreen}{HTML}{0f8c2e}
\theoremstyle{definition}
\newtheorem{definition}{Definition}
\newtheorem{assumption}{Assumption}
\newtheorem{example}{Example}
\theoremstyle{remark}
\newtheorem{remark}{Remark}
\renewcommand\thmcontinues[1]{Continued}
\newcommand{\eid}{\stackrel{\text{(d)}}{=}}
\DeclareMathOperator{\cov}{cov}
\DeclareMathOperator{\var}{var}
\def\R{\mathbb{R}}
\newcommand{\cG}{\mathcal{G}}
\newcommand{\cN}{\mathcal{N}}
\newcommand{\cO}{\mathcal{O}}
\newcommand{\cP}{\mathcal{P}}
\newcommand{\cY}{\mathcal{Y}}
\newcommand{\cZ}{\mathcal{Z}}
\newcommand{\bV}{\mathbf{V}}
\newcommand{\bW}{\mathbf{W}}
\newcommand{\bZ}{\mathbf{Z}}
\newcommand{\Ber}{\mathsf{Ber}}
\def\E{\mathbb{E}}
\def\P{\mathbb{P}}
\newcommand\indep{\protect\mathpalette{\protect\independenT}{\perp}}
\def\independenT#1#2{\mathrel{\rlap{$#1#2$}\mkern2mu{#1#2}}}
\newcommand{\1}[1]{\mathbf{1}\left\{#1\right\}}
\newcommand{\norm}[1]{\left\|#1\right\|}
\newcommand{\abs}[1]{\left|#1\right|}
\newcommand{\mytag}[2]{%
  \text{#1}%
  \@bsphack
  \begingroup
    \@onelevel@sanitize\@currentlabelname
    \edef\@currentlabelname{%
      \expandafter\strip@period\@currentlabelname\relax.\relax\@@@%
    }%
    \protected@write\@auxout{}{%
      \string\newlabel{#2}{%
        {#1}%
        {\thepage}%
        {\@currentlabelname}%
        {\@currentHref}{}%
      }%
    }%
  \endgroup
  \@esphack
}
\title{Evaluating Robustness to Dataset Shift \\via Parametric Robustness Sets}
\author{%
  Nikolaj Thams\thanks{Equal Contribution, order determined by coin flip.  Code is available at \href{https://github.com/clinicalml/parametric-robustness-evaluation}{this link}.} \\
  Dept.\ of Mathematical Sciences\\
  University of Copenhagen\\
  Copenhagen, Denmark\\
  \texttt{thams@math.ku.dk} \\
  \And%
  Michael Oberst$^{*}$\\
  CSAIL \& IMES\\
  MIT\\
  Cambridge, MA\\
  \texttt{moberst@mit.edu} \\
  \And%
  David Sontag\\
  CSAIL \& IMES\\
  MIT\\
  Cambridge, MA\\
  \texttt{dsontag@csail.mit.edu} \\
}
\begin{document}
\maketitle

\begin{abstract}
  We give a method for proactively identifying small, plausible shifts in distribution which lead to large differences in model performance.  These shifts are defined via parametric changes in the causal mechanisms of observed variables, where constraints on parameters yield a \enquote{robustness set} of plausible distributions and a corresponding worst-case loss over the set. 
  While the loss under an individual parametric shift can be estimated via reweighting techniques such as importance sampling, the resulting worst-case optimization problem is non-convex, and the estimate may suffer from large variance. 
  For small shifts, however, we can construct a local second-order approximation to the loss under shift and cast the problem of finding a worst-case shift as a particular non-convex quadratic optimization problem, for which efficient algorithms are available.  We demonstrate that this second-order approximation can be estimated directly for shifts in conditional exponential family models, and we bound the approximation error. We apply our approach to a computer vision task (classifying gender from images), revealing sensitivity to shifts in non-causal attributes.
\end{abstract}

\section{Introduction}%
\label{sec:introduction}

Predictive models may perform poorly outside of the training distribution, a problem broadly known as dataset shift \citep{Quionero-Candela2009-hg}.  In high-stakes applications, such as healthcare, it is important to understand the limitations of a model in advance \citep{Finlayson2021-tx}:  
given a model trained on data from one hospital, how will it perform under changes in the population of patients, in the incidence of disease, or in the treatment policy?

In this paper, our goal is to \textbf{proactively} understand the sensitivity of a predictive model to dataset shift, using only data from the training distribution. 
This requires domain knowledge, to specify what type of distributional changes are plausible.
Formally, for a model $f(X)$ trained on data from $\P(X, Y)$, with loss function $\ell(f(X), Y)$, we seek to understand the loss of the model under a set of \textit{plausible} future distributions $\cP$.  We seek to evaluate the worst-case loss over $\cP$,
\begin{equation}\label{eq:maximum_loss}
    \sup_{P \in \cP} \E_P[\ell(f(X), Y)],
\end{equation}
and provide an interpretable description of a distribution $P$ which maximizes this objective. If the value of the worst-case loss is low, this can build confidence prior to deployment, and otherwise, examining the worst-case distribution $P$ can help identify weaknesses of the model.
To illustrate, we use the following running example, inspired by~\citet{Subbaswamy2020-vr}.
\begin{example}[Changes in laboratory testing]\label{ex:lab_testing_rates}
  We seek to classify disease $(Y)$ based on the age ($A$) of a patient, whether a laboratory test has been ordered $(O)$, and test results $(L)$ if a test was ordered. The performance of a predictive model may be sensitive to changes in testing policies, as the \textit{fact that a test has been ordered} itself is predictive of disease. \Cref{fig:lab_testing_shift} (left) gives a plausible causal relationship between variables.  Let $\P(O = 1 |  A, Y) = \sigma(\eta(A, Y))$, where $\sigma$ is the sigmoid function and $\eta(A, Y)$ is the log-odds.  In \cref{fig:lab_testing_shift} (right), we show the loss under a set of new distributions parameterized by $\delta=(\delta_0, \delta_1)$, where we modify $\P_{\delta}(O = 1 |  A, Y) = \sigma(\eta(A, Y) + s(Y; \delta))$ for a \textit{shift function} $s(Y; \delta) = \delta_1 \cdot Y + \delta_0\cdot (1 - Y)$, which modifies the log-odds of testing for both sick and healthy patients. If $\delta_0, \delta_1$ are unconstrained, the worst-case occurs when all healthy patients are tested, and no sick patients are tested.
\end{example}

\begin{figure}[t]
\vspace{-1cm}
\centering
  \begin{subfigure}[t]{0.45\textwidth}
  \centering
    \includegraphics[width=\linewidth]{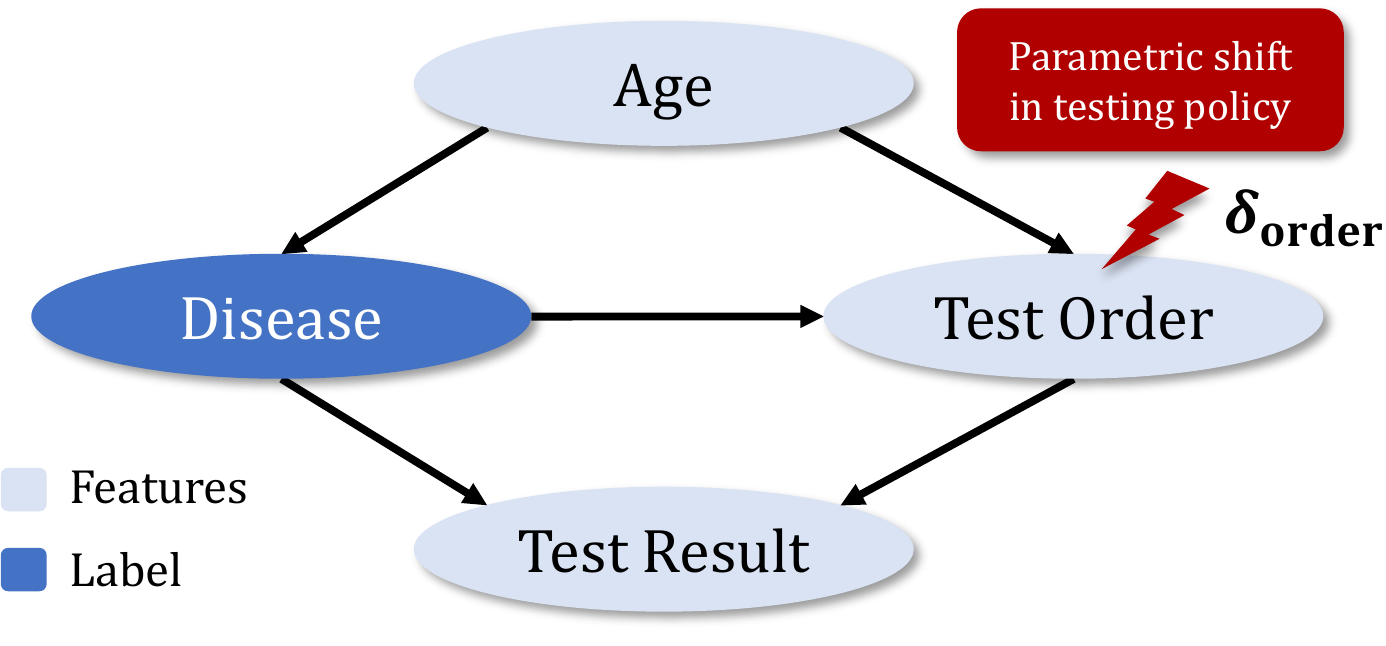}
  \end{subfigure}%
  \begin{subfigure}[t]{0.40\textwidth}
  \centering
    \includegraphics[width=0.7\linewidth]{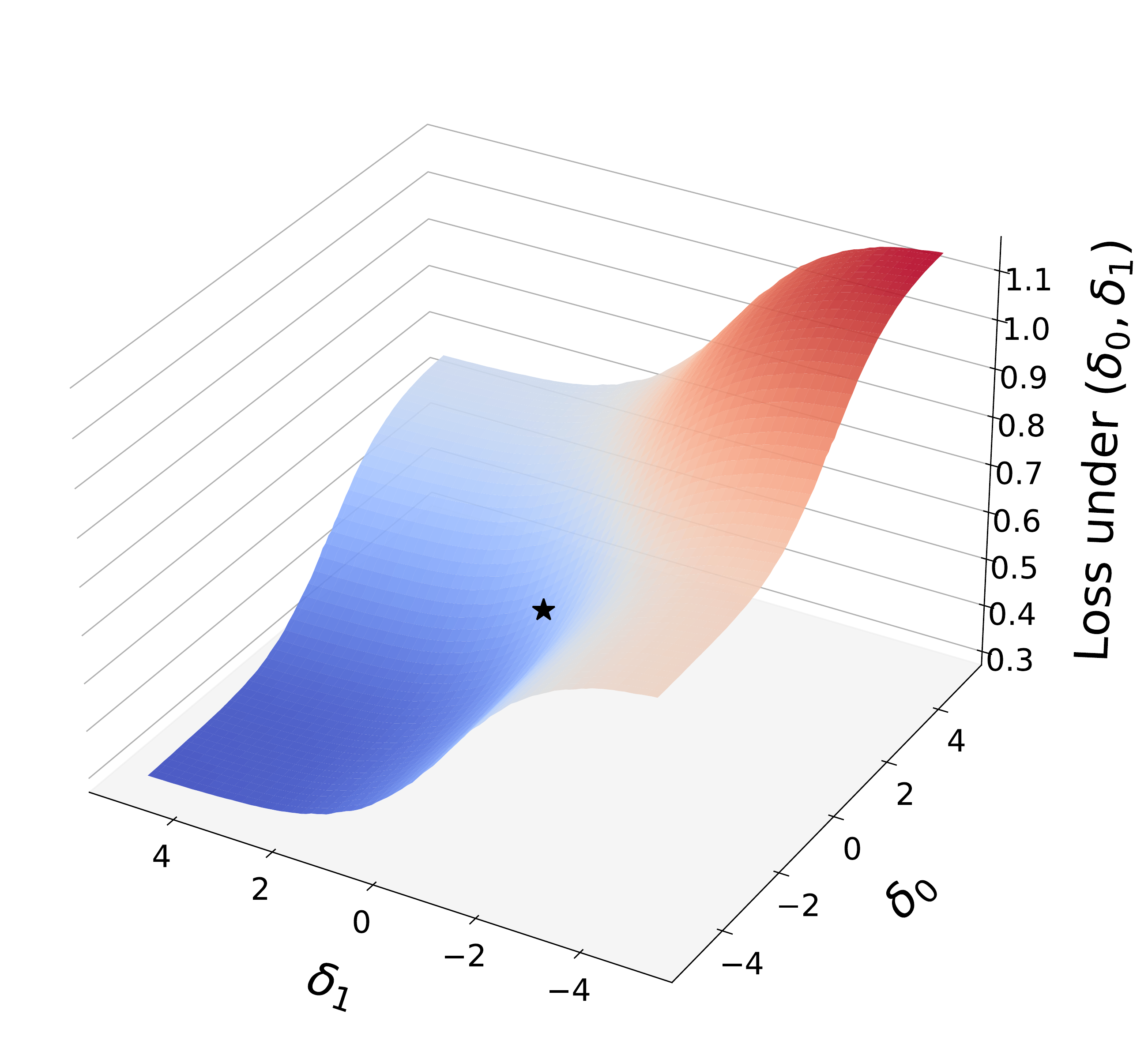}
  \end{subfigure}
  \caption{
    (Left) Causal graph for~\cref{ex:lab_testing_rates}, with a shift in conditional testing rates, parameterized by $\delta_{\text{order}}$. 
  (Right) We illustrate a shift using $s(Y; \delta_{\text{order}}) = \delta_1 \cdot Y + \delta_0 (1 - Y)$, where $\delta_{\text{order}} = (\delta_0, \delta_1)$.  Here we plot the (non-concave) landscape of the expected cross-entropy loss of a fixed model over distributions parameterized by $(\delta_0, \delta_1)$, with the training distribution given as the black star.  Simulation details are given in~\Cref{app:details_of_labtest_example_fig1}.
  }%
\label{fig:lab_testing_shift}
\end{figure}

The first challenge is to define a set of possible distributions $\cP$ such that each distribution $P\in\cP$ satisfies two desiderata: First, they should be \textit{causally interpretable and simple to specify}, without placing unnecessary restrictions on the data-generating process. Second, they should be \textit{realistic}, which often entails bounding the magnitude of the shift. We construct causally interpretable shifts by defining perturbed distributions $\P_{\delta}$ using changes in causal mechanisms, parameterized by a finite-dimensional parameter $\delta$. Our main requirement is that the shifting mechanisms follow a conditional exponential family distribution. For discrete variables, this places no restriction on $\P$: In \cref{ex:lab_testing_rates}, $O$ is binary and the log-odds $\eta(A, Y)$ can be any function of $A, Y$.
We also demonstrate that constraining $\delta$ can ensure that shifts are realistic: The unconstrained worst-case shift in~\cref{ex:lab_testing_rates} is implausible, where all healthy patients (and no sick patients) are tested.
\Cref{eq:maximum_loss} becomes
\begin{equation}\label{eq:maximum_loss_param}
    \sup_{\delta \in \Delta} \E_\delta[\ell(f(X), Y)],
\end{equation}
where $\E_{\delta}$ is the expectation in the shifted distribution $\P_\delta$ and $\Delta$ is a bounded set of shifts.

The second challenge is evaluation of the expected loss under shift, as well as finding the worst-case shift. 
Under our definition of shifts, we show that the test distribution can always be seen as a reweighting of the training distribution, allowing for reweighting approaches, such as importance sampling, to estimate the expected loss under shifts. 
While this is practical for some distribution shifts, for others, importance sampling can lead to extreme variance in estimation. 
Further, finding the worst-case shift using a reweighted objective involves maximization over a non-concave objective (see~\cref{fig:lab_testing_shift}), a problem that is generally NP-hard. 
We derive a second-order approximation to the expected loss under shift, and show how it can be estimated without the use of reweighting. When $\Delta$ is a single quadratic constraint, we can approximate the general non-convex optimization problem in~\cref{eq:maximum_loss_param} with a particular non-convex, quadratically constrained quadratic program (QCQP) for which efficient solvers exist \citep[Section~7]{conn2000trust}. We bound the approximation error of this surrogate objective, and show in experiments that it tends to find impactful adversarial shifts.

Our contributions are as follows: 
\begin{enumerate}[leftmargin=*,align=left]
    \item We provide a novel formulation of robustness sets which are defined using parametric shifts.
    This formulation 
    only require that the shifting mechanisms (i.e., conditional distributions) can be modelled as a conditional exponential family (see~\cref{sec:defining_parametric_robustness_sets}).
    \item We derive a second-order approximation to the expected loss 
    and provide a bound on the approximation error. We show that this translates the general non-convex problem into a particular
    non-convex quadratic program, for which efficient solvers exist (see~\cref{sec:evaluation_of_the_worst_case_loss}).
    \item In a computer vision task, we find that this approach finds more impactful shifts than a reweighting approach,
    while taking far less time to compute, and that the resulting estimates of accuracy are substantially more reliable (see~\cref{sec:experiments}).
\end{enumerate}

\subsection{Related Work}%
\label{sec:related_work}

\textbf{Distributionally robust optimization/evaluation}: Distributionally robust optimization (DRO) seeks to learn models that minimize objectives like~\cref{eq:maximum_loss} with respect to the model \citep{Duchi2018-ju, Duchi2020-at, sagawa2019distributionally}.  We focus on proactive worst-case evaluation of a fixed model, not optimization, similar to \citet{Subbaswamy2020-vr, Li2021-qe}, but we also differ in our \textbf{definition of the set of plausible future distributions $\cP$}, often called an \enquote{uncertainty set} in the optimization literature. Prior work often defines these sets using distributional distances (such as $f$-divergences): For instance, Joint DRO \citep{Duchi2018-ju} allows for shifts in the entire joint distribution (i.e., all distributions in an $f$-divergence ball around $\P(X, Y)$), which may be overly conservative. 
Marginal DRO \citep{Duchi2020-at} considers shifts in a marginal distribution (e.g., $\P(X)$), while assuming that the remaining conditionals (e.g., $\P(Y \mid X)$) are fixed. However, this assumption is not applicable in all scenarios: In~\cref{ex:lab_testing_rates}, for instance, this assumption does not hold for a shift in testing policy.
Conditional shifts are considered in recent work that focuses on evaluation \citep{Subbaswamy2020-vr}, using worst-case conditional subpopulations. However, choosing a plausible size of conditional subpopulation is often non-obvious. In~\cref{sec:comparison_to_subpopulation_shift_app} we give a simple lab-testing example where taking worst-case 20\% conditional subpopulations corresponds to an implausible shift: Healthy patients are always tested, and sick patients never tested.

In contrast, our approach uses explicit parametric perturbations to define shifts, as opposed to distributional distances or subpopulations. In addition, our approach allows for shifts in multiple marginal or conditional distributions simultaneously: In~\cref{ex:lab_testing_rates}, for instance, we could model a simultaneous change in both the marginal distribution of age $\P(A)$, as well as the conditional distribution of lab testing $\P(O \mid A, Y)$, leaving other factors unchanged.

\textbf{Causality-motivated methods for learning robust models:} Several approaches proactively specify shifting causal mechanisms/conditional distributions, and then seek to learn predictors that have good performance under arbitrarily large changes in these mechanisms \citep{subbaswamy2019preventing,Veitch2021-cu,Makar2022-eq,Puli2022-dn}. Other approaches use environments \citep{magliacane2018domain,Rojas-Carulla2018-qe,arjovsky2019invariant} or identity indicators \citep{Heinze-Deml2021-mz} to learn models that rely on invariant conditional distributions.

However, when shifts are not arbitrarily strong, causality-motivated predictors can be overly conservative. In~\cref{ex:lab_testing_rates}, a model that ignores all test-related features (and only uses age as a predictor) is a particularly simple example of a causality-motivated predictor, with invariant risk over changes in testing policy. Closer to our setting is a line of work that considers bounded mechanism changes in linear causal models \citep{rothenhausler2021anchor, oberst2021regularizing}, where estimation of the worst-case loss enables learning of worst-case optimal models.  Our work can be seen as extending this idea to more general non-linear causal models, where we focus on evaluation rather than optimization.

\textbf{Evaluating out-of-distribution performance with unlabelled samples}: A recent line of work has focused on predicting model performance in out-of-distribution settings, where unlabelled data is available from the target distribution \citep{Garg2022-xp,Jiang2022-pn,Chen2021-pd}.  In contrast, our method operates using only samples from the original source distribution, and seeks to estimate the worst-case loss over a set of possible target distributions. 

In~\cref{sec:related_work_app} we give a more detailed discussion of these approaches and others.

\section{Defining parametric robustness sets}%
\label{sec:defining_parametric_robustness_sets}

\textbf{Notation}: Let $\bV$ denote all observed variables, where $(X, Y) \subseteq \bV$ for features $X$ and labels $Y$, and use $\P(\bV)$ to denote the probability density/mass function in the training distribution. We also refer to $\P$ as simply \enquote{the training distribution}. $\E[\cdot]$ and $\cov(\cdot,\cdot)$ refer to the mean and covariance in $\P$, and for a shifted distribution $\P_\delta$ (\cref{def:parametric-shift}) we use $\E_\delta[\cdot]$, $\cov_\delta(\cdot,\cdot)$.  For a random variable $Z$, we use $\cZ$ to denote the space of realizations, and $d_{Z}$ for dimension e.g., $Z \in \cZ \subseteq \R^{d_Z}$.  For a set of random variables $\bV = \{V_1, \ldots, V_d\}$, we use $V_i$ to denote an individual element, and use $\PA_{\cG}(V_i)$ to denote the set of parents in a directed acyclic graph (DAG) $\cG$, omitting the subscript when otherwise clear.

We begin with a general definition of a parameterized robustness set of distributions $\cP$.
\begin{definition}\label{def:parametric-shift}
  A \emph{parameterized robustness set around $\P(\bV)$} is a family of distributions $\cP$ with elements $\P_{\delta}(\bV)$ indexed by $\delta \in \Delta \subseteq \R^{d_{\delta}}$, with $0 \in \Delta$, where $\P_0(\bV) = \P(\bV)$. 
\end{definition}
We give examples shortly that satisfy this general definition. 
To construct such a robustness set, we consider distributions $\P_{\delta}$ that differ from $\P$ in one or more conditional distributions (\cref{assmp:cef_factorization}). 
We require that the relevant conditional distributions can be described by an exponential family.
\begin{definition}[Conditional exponential family (CEF) distribution]\label{def:conditional_exponential_family}
    $\P(W|Z)$ is a conditional exponential family distribution if there exists a function $\eta(Z): \R^{d_Z} \to \R^{d_T}$ such that the conditional probability density (for continuous $W$) or probability mass function (for discrete $W$) is given by
    \begin{equation}\label{eq:def-exponential-family}
        \P(W | Z) = g(W)\exp\left( \eta{(Z)}^\top T(W) - h(\eta(Z)) \right),
    \end{equation}
    where $T(W)$ is a vector of sufficient statistics, $T(W) \in \R^{d_T}$, $g(\cdot)$ specifies the density of a base measure and $h(\eta(Z))$ is the log-partition function.
\end{definition}
\Cref{def:conditional_exponential_family} does not restrict $\P(W |  Z)$ for binary/categorical $W$, and captures a wide range of distributions, including the conditional Gaussian (see~\cref{appsub:examples_of_conditional_exponential_family_models} for other examples). 
\Cref{def:conditional_exponential_family} extends to marginal distributions where $Z = \varnothing$ and $\eta(Z)$ is a constant function.
\begin{example}[continues=ex:lab_testing_rates]
  Suppose the probability of ordering a test $(O)$ depends on age $(A)$ and disease $(Y)$, such that $\P(O = 1|A, Y) = \sigma(\eta(A, Y))$, where $\sigma$ is the sigmoid, and $\eta$ is an arbitrary function.  Here,~\cref{def:conditional_exponential_family} is satisfied with $W = O$, $Z = (A, Y)$, and sufficient statistic $T(O) = O$.  
\end{example}
We now state our main assumption, 
where we distinguish between the terms in the joint distribution of $\P$ that shift, which we will need to model, and those that remain fixed, which we do not. 
\begin{assumption}[Factorization into CEF distributions]\label{assmp:cef_factorization}
  Let $\bW = \{W_1, \ldots, W_m\} \subseteq \bV$ be a \enquote{intervention set} of variables and let
  \begin{equation}\label{eq:factorization-assumption-1}
    \P(\bV) = \underbrace{\prod_{W_i \in \bW} \P(W_i|Z_i)}_{\text{Conditionals that shift}} \underbrace{\prod_{V_j \in \bV\setminus \bW} \P(V_j|U_j)}_{\text{Conditionals we do not model}}
  \end{equation}
  be a factorization, where $Z_i, U_j, V_j \subseteq \bV$ are possibly overlapping (or empty) sets of variables, where $\P(V_j \mid \varnothing) \coloneqq \P(V_j)$.
  For each $W_i$ we assume $Z_i$ is known and $\P(W_i |  Z_i)$ satisfies~\cref{def:conditional_exponential_family}.
\end{assumption}
If $\P(\bV)$ factorizes according to a DAG $\cG$, the factorization in~\cref{assmp:cef_factorization} is always satisfied by $Z_i = \PA_{\cG}(W_i)$. 
While we assume data is generated according to \cref{eq:factorization-assumption-1}, we do not require knowledge of the full distribution, but only the conditionals that shift.
In~\cref{appsub:adding_causal_edges_to_the_graph} we show that we can also consider shifts that extend $Z_i$
to include additional variables, subject to an acyclicity constraint. 
We now define parametric perturbations and give the general form of the robustness sets that we consider in this work, involving simultaneous perturbations to multiple $W_i$.
\begin{definition}[Parameterized shift functions and $\delta$-perturbations]\label{def:exponential_family_shift}\label{def:delta-perturbation}
    Let $s(Z; \delta): \R^{d_Z} \to \R^{d_T}$ be a \textit{parameterized shift function} with parameters $\delta \in \Delta \subseteq \R^{d_{\delta}}$ which is twice-differentiable with respect to $\delta$ and which satisfies $s(Z; 0) = 0$ for all $Z$.  For $\P(W|Z)$ satisfying~\cref{eq:def-exponential-family}, we refer to
    \begin{equation*}
      \P_{\delta}(W | Z) = g(W)\exp\left( \eta_\delta(Z)^\top T(W) - h(\eta_\delta(Z)) \right)
    \end{equation*}
    as a $\delta$-perturbation of $\P(W |  Z)$ with shift function $s(Z;\delta)$, 
    where $\eta_\delta(Z) := \eta(Z) + s(Z; \delta)$.  Note that this differs from~\cref{eq:def-exponential-family} in that $\eta(Z)$ is replaced by $\eta_{\delta}(Z)$.
\end{definition}
\begin{example}[continues=ex:lab_testing_rates]
    A model developer may be concerned about a uniform change in testing rates across all types of patients. This can be modelled by choosing $s(Z; \delta) = \delta$, for $\delta \in \R$, an additive intervention on the log-odds scale. A separate change in testing rates for sick and healthy patients could instead be modeled using $s(Z; \delta) = \delta_0 (1 - Y) + \delta_1 Y$, using $\delta \in \R^2$. This reasoning extends readily to more complex shifts (e.g., allowing for age-specific changes in testing rates, with a non-linear dependence on age), as long as $s(Z;\delta)$ remains a parametric function.
\end{example}
While the shift function $s(Z; \delta)$ is parametric, $\eta(Z)$ is unconstrained in~\cref{def:conditional_exponential_family,def:exponential_family_shift}. 
Note that this formulation 
includes multiplicative shifts $\eta_\delta(Z)=(1+\delta)\eta(Z)$ by letting $s(Z; \delta) = \delta \cdot \eta(Z)$.

\begin{definition}[CEF parameterized robustness set]\label{def:multiple_shift}
  For a distribution $\P$ and intervention set $\bW = \{W_1, \ldots, W_m\} \subseteq \bV$ satisfying~\cref{assmp:cef_factorization}, let each $\P_{\delta_i}(W_i |  Z_i)$ be a $\delta_i$-perturbation (Definition~\ref{def:delta-perturbation}) of $\P(W_i|Z_i)$. Then 
  \[
    \P_\delta(\bV) = \left(\prod_{W_i \in \bW} \P_{\delta_i}(W_i|Z_i)\right) \left( \prod_{V_j \in \bV\setminus \bW} \P(V_j|U_j) \right)
  \]
  is called a $\delta$-perturbation of $\P(\bV)$, and the robustness set $\cP$ consists of all $\P_{\delta}$ for $\delta \in \Delta_1 \times \cdots \Delta_m$.
\end{definition}

To estimate the expected loss under $\P_{\delta}$, we will typically\footnote{As a special case, in~\cref{sec:example-estimate-eta-not-needed}, we show the second-order approximation (\cref{thm:sg-cond-cov-several-shift}) can be estimated in the case of variance-scaled mean-shifts in a conditional Gaussian without estimation of all of $\eta(Z)$.}
need to estimate $\eta(Z_i)$ for each $W_i\in\bW$.  However, we make no distributional assumptions on the remaining variables $\bV\setminus \bW$. This is useful in applications such as computer vision, where we do not need to restrict the generative model of images given attributes (e.g., background, camera type, etc), but can still model the expected loss under changes in the joint distribution of those attributes.

\begin{remark}[Causal Interpretation of Shifts]
  If available, causal knowledge helps identify which factors in the joint distribution are subject to shifts (e.g., $\P(O \mid Y, A)$ in~\cref{ex:lab_testing_rates}), and which remain stable. 
  It is worth noting, however, that our methodology can be used to model any change in distribution that satisfies~\cref{assmp:cef_factorization}, including choices of \enquote{non-causal} factorizations and shifting factors. 
  For example, in the context of~\cref{ex:lab_testing_rates}, we could choose the factorization $\P(Y) \P(O \mid Y) \P(L, A \mid O, Y)$, and model a change only in the conditional $\P(O \mid Y)$ while keeping other factors unchanged. This shift is not interpretable as a change in causal mechanisms: The shifted distribution would imply a change in the marginal distribution of age, which should be unaffected by a real-world change in laboratory testing.  Nonetheless, we can still estimate a worst-case loss over such non-causal shifts in distribution.
  In short, our machinery can model shifts in non-causal conditionals (for example because the causal structure is unknown), though the resulting shifted distribution is not interpretable as a plausible shift in the ground-truth data generating mechanism.
\end{remark}

\section{Evaluation of the worst-case loss}%
\label{sec:evaluation_of_the_worst_case_loss}
For a fixed predictor and loss function, we can use data from $\P(\bV)$ to estimate the expected loss $\E_{\delta}[\ell] \coloneqq \E_{\delta}[\ell(f(X), Y)]$ for a fixed $\delta$, and estimate the worst-case loss over all $\delta$ of bounded magnitude. In~\cref{sec:modelling_shifted_loss_using_ipw}, we show that $\P_{\delta}$ shares support with $\P$, suggesting the use of reweighting estimators. However, these estimators can exhibit high variance for shifts that produce large density ratios (see \cref{ex:lab_test_variance_plots} for an example), and 
maximizing a reweighted objective over $\delta$ is generally a non-convex problem.  In~\cref{sec:approximating_the_shift_for_exp_fam} we derive an approximation to the expected loss under $\P_{\delta}$, yielding a tractable surrogate optimization problem under quadratic constraints such as $\norm{\delta}_2 \leq \lambda$.

\begin{remark}
The methods here can be used with an arbitrary predictor $f$ and loss function $\ell:=\ell(f(X), Y)$.  We do not even require access to the original predictor $f$.  Both methods here simply treat $\ell$ as a random variable in $\P$, for which we have samples from the training distribution.
\end{remark}

\subsection{Modelling shifted losses using reweighting}%
\label{sec:modelling_shifted_loss_using_ipw}
The shifts defined in \cref{sec:defining_parametric_robustness_sets} share common support, with the following density ratio.
\begin{restatable}{proposition}{CommonSupport}\label{prop:ipw-weights-exp-family}
  For any $\P_\delta(\bV), \P(\bV)$ that satisfy~\cref{def:multiple_shift}, $\supp(\P) = \supp(\P_\delta)$ and the density ratio $w_{\delta} \coloneqq \P_{\delta} / \P$ is given by 
    \begin{equation*}
        w_\delta(\bV) = \exp\bigg(\sum_{i=1}^m s_i(Z_i;\delta_i)^\top T_i(W_i)\bigg) \exp\left(\sum_{i=1}^m h(\eta_i(Z_i))-h(\eta(Z_i) + s_i(Z_i; \delta_i))\right).
    \end{equation*}
\end{restatable}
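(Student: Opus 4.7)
The plan is a direct computation: substitute the factorizations of $\P_\delta$ and $\P$ from~\cref{def:multiple_shift,assmp:cef_factorization}, observe that the non-shifting factors $\prod_{V_j \in \bV\setminus \bW}\P(V_j|U_j)$ appear identically in numerator and denominator and therefore cancel, and reduce the problem to computing the ratio of the $m$ shifted CEF conditionals term-by-term. The structure of~\cref{def:exponential_family_shift} is set up precisely so that this ratio telescopes cleanly, so I do not expect any real obstacle beyond careful bookkeeping; the support equality is an easy corollary of the fact that the two conditionals share a base measure $g(W)$.

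For the density ratio, I would first fix a single index $i$ and write
\begin{equation*}
\frac{\P_{\delta_i}(W_i|Z_i)}{\P(W_i|Z_i)} = \frac{g(W_i)\exp\bigl(\eta_{\delta_i}(Z_i)^\top T_i(W_i) - h(\eta_{\delta_i}(Z_i))\bigr)}{g(W_i)\exp\bigl(\eta(Z_i)^\top T_i(W_i) - h(\eta(Z_i))\bigr)}.
\end{equation*}
The base measure $g(W_i)$ cancels, and substituting $\eta_{\delta_i}(Z_i) = \eta(Z_i) + s_i(Z_i;\delta_i)$ reduces the exponent to $s_i(Z_i;\delta_i)^\top T_i(W_i) + h(\eta(Z_i)) - h(\eta(Z_i) + s_i(Z_i;\delta_i))$. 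Taking the product over $i=1,\ldots,m$ and splitting the sufficient-statistic terms from the log-partition terms yields exactly the claimed expression for $w_\delta(\bV)$.

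For the support claim, I would argue both inclusions from the factored form. If $\P(\bv) > 0$, every factor in~\cref{eq:factorization-assumption-1} is positive, so in particular $g(w_i) > 0$ for each shifted $W_i$; since $\exp(\cdot) > 0$ always, the perturbed conditional $\P_{\delta_i}(w_i|z_i)$ is also positive, and the unchanged factors $\P(V_j|U_j)$ contribute identically, giving $\P_\delta(\bv) > 0$. The reverse direction is symmetric (replace $\eta_{\delta_i}$ by $\eta_{\delta_i} - s_i(\cdot;\delta_i)$), so $\supp(\P) = \supp(\P_\delta)$ and the ratio $w_\delta = \P_\delta/\P$ is well-defined on this common support. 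The whole argument is essentially algebra; the only conceptual point worth emphasising is that the shift preserves the base measure $g$, which is what makes the supports coincide independently of $\delta$.
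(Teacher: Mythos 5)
Your proposal is correct and follows essentially the same route as the paper's proof: cancel the non-shifting factors $\prod_{V_j\in\bV\setminus\bW}\P(V_j|U_j)$, take the ratio of each shifted CEF conditional to its unshifted counterpart so that the base measure $g(W_i)$ cancels and the exponent telescopes to $s_i(Z_i;\delta_i)^\top T_i(W_i) + h(\eta_i(Z_i)) - h(\eta_i(Z_i)+s_i(Z_i;\delta_i))$, and deduce the common support from strict positivity of the exponential factor together with the shared base measure and shared non-shifting factors. Your two-sided inclusion argument for the support is just a slightly more explicit rendering of the same observation the paper makes.
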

The proof can be found in~\cref{app:proofs}, along with all proofs for all other claims.
\begin{example}[continues=ex:lab_testing_rates]\label{ex:lab_test_rates_ipw_weights}
  Suppose we perturb the probability of ordering a test $O$ given age $A$ and disease $Y$ with shift function $s(Y; \delta) = \delta_0 (1 - Y) + \delta_1 Y$, independently changing the conditional probability of testing for healthy and sick patients. Here, the density ratio is given by
    \begin{equation}\label{eq:ipw-weights-exponential-family}
        w_\delta(O, A, Y) = \exp(s(Y; \delta) \cdot O) \frac{1 + \exp(\eta(A, Y))}{1 + \exp(\eta(A, Y) + s(Y; \delta))}.
    \end{equation}
\end{example}

To model the loss $\E_\delta[\ell]$ using data from $\P$, we can consider an importance sampling (IS) estimator \citep{horvitz1952generalization,SHIMODAIRA2000227}, observing that $\E_\delta[\ell] = \E[w_\delta(\bV) \cdot \ell]$.  This requires estimation of the density ratio $w_{\delta}(\bV)$, and (given a sample $\{\bV^j\}_{j=1}^n$ from $\P$) yields the estimator
\begin{equation}\label{eq:ipw-estimate-of-mean}
  \E_\delta[\ell] \approx \ipw := \frac{1}{n} \sum_{j=1}^n \hat{w}_\delta(\bV^j) \ell(\bV^j).
\end{equation}
\Cref{eq:ipw-estimate-of-mean} can have high variance when density ratios are large, and maximizing this equation with respect to $\delta$ is a general non-convex optimization problem, which is generally NP-hard to solve.

\subsection{Approximating the shifted loss for exponential family models}%
\label{sec:approximating_the_shift_for_exp_fam}
We now propose an alternative approach for approximating the loss $\E_{\delta}[\ell]$. Recalling that $\P_{\delta = 0} = \P$, we use a second-order Taylor expansion around the training distribution
\begin{equation}\label{eq:taylor-approx}
    \E_{\delta}[\ell] \approx \E[\ell] + \delta^\top \sg^1 + \tfrac{1}{2}\delta^\top \sg^2\delta,
\end{equation}
where $\E[\ell]$ denotes the loss in the training distribution and $\sg^1, \sg^2$ are defined as follows.
\begin{definition}[Shift gradient and Hessian]\label{def:shift-gradient}
  For a parametric shift satisfying \Cref{def:parametric-shift} where $\delta\mapsto\E_{\delta}[\ell]$ is twice-differentiable, we denote the \emph{shift gradient} $\sg^{1}$ and \emph{shift Hessian} $\sg^2$ as 
  \begin{align*}
    \sg^1 \coloneqq \nabla_\delta \E_\delta[\ell]\big\vert_{\delta = 0} & &\text{and} & & \sg^2 \coloneqq \nabla_\delta^2 \E_\delta[\ell]\big\vert_{\delta=0}.
  \end{align*}
\end{definition}
\Cref{eq:taylor-approx} is a local approximation of the loss, whose approximation error we bound in~\cref{thm:taylor-approximation-bound}, with smaller approximation error for smaller shifts.\footnote{In~\Cref{sec:linear-model-exact}, we give an example of a linear-Gaussian generative model where this second-order expansion is exact, corresponding to the setting of Anchor Regression~\citep{rothenhausler2021anchor}.} For $\P_{\delta}$ satisfying \Cref{def:multiple_shift}, $\sg^1$ and $\sg^2$ can be computed as expectations in the training distribution, without estimation of density ratios. Recall that the conditional covariance is given by $\cov(A, B|C) := \E[(A - \E[A|C])(B - \E[B|C])|C]$.
\begin{restatable}[Shift gradients and Hessians as covariances]{theorem}{SgConvCovSeveralShift}\label{thm:sg-cond-cov-several-shift}
  Assume that $\P_{\delta}, \P$ satisfy~\cref{def:multiple_shift}, with intervened variables $\bW = \{W_1, \ldots, W_m\}$ and shift functions $s_i(Z_i; \delta_i)$, where $\delta = (\delta_1, \ldots, \delta_m)$.  Then the shift gradient is given by $\sg^1 = (\sg_{1}^1, \ldots, \sg_{m}^1) \in \R^{d_{\delta}}$ where
    \begin{equation*}
        \sg_{i}^1 = \E\left[D_{i,1}^\top\cov\bigg(\ell,\, T_i(W_i)\bigg| Z_i\bigg)\right],
    \end{equation*}
  and the shift Hessian is a matrix of size $(d_{\delta} \times d_{\delta})$, where the $(i,j)$th block of size $d_{\delta_i} \times d_{\delta_j}$ equals
    \begin{equation*}
        \{\sg^2\}_{i,j} = 
        \begin{cases}
            \E\left[D_{i,1}^\top\cov\left(\ell,\, \epsilon_{T_i|Z_i}\epsilon_{T_i|Z_i}^\top|Z_i\right)D_{i,1}\right]- \E\left[\ell \cdot D_{i,2}^\top \epsilon_{T |  Z} \right] & i = j \\
            \cov(\ell,\,\, D_{i, 1}^\top\epsilon_{T_i|Z_i}\epsilon_{T_j|Z_j}^\top D_{j, 1}) & i \neq j,
        \end{cases}
    \end{equation*}
    where $D_{i,k} := \nabla^k_{\delta_i} s_i(Z_i; \delta_i)|_{\delta=0}$, is the gradient of the shift function for $k = 1$, and the Hessian for $k = 2$. Here, $T_i(W_i)$ is the sufficient statistic of $\P(W_i|Z_i)$ and $\epsilon_{T_i|Z_i} := T_i(W_i) - \E[T(W_i)|Z_i]$.
\end{restatable}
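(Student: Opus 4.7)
The plan is to start from the density-ratio representation $\E_\delta[\ell] = \E[w_\delta(\bV)\ell]$ afforded by \Cref{prop:ipw-weights-exp-family}, and then differentiate under the expectation at $\delta=0$ to read off
\[
\sg^1 = \E\!\left[\ell\, \nabla_\delta w_\delta\big|_{\delta=0}\right], \qquad \sg^2 = \E\!\left[\ell\, \nabla^2_\delta w_\delta\big|_{\delta=0}\right].
\]
Because $w_0 \equiv 1$, the elementary identities $\nabla w|_0 = \nabla \log w|_0$ and $\nabla^2 w|_0 = \nabla\log w|_0\,\nabla\log w|_0^\top + \nabla^2 \log w|_0$ reduce the task to computing first and second derivatives of $\log w_\delta$. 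The formula for $w_\delta$ in \Cref{prop:ipw-weights-exp-family} shows that $\log w_\delta$ is an additive sum over $i$ of terms depending only on $\delta_i$, so in particular $\nabla^2_{\delta_i \delta_j} \log w_\delta \equiv 0$ for $i \neq j$.

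\textbf{Derivatives of $\log w$ via exponential family identities.} The central tool is that for a CEF with log-partition $h$ at natural parameter $\eta$, one has $\nabla_\eta h(\eta) = \E[T(W)|Z]$ and $\nabla^2_\eta h(\eta) = \cov(T(W),T(W)^\top|Z)$ under the associated exponential family. Applying the chain rule through $\eta_\delta(Z_i) := \eta(Z_i) + s_i(Z_i;\delta_i)$ and evaluating at $\delta=0$ yields
\begin{align*}
\nabla_{\delta_i} \log w_\delta\big|_{\delta=0} &= D_{i,1}^\top\, \epsilon_{T_i|Z_i} \;=:\; \phi_i, \\
\nabla^2_{\delta_i} \log w_\delta\big|_{\delta=0} &= D_{i,2}^\top\, \epsilon_{T_i|Z_i} \;-\; D_{i,1}^\top\, \cov(T_i, T_i^\top | Z_i)\, D_{i,1},
\end{align*}
where the first Hessian piece comes from differentiating $s_i^\top T_i$ twice, and the second comes from $\nabla^2_\eta h$ at the training $\eta$.

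\textbf{From derivatives to conditional covariances.} For the gradient, $\sg^1_i = \E[\ell\, \phi_i]$; conditioning on $Z_i$ (noting $D_{i,1}$ is $Z_i$-measurable) and using $\E[\ell(T_i - \E[T_i|Z_i])|Z_i] = \cov(\ell, T_i|Z_i)$ gives the stated formula. For the off-diagonal Hessian blocks $(i\neq j)$, $\nabla^2 w|_0$ reduces to the outer product $\phi_i \phi_j^\top$, so $\{\sg^2\}_{i,j} = \E[\ell\,\phi_i\phi_j^\top]$. To match the stated covariance form, I would show $\E[\phi_i \phi_j^\top] = 0$: the factorization in \Cref{assmp:cef_factorization} induces a topological ordering of $\bW$, and for the variable appearing later (say $W_j$) the Markov property gives $\E[\epsilon_{T_j|Z_j} \mid \text{variables preceding } W_j] = 0$; since $\phi_i$ is measurable with respect to those preceding variables, iterated expectations close the argument. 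For the diagonal block $(i,i)$, combining $\phi_i\phi_i^\top$ with the $D_{i,1}^\top \cov(T_i, T_i^\top|Z_i) D_{i,1}$ piece by iterated expectations produces $\E[D_{i,1}^\top \cov(\ell,\epsilon_{T_i|Z_i}\epsilon_{T_i|Z_i}^\top|Z_i) D_{i,1}]$, while the remaining $D_{i,2}^\top \epsilon_{T_i|Z_i}$ piece contributes $\E[\ell\, D_{i,2}^\top \epsilon_{T_i|Z_i}]$.

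\textbf{Main obstacle.} The principal difficulty is careful index bookkeeping: $D_{i,2}$ is a $d_{T_i}\times d_{\delta_i}\times d_{\delta_i}$ tensor, and the chain rule applied to $\nabla^2_{\delta_i} h(\eta_\delta(Z_i))$ produces two distinct contractions that must be correctly separated into the $D_{i,2}^\top \E[T_i|Z_i]$ piece (which combines with $D_{i,2}^\top T_i$ to yield the $D_{i,2}^\top \epsilon$ term) and the $D_{i,1}^\top \nabla^2_\eta h\, D_{i,1}$ piece. Conceptually, the only nontrivial ingredient beyond straightforward calculus is the Markov argument giving $\E[\phi_i \phi_j^\top] = 0$ for $i\neq j$, which relies on the (implicit) DAG-consistent structure of the factorization in \Cref{assmp:cef_factorization}.
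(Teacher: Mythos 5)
Your route is essentially the paper's own. The paper differentiates the density $p_\delta$ directly and uses $\nabla_{\delta_i} p_\delta = p_\delta\,\nabla_{\delta_i}\log p_{\delta_i}(w_i\mid z_i)$, which is the same computation as your differentiation of $w_\delta$ under $\E[\,\cdot\,]$ using $w_0\equiv 1$; both arguments then rest on the exponential-family identities $\nabla_\eta h=\E[T\mid Z]$ and $\nabla^2_\eta h=\var(T\mid Z)$, the tower property to produce conditional covariances, and, for $i\neq j$, the same Markov-type argument that $\E[D_{i,1}^\top\epsilon_{T_i|Z_i}\epsilon_{T_j|Z_j}^\top D_{j,1}]=0$ (the paper conditions on $Z_i$ after noting that $(W_j,Z_j)$ are non-descendants of $W_i$; your conditioning on all predecessors of the later variable in a topological order is an equivalent use of the DAG-consistency that the paper also invokes implicitly). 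Both treatments also silently interchange differentiation and expectation, so that is not a gap relative to the paper's level of rigor.

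The one substantive point to flag is the sign of the $D_{i,2}$ term in the diagonal block. Your chain-rule computation gives $\nabla^2_{\delta_i}\log w_\delta\big|_{0}= D_{i,2}^\top\epsilon_{T_i|Z_i}-D_{i,1}^\top\var(T_i\mid Z_i)D_{i,1}$, hence $\{\sg^2\}_{i,i}=\E[D_{i,1}^\top\cov(\ell,\epsilon_{T_i|Z_i}\epsilon_{T_i|Z_i}^\top\mid Z_i)D_{i,1}]+\E[\ell\, D_{i,2}^\top\epsilon_{T_i|Z_i}]$, whereas the theorem statement (and the paper's proof, which writes $-\{\nabla^2_{\delta_i}s_i\}^\top(T_i-h_i^{(1)})$ when expanding $\nabla^2_{\delta_i}\log p_{\delta_i}$) carries a minus sign on that term. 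Your sign appears to be the correct one: for instance with $W\sim\cN(\theta+\delta^2,1)$, i.e.\ $s(\delta)=\delta^2$, $D_1=0$, $D_2=2$, and $\ell=W$, one has $\E_\delta[\ell]=\theta+\delta^2$, so $\sg^2=2$, matching $+\E[\ell D_2\epsilon_{T}]=2$ rather than $-2$. So as written your proposal establishes the formula with the opposite sign on the $D_{i,2}$ contribution and does not literally reproduce the displayed statement; you should say this explicitly rather than silently dropping the sign into the claimed conclusion. The discrepancy is immaterial whenever $s_i$ is linear in $\delta_i$ (as in \cref{thm:sg-cond-cov} and in the paper's experiments), since then $D_{i,2}=0$.
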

\Cref{thm:sg-cond-cov-several-shift} handles arbitrary parametric shift functions in multiple variables, but for simple shift functions in a single variable, the notation simplifies substantially, as we show in~\Cref{thm:sg-cond-cov}.
\begin{restatable}[Simple shift in a single variable]{corollary}{SgCondCov}\label{thm:sg-cond-cov}
    Assume the setup of~\Cref{thm:sg-cond-cov-several-shift}, restricted to a shift in a single variable $W$, and that $s(Z;\delta)=\delta$. Then $D_1 = 1$, $D_2 = 0$, and
    \begin{equation*}
        \sg^1 = \E\left[\cov\bigg(\ell, T(W) \bigg| Z\bigg)\right] \qquad\text{and}\qquad
        \sg^2 = \E\left[\cov \bigg(\ell, \epsilon_{T|Z}\epsilon_{T|Z}^\top \bigg| Z\bigg)\right],
    \end{equation*}
    where $T(W)$ is the sufficient statistic of $W$ and $\epsilon_{T|Z} := T(W) - \E[T(W)|Z]$.
\end{restatable}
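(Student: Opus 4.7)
The plan is to obtain Corollary~\ref{thm:sg-cond-cov} as an immediate specialization of Theorem~\ref{thm:sg-cond-cov-several-shift}. Since we are allowed to take that theorem as given, the work is essentially bookkeeping: identifying what the single-variable, linear-shift setting does to the quantities $D_1$, $D_2$, and to the block structure of the Hessian.

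First I would compute the two derivative quantities that feed into the theorem. With $s(Z;\delta) = \delta$, the shift is linear in $\delta$ and in particular does not depend on $Z$, so $D_1 = \nabla_\delta s(Z;\delta)\big|_{\delta=0} = 1$ and $D_2 = \nabla^2_\delta s(Z;\delta)\big|_{\delta=0} = 0$ pointwise in $Z$. This is exactly the claim $D_1 = 1$, $D_2 = 0$ stated in the corollary.

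Next, I would substitute these values into the formulas from Theorem~\ref{thm:sg-cond-cov-several-shift}, using $m=1$ so that $\delta = \delta_1$, $W = W_1$, $Z = Z_1$, $T = T_1$, and only a single (diagonal) block of the Hessian survives — the off-diagonal case $i \neq j$ simply does not arise. For the gradient, substituting $D_1 = 1$ (a deterministic scalar that pulls out of the outer expectation) into $\sg^1_1 = \E[D_1^\top\cov(\ell, T(W)\mid Z)]$ gives $\sg^1 = \E[\cov(\ell, T(W)\mid Z)]$. For the Hessian, the first term of the $i=j$ block reduces analogously to $\E[\cov(\ell,\, \epsilon_{T|Z}\epsilon_{T|Z}^\top \mid Z)]$, and the second term $\E[\ell \cdot D_2^\top \epsilon_{T|Z}]$ vanishes identically because $D_2 = 0$.

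There is no real obstacle in this argument; the only thing to be slightly careful about is that $D_1$ and $D_2$ are constants in $Z$ (hence pull out of the conditional covariance and the outer expectation without changing anything), which is true precisely because $s(Z;\delta) = \delta$ has no $Z$-dependence. If one wished to avoid citing the general theorem, an alternative would be to redo the Taylor expansion of $\delta \mapsto \E_\delta[\ell]$ from scratch in this simplified setting, using $\P_\delta(W\mid Z) = g(W)\exp(\eta_\delta(Z)^\top T(W) - h(\eta_\delta(Z)))$ with $\eta_\delta = \eta + \delta$; differentiating $h$ yields the conditional mean $\E[T(W)\mid Z]$ and its conditional covariance, and the standard exponential-family score calculation recovers the two covariance expressions directly. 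But given Theorem~\ref{thm:sg-cond-cov-several-shift}, the specialization route above is strictly shorter.
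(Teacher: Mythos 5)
Your specialization of Theorem~\ref{thm:sg-cond-cov-several-shift} via $D_1 = \nabla_\delta \delta = 1$ and $D_2 = \nabla^2_\delta \delta = 0$, followed by substitution into the gradient and diagonal Hessian block (the off-diagonal case being vacuous for $m=1$), is exactly the paper's proof. Correct, and no gaps.
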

\begin{example}[continues=ex:lab_testing_rates] 
  Suppose that age ($A$) follows a normal distribution with mean $\mu$ and variance $\sigma^2$, and consider a shift in the mean (without changing lab testing). We can parameterize $\P(A)$ as an exponential family with parameter $\eta = \mu / \sigma$ and sufficient statistic $T(A) = A / \sigma$.  Here, $s(\delta) = \delta$ implies a shift in the mean of $\delta$ standard deviations $\eta_{\delta} = \eta + s(\delta) = (\mu + \sigma \delta) / \sigma$, and we can write that 
        $\sg^1 = \cov\left(\ell, A\right) / \sigma$ and $\sg^2 = \cov\left(\ell, (A- \E[A])^2\right) / \sigma^2$.
\end{example}
To estimate the shift gradient and Hessian from a sample from $\P$, for each $i=1,\dots, m$ we fit models $\hat{\mu}_{\ell}(Z_i) \approx \E[\ell |  Z_i]$ and $\hat{\mu}_{W_i}(Z_i) \approx \E[T_i(W_i) |  Z_i]$ and compute residuals on these predictions, which permits estimation of the gradient/Hessian as a sample average of residuals. A detailed treatment is given in~\cref{appsec:algorithm_calculation_theorem1}.
Using estimates of the gradient and Hessian, we estimate the expected loss as
\begin{equation}\label{eq:taylor-approx-finie-sample}
  \E_\delta[\ell] \approx \taylor := \hat{\E}[\ell] + \delta^\top \hat{\sg}^1 + \frac{1}{2}\delta^\top \hat{\sg}^2 \delta.
\end{equation}
Here, there are two sources of error: Finite-sample error, due to the estimates of $\sg^1,\sg^2$, as well as approximation error. The latter is bounded by the norm of $\delta$ and a term that depends on the covariance between the loss and the deviations of the sufficient statistic from its shifted mean.
\begin{restatable}{theorem}{TaylorApproxBound}\label{thm:taylor-approximation-bound}
    Assume that $\P_\delta, \P$ satisfy the conditions of~\Cref{thm:sg-cond-cov-several-shift}, with a shift in a single variable $W$, where $s(Z; \delta) = \delta$. Let $\taylorPop$ be the population Taylor estimate (\cref{eq:taylor-approx}) and let $\sigma(M)$ denote the largest absolute value of the eigenvalues of a matrix $M$. Then 
    \begin{align*}
        \bigg|\E_\delta[\ell] - \taylorPop\bigg| \leq \tfrac{1}{2} \sup_{t\in[0,1]} \sigma\bigg(\cov_{t\cdot\delta}(\ell, \epsilon_{t\cdot\delta, T|Z}\epsilon_{t\cdot\delta, T|Z}^\top) - \cov(\ell, \epsilon_{0, T|Z}\epsilon_{0, T|Z}^\top)\bigg) \cdot \|\delta\|^2,
    \end{align*}
    where $T(W)$ is the sufficient statistic of $W|Z$ and $\epsilon_{t\cdot \delta, T|Z} = T(W|Z) - \E_{t\cdot \delta}[T(W|Z)]$.
\end{restatable}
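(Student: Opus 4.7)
The plan is to view $G(\delta) := \E_\delta[\ell]$ as a twice-differentiable scalar function of $\delta$ and apply the multivariate Taylor formula with integral remainder,
\begin{equation*}
  G(\delta) \;=\; G(0) + \nabla G(0)^\top \delta + \int_0^1 (1-t)\,\delta^\top \nabla^2 G(t\delta)\,\delta\,dt.
\end{equation*}
Since $\int_0^1 (1-t)\,dt = \tfrac{1}{2}$, subtracting $\tfrac{1}{2}\delta^\top \nabla^2 G(0)\delta$ from both sides gives
\begin{equation*}
  G(\delta) - \taylorPop \;=\; \int_0^1 (1-t)\,\delta^\top\bigl[\nabla^2 G(t\delta) - \nabla^2 G(0)\bigr]\delta\,dt.
\end{equation*}
For any symmetric matrix $M$, $|\delta^\top M\delta| \le \sigma(M)\|\delta\|^2$, so the remainder integral is bounded by $\tfrac{1}{2}\sup_{t\in[0,1]}\sigma\bigl(\nabla^2 G(t\delta) - \nabla^2 G(0)\bigr)\,\|\delta\|^2$. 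It remains to match $\nabla^2 G(\tau)$ to the covariance expression appearing in the theorem.

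\textbf{Identifying $\nabla^2 G$ via a semigroup argument.} Because $s(Z;\delta)=\delta$ is an additive translation of $\eta(Z)$, the shifted distribution $\P_\nu$ can itself be viewed as a $(\nu-\tau)$-perturbation of $\P_\tau$ with the same sufficient statistic $T$ and additive shift function: $\eta(Z)+\nu = \eta_\tau(Z) + (\nu-\tau)$. Hence computing $\nabla^2 G(\tau) = \nabla^2_\nu \E_\nu[\ell]\big|_{\nu=\tau}$ is equivalent to computing the shift Hessian starting from base distribution $\P_\tau$, and~\cref{thm:sg-cond-cov} applied at $\P_\tau$ yields
\begin{equation*}
  \nabla^2 G(\tau) \;=\; \E_\tau\!\bigl[\cov_\tau\bigl(\ell,\,\epsilon_{\tau,T|Z}\epsilon_{\tau,T|Z}^\top \,\big|\, Z\bigr)\bigr],
\end{equation*}
which is the object denoted $\cov_\tau(\ell, \epsilon_{\tau,T|Z}\epsilon_{\tau,T|Z}^\top)$ in the statement. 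Plugging $\tau=t\delta$ and $\tau=0$ into the spectral-norm bound above then produces the stated inequality.

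\textbf{Main obstacle.} The non-routine step is establishing the displayed identification of $\nabla^2 G(\tau)$ cleanly: we must differentiate $\E_\nu[\ell]$ twice under the integral sign at the arbitrary base $\P_\tau$ and recognize the result as a conditional covariance. The interchange is justified by dominated convergence on the compact segment $\{t\delta:t\in[0,1]\}$, using smoothness of the log-partition $h$ and integrability of $\ell$ against the density ratio $w_{t\delta}$ guaranteed by~\cref{prop:ipw-weights-exp-family}; once these are in hand the semigroup observation is immediate from $\eta_\nu(Z)=\eta_\tau(Z)+(\nu-\tau)$. The rest of the proof is standard: the Taylor remainder identity, the elementary integral $\int_0^1(1-t)\,dt = \tfrac{1}{2}$, and the spectral bound $|\delta^\top M\delta|\le\sigma(M)\|\delta\|^2$.
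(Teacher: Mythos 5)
Your proof is correct and follows essentially the same route as the paper: Taylor-expand $t\mapsto\E_{t\cdot\delta}[\ell]$ along the segment, identify the second derivative at a shifted base point as the shift Hessian of \cref{thm:sg-cond-cov-several-shift}/\cref{thm:sg-cond-cov} applied with $\P_{t\cdot\delta}$ as the base distribution, subtract the Hessian at $0$, and bound the resulting quadratic form by the largest absolute eigenvalue times $\|\delta\|^2$. The only cosmetic differences are that you use the integral form of the remainder (with $\int_0^1(1-t)\,dt=\tfrac{1}{2}$) where the paper uses the Lagrange form with an intermediate $t_0$, and that you spell out the additive-reparameterization and differentiation-under-the-integral justifications that the paper compresses into \enquote{by the same arguments}.
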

To build intuition, in~\cref{subsec:example-of-bound} we give a scenario where this bound can be simplified. In particular, we consider a \enquote{covariate shift} setting \citep{Quionero-Candela2009-hg}
where $X$ is standard Gaussian, $Y = f_0(X) + \epsilon$ with a noise term independent of $X$ and we consider a shift $\delta$ in the mean of $X$. 
When evaluating a predictor $f(X)$ with the loss $\ell$ being the squared error, the bound in~\cref{thm:taylor-approximation-bound} depends on how the modelling error $g(X) = f_0(X) - f(X)$ behaves over the domain.  In particular, the bound scales as the supremum (over $t \in [0, 1]$) of $\sqrt{\var(g(X + t\cdot \delta)^2 - g(X)^2)}$.  As a simple corollary, if our predictor is off by an additive constant factor, $f = f_0 + C$, then the bound is zero, and the approximation is exact for any $\delta$.  On the other hand, if the squared modelling error $g(X)^2$ at one point $X$ tends to be a poor predictor of the squared modelling error at another point $X + t \cdot \delta$, then this variance will be large, and the approximation will be loose.

In exchange for considering a second-order approximation of the loss, we gain two benefits: Variance reduction and tractable optimization.  First, the variance of $\taylor$ is $O(\norm{\delta}^4)$ for large $\norm{\delta}$, while the variance of $\ipw$ can be much larger: We give a simple case in~\Cref{sec:variance-ipw-vs-taylor} where $\var(\taylor) = O(\delta^4)$ while $\var(\ipw) = O(\delta^2 \exp(\delta^2))$. Second, maximizing $\taylor$ over the set $\norm{\delta} \leq \lambda$ can be solved in polynomial time by exploiting the quadratic structure, while maximizing $\ipw$ over the constraints is generally hard, and may be infeasible in high dimensions. 

\subsection{Identifying worst-case parametric shifts}%
\label{sec:identifying_worst_case_parametric_shifts}

For $\lambda > 0$, we can locally approximate the worst-case loss over all distributions $\P_\delta$ where $\norm{\delta}_2 \leq\lambda$ by finding the worst-case loss in the Taylor approximation
\begin{equation}\label{eq:second-order-maximization}
  \sup_{\|\delta\|_2\leq \lambda} \E[\ell] + \delta^\top \sg^1 + \tfrac{1}{2}\delta^\top \sg^2 \delta.
\end{equation}
Since $\sg^2$ is generally not negative definite, the maximization objective is non-concave.  However, this particular problem is an instance of the `trust region problem'\footnote{Not to be confused with the `trust region \emph{method}', which repeatedly solves the trust region \emph{problem}.} which is well-studied in the optimization literature~\citep{conn2000trust}, and can be solved in polynomial time by specialized algorithms (see \citet[Section~8.1]{polik2007survey} for an example). This follows from the fact that strong duality holds, so that the optimal solution $\delta^*$ can be characterized in terms of the Karush-Kuhn-Tucker conditions \citep[Section~5.2]{boyd2004convex}. 
For this problem, we use the \texttt{trsapp} routine from NEWUOA \citep{Powell2006-rz}, as implemented in the python package \texttt{trustregion}.
Depending on the application and prior knowledge, one may choose constraint sets that differ from $\|\delta\|\leq\lambda$. 
In particular, the strong duality of \Cref{eq:second-order-maximization} also holds when $\|\delta\|_2 \leq \lambda$ is replaced by any single quadratic constraint $\delta^\top A \delta + \delta^\top b \leq \lambda$, allowing for e.g., larger shifts in some directions than in others.

\section{Experiments}%
\label{sec:experiments}
\subsection{Illustrative example: Laboratory testing}%
\label{sec:illustrative_example_worst_case_parametric_shifts}

\begin{wrapfigure}{r}{0.2\textwidth}
\begin{center}
\vspace{-0.4cm}
\begin{tikzpicture}[
  obs/.style={circle, draw=gray!90, fill=gray!30, very thick, minimum size=5mm}, 
  uobs/.style={circle, draw=gray!90, fill=gray!10, dotted, minimum size=5mm}, 
  shiftobs/.style={circle, draw=gray!90, fill=red!30, very thick, minimum size=5mm}, 
  bend angle=30]
  \node[obs] (Y) {$Y$};
  \node[shiftobs] (O) [right=0.5cm of Y] {$O$} ;
  \node[obs] (L) [below=0.5cm of Y]  {$L$};
  \draw[-latex, thick] (Y) -- (O);
  \draw[-latex, thick] (Y) -- (L);
  \draw[-latex, thick] (O) -- (L);
\end{tikzpicture}
\vspace{-0.2cm}
\end{center}
\caption{}%
\label{fig:causal_graph_labtest}
\vspace{0.2cm}
\end{wrapfigure}
To build intuition, we illustrate our method in a simple generative model, similar to~\cref{ex:lab_testing_rates}, where lab tests are more likely to be ordered $(O)$ for sick patients $(Y)$, and lab values $(L)$ are predictive of $Y$.
\begin{align*}
  Y &\sim \Ber(0.5) & O |  Y &\sim \Ber(\sigma(\alpha + \beta Y)) & L |  (Y, O = 1) &\sim \cN(\mu_y, 1)
\end{align*}
where $\mu_1 = 0.5, \mu_0 = -0.5$, and we initialize with $\alpha = -1$, $\beta = 2$, so that $\P(O = 1 |  Y = 0) \approx 0.27$ and $\P(O = 1 |  Y = 1) \approx 0.73$, and the marginal probability of test ordering is $\P(O = 1) = 0.5$.  When $O = 0$, we set $L$ to a dummy value of $L = 0$.  The underlying causal graph is given in~\cref{fig:causal_graph_labtest}. The predictive model $f(O, L)$ is trained on data from $\P$ to predict $Y$ using all available features. If lab tests are not available ($O = 0$), this model predicts $Y$ based on the observed likelihood of $Y$ given $O = 0$, and otherwise uses a logistic regression model trained on cases where $O = 1$ in the training data.

\begin{figure}[t]
  \centering
  \begin{subfigure}[t]{0.45\textwidth}
    \centering
      \includegraphics[width=0.65\textwidth]{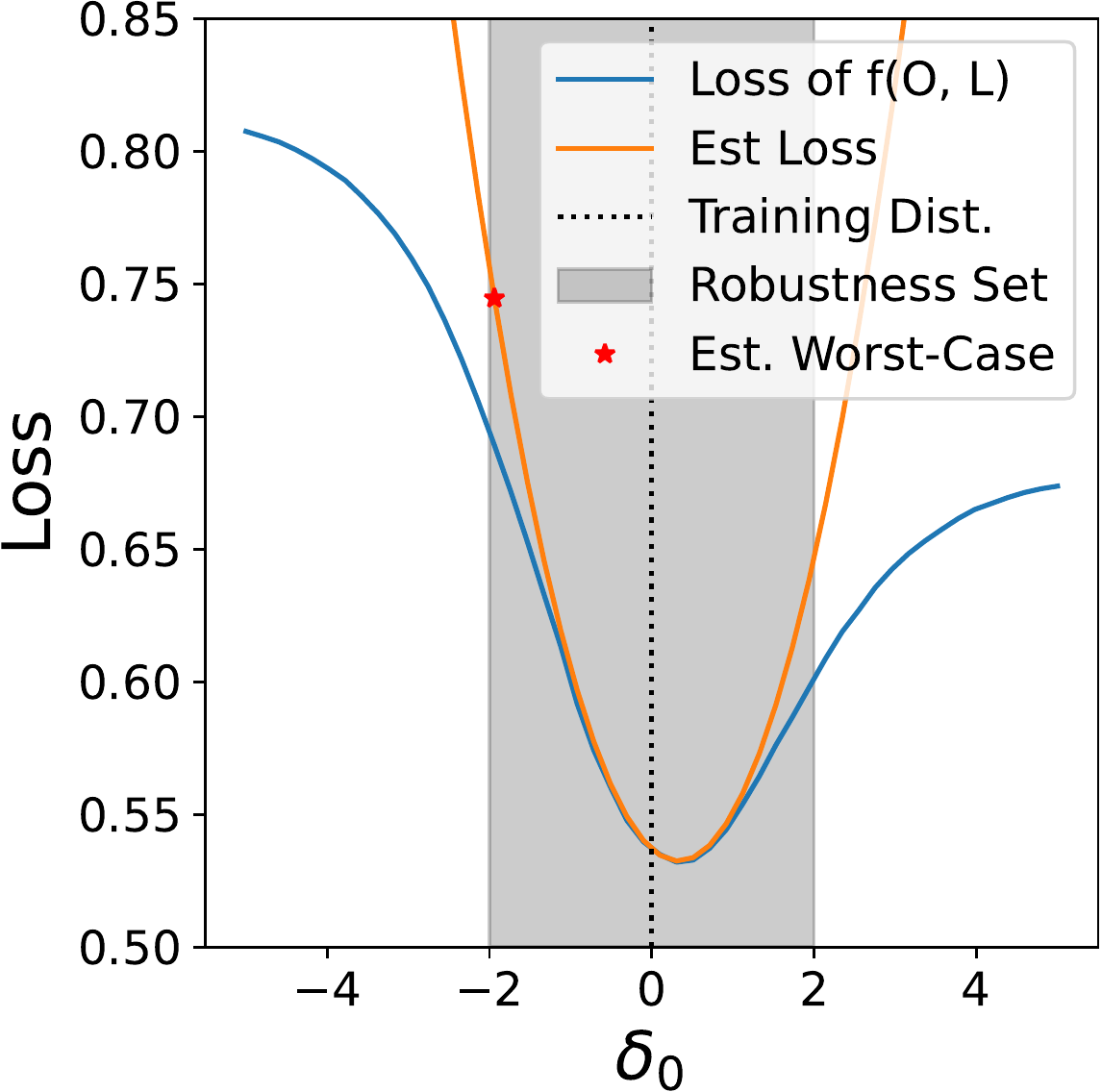}
  \end{subfigure}%
  \begin{subfigure}[t]{0.45\textwidth}
    \centering
      \includegraphics[width=0.65\textwidth]{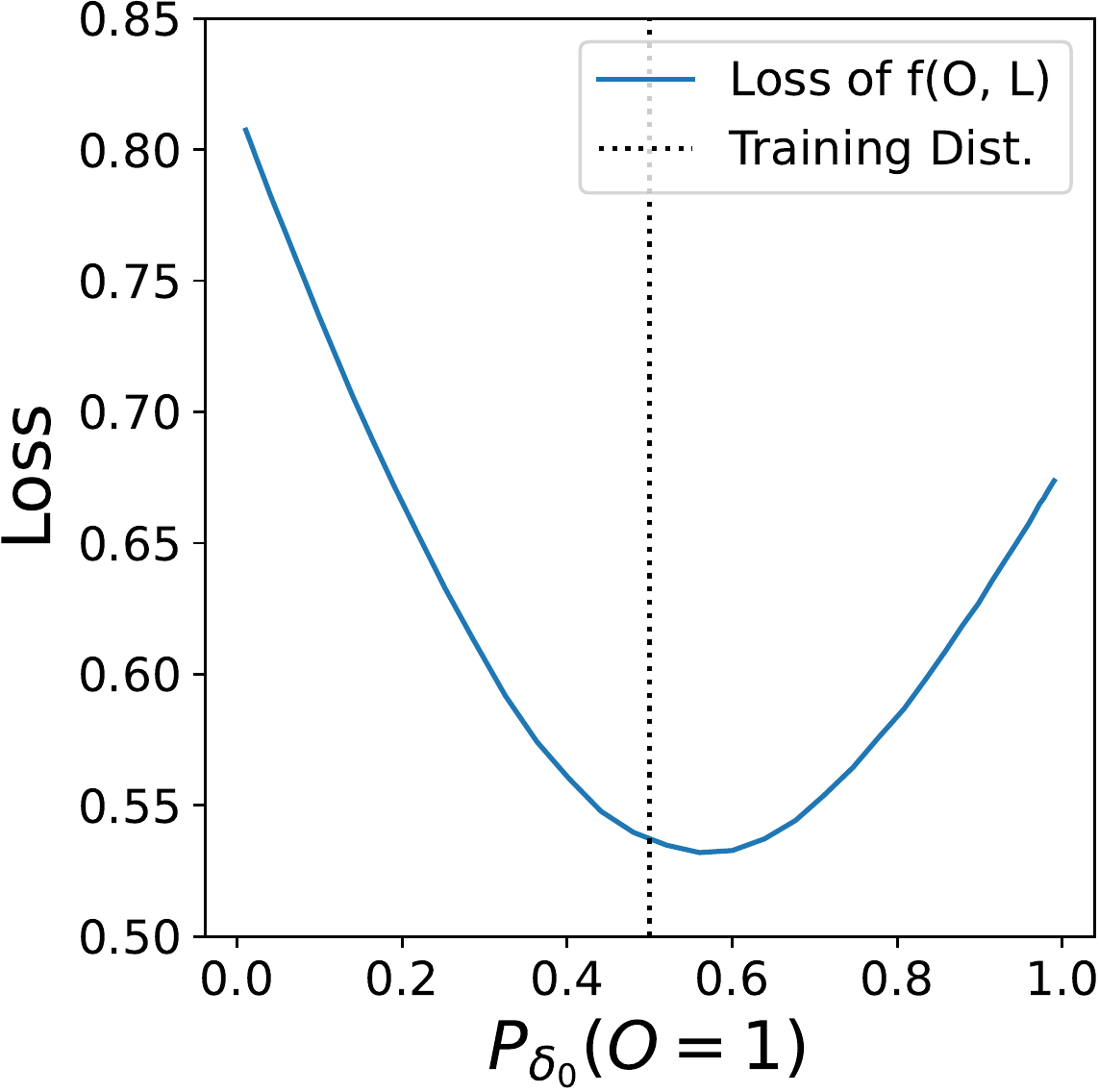}
  \end{subfigure}
  \caption{The blue line gives the (unobserved) cross-entropy loss under parametric shifts, plotted with respect to the parameter $\delta_0$ (left) and the resulting change in the marginal laboratory testing rate (right). We also provide the quadratic approximation (orange line), estimated using validation data, and the predicted worst-case shift (red star) for $\abs{\delta_0} < 2$ (region in grey).}%
  \label{fig:labtest_loss_under_intercept_shift}
  \vspace{-0.3cm}
\end{figure}

\textbf{Defining a shift function:} $\P(O |  Y)$ is a conditional exponential family with $\eta(Y) = \alpha + \beta Y$.  We consider the shift function $s(Y; \delta) = \delta_{0} + \delta_{1} Y$, where $\delta_0$ models an overall change in testing rate, and $\delta_1$ models an additional change in the likelihood of testing sick $(Y = 1)$ patients.

\textbf{Estimating the impact of shift using quadratic approximation:} To start, we keep $\delta_1 = 0$ fixed and vary only $\delta_0$, which uniformly increases or decreases testing.  In~\cref{fig:labtest_loss_under_intercept_shift}, we show the ground-truth cross-entropy loss of $f(O, L)$ under perturbed distributions $\P_{\delta_0}$. 
We observe that the \textbf{direction} of the shift matters: In~\cref{fig:labtest_loss_under_intercept_shift}, the model performance slightly increases under a small increase in testing rates, but degrades if testing increases too much; moreover, the loss under shift is generally asymmetric, as a decrease hurts more than an increase in testing. 
In~\cref{fig:labtest_loss_under_intercept_shift} (left), we demonstrate the use of the quadratic approximation described in~\cref{sec:approximating_the_shift_for_exp_fam}.  For illustration, we consider a robustness set of $\delta_0 \in [-2, 2]$, and see that the predicted worst-case shift coincides with the actual worst-case shift, and that the quadratic approximation is accurate for smaller values of $\delta$.  

In~\cref{sec:comparison_to_subpopulation_shift_app}, we allow both $\delta_0$ and $\delta_1$ to vary, and compare our approach to that of worst-case $(1 - \alpha)$ conditional subpopulation shifts \citep{Subbaswamy2020-vr}.  In the context of this example, we demonstrate that for any $1 - \alpha < 0.27$, the worst-case conditional subpopulation loss is achieved by having all healthy patients get tested, and no sick patients get tested.  We contrast this with an iterative approach to designing constraints that is made possible by considering parametric shifts, where end-users can restrict the degree to which the shift differs across sick and healthy populations.

\subsection{Detecting sensitivity to non-causal correlations}\label{subsec:celeba-gan}
A predictive model may pick up on various problematic dependencies in the data that may not remain stable under dataset shift.
To understand the impact of these dependencies, a model user may wish to understand which changes in distribution pose the greatest threats to model performance, and to measure the impact of these changes.
To illustrate this use-case, we make use of the CelebA dataset \citep{liu2015faceattributes}, which contains
images of faces and binary attributes (e.g., glasses, beard, etc.) encoding several features whose correlations may be unstable (e.g., the relation between gender and being bald). We consider the task of predicting gender ($Y$) from images of faces ($X$), and assess sensitivity to a shift in the distributions of attributes ($\bW$).\footnote{We do not endorse gender classification as an inherently worthwhile task. Nonetheless, gender classification is commonly studied in the context of understanding the implicit biases of machine learning models \citep{Buolamwini2018-yx,Schwemmer2020-qs}, and we consider the task with that context in mind.} 
\begin{figure}[t]
\centering
\scalebox{0.65}{\input{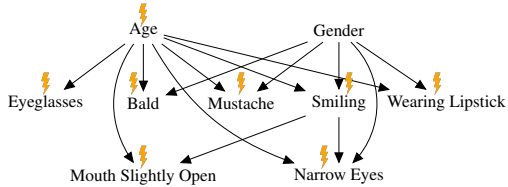}}
\caption{Causal graph over attributes in the synthetic CelebA dataset, where lightning bolts indicate changes in mechanisms.  All of these attributes are causal parents of the image $X$ (not shown here), which is generated by a GAN conditioned on these attributes.}\label{fig:celeba-scm}
\vspace{-0.4cm}
\end{figure}

\textbf{Setup}: To obtain ground-truth shifts in distribution, we generate synthetic datasets of faces using CausalGAN \citep{kocaoglu2018causalgan}, trained on the CelebA data. We simulate attributes following the causal graph in \Cref{fig:celeba-scm}, and then simulate images from the GAN conditioned on those attributes. We draw a training sample from this distribution $\P$, and fit a gender classifier $f(X)$ using the image data alone, by finetuning a pretrained ResNet50 classifier \citep{hu2018does}. Each attribute $W_i$ is binary, so we consider shifts in the log-odds $\eta_i(Z_i)$ of each attribute $W_i$ given parents $Z_i$. Here, we use a maximally flexible shift function $s_i(Z_i;\delta_i) = \sum_{z\in\cZ_i} \delta_{i, z} \1{Z_i=z}$, such that for $Z_i \in \{0, 1\}^{k}$ there are $2^k$ parameters. Across all intervened variables, $\delta \in \R^{31}$. 
Due to the synthetic nature of our setup, we can simulate from $\P_\delta(X, \bW, Y)$ to evaluate the ground-truth impact of this shift, simulating first from the shifted attribute distribution, and then simulating images from the GAN conditional on those attributes.  We use the 0/1 loss $\ell = \1{f(X) \neq Y}$, and constrain $\delta$ by $\|\delta\|_2 \leq \lambda = 2$.

\textbf{Comparing importance sampling and Taylor across multiple simulations}: We simulate $K=100$ validation sets 
from $\P$, in each estimating the worst-case shifts $\delta_{\text{Taylor}}$ (via the approach in \cref{sec:identifying_worst_case_parametric_shifts}) and $\delta_{\text{IS}}$, where the latter corresponds to 
minimizing $\ipw$ using a standard non-convex solver from the \texttt{scipy} library \citep{scipy2020}. We simulate ground truth data 
from $\P_{\delta_{\text{IS}}}$ and $\P_{\delta_{\text{Taylor}}}$, to compare the two shifts.
First, we demonstrate that the Taylor approach finds more impactful shifts, 
when searching over the space of small, bounded shifts considered here. 
In~\cref{tab:celebA-table} (right), we compare the average drop in accuracy using the Taylor shifts (3.8\%) 
and the IS shifts (2.2\%). 
In \cref{fig:compare-ipw-taylor-optim} (right) we plot the differences in test accuracy $\E_{\delta_{\text{Taylor}}}[\1{f(X) = Y}]-\E_{\delta_{\text{IS}}}[\1{f(X) = Y}]$, where the Taylor approach finds a more impactful shift in $96\%$ of cases.
Second, the Taylor approach has an average run-time of $0.01s$, versus $2.14s$ for the IS approach. 
Third, when \textbf{only} used to evaluate the shift $\delta_{\text{Taylor}}$, the IS estimator is comparable to the Taylor estimator, with a near-identical average bias (shown in Table 1 (right)) and RMSE (0.0191 and 0.0192 respectively). 
Finally, however, in~\cref{tab:celebA-table} (right) we observe that $\hat{E}_{\delta_{\text{IS}}, \text{IS}}$ is strongly biased in predicting $\E_{\delta_{\text{IS}}}$, yielding a mean absolute prediction error (MAPE) of $0.069$ (not shown in the table).  This can be contrasted with a MAPE of $0.015$ when using $\hat{E}_{\delta_{\text{Taylor}}, \text{Taylor}}$ to predict $\E_{\delta_{\text{Taylor}}}$. 
This may suggest that optimizing the IS objective is prone to \enquote{overfitting}, choosing a sub-optimal $\delta$ from a region of the search space that has high variance.
Here, where $\lambda = 2$, the drop in accuracy is relatively mild for the shifts found by both approaches. In~\cref{sec:impact_of_changing_lambda} we show that larger values of $\lambda$ correspond to more substantial drops in accuracy (e.g., an average drop of 23\% for $\lambda = 8$ using the Taylor approach).

\begin{table}[t]
    \centering
    \caption{%
      (Left) Top 5 components (by magnitude) of the example shift vector $\delta \in \R^{31}$ where $\P$ and $\P_\delta$ denote conditional probabilities. The full example shift vector can be found in~\cref{subsec:full-table-1}.
      (Right) Taylor and IS estimates vs.\ true accuracy for the $\delta_{\text{Taylor}}$ found by the Taylor approach, and IS estimate vs.\ true accuracy for the $\delta_{\text{IS}}$ found by the IS approach. Averages are taken over 100 simulations.\label{tab:celebA-table}
      }
      \vspace{0.2cm}
      \scalebox{0.65}{\begin{tabular}{rlrrr}
\toprule
Conditional & &  $\delta_i$ &  $\P$ &  $\P_{\delta}$ \\
\midrule
              Bald &| Female, Old &       0.899 & 0.047 &          0.109 \\
              Bald &| Male, Young &      -0.800 & 0.378 &          0.214 \\
                Bald &| Male, Old &      -0.680 & 0.622 &          0.455 \\
Wearing Lipstick &| Female, Young &      -0.618 & 0.924 &          0.868 \\
  Wearing Lipstick &| Female, Old &      -0.543 & 0.953 &          0.921 \\
\bottomrule
\end{tabular}
}\qquad
      \scalebox{0.65}{\begin{tabular}{lrr}
\toprule
Metric &  Example $\delta$ &   Avg. \\
\midrule
Original acc. ($\E[\mathbf{1}\{f(X) = Y\}]$)                           &          &  0.912 \\
\midrule
Acc.\ under Taylor shift ($\E_{\delta_{\text{Taylor}}}[\mathbf{1}\{f(X) = Y\}]$) &    0.874 &  0.874 \\
IS est.\ of acc.\ under Taylor shift ($\hat{E}_{\delta_{\text{Taylor}}, \text{IS}}$)                  &    0.829 &  0.863 \\
Taylor est.\ of acc.\ under Taylor shift ($\hat{E}_{\delta_{\text{Taylor}}, \text{Taylor}}$)          &    0.844 &  0.863 \\
\midrule
Acc.\ under IS shift ($\E_{\delta_{\text{IS}}}[\mathbf{1}\{f(X) = Y\}]$)     &    
                                                                &  0.889 \\
IS est.\ of acc.\ under IS shift ($\hat{E}_{\delta_{\text{IS}}, \text{IS}}$)                      &    
                                                                                      &  0.821 \\
\bottomrule
\end{tabular}
}
\end{table}
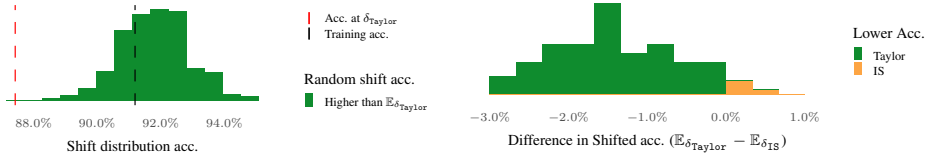
\begin{figure}[!t]
    \centering
    \begin{subfigure}{0.45\linewidth}
        \centering
        \scalebox{0.75}{
\begin{tikzpicture}[x=1pt,y=1pt]
\definecolor{fillColor}{RGB}{255,255,255}
\begin{scope}
\definecolor{fillColor}{RGB}{15,140,46}

\path[fill=fillColor] ( 14.62, 32.98) rectangle ( 23.72, 32.98);

\path[fill=fillColor] ( 23.72, 32.98) rectangle ( 32.82, 32.98);

\path[fill=fillColor] ( 32.82, 32.98) rectangle ( 41.92, 34.15);

\path[fill=fillColor] ( 41.92, 32.98) rectangle ( 51.03, 35.89);

\path[fill=fillColor] ( 51.03, 32.98) rectangle ( 60.13, 39.38);

\path[fill=fillColor] ( 60.13, 32.98) rectangle ( 69.23, 48.10);

\path[fill=fillColor] ( 69.23, 32.98) rectangle ( 78.33, 68.46);

\path[fill=fillColor] ( 78.33, 32.98) rectangle ( 87.43, 76.60);

\path[fill=fillColor] ( 87.43, 32.98) rectangle ( 96.53, 78.93);

\path[fill=fillColor] ( 96.53, 32.98) rectangle (105.63, 78.35);

\path[fill=fillColor] (105.63, 32.98) rectangle (114.73, 49.27);

\path[fill=fillColor] (114.73, 32.98) rectangle (123.83, 47.52);

\path[fill=fillColor] (123.83, 32.98) rectangle (132.94, 36.47);

\path[fill=fillColor] (132.94, 32.98) rectangle (142.04, 35.31);
\definecolor{drawColor}{RGB}{0,0,0}

\path[draw=drawColor,line width= 0.6pt,dash pattern=on 6pt off 6pt ,line join=round] ( 79.54, 30.69) -- ( 79.54, 81.22);
\definecolor{drawColor}{RGB}{255,0,0}

\path[draw=drawColor,line width= 0.6pt,dash pattern=on 6pt off 6pt ,line join=round] ( 19.10, 30.69) -- ( 19.10, 81.22);
\end{scope}
\begin{scope}
\definecolor{drawColor}{gray}{0.30}

\node[text=drawColor,anchor=base,inner sep=0pt, outer sep=0pt, scale=  0.88] at ( 28.39, 19.68) {\scriptsize{$88.0\%$}};

\node[text=drawColor,anchor=base,inner sep=0pt, outer sep=0pt, scale=  0.88] at ( 60.46, 19.68) {\scriptsize{$90.0\%$}};

\node[text=drawColor,anchor=base,inner sep=0pt, outer sep=0pt, scale=  0.88] at ( 92.52, 19.68) {\scriptsize{$92.0\%$}};

\node[text=drawColor,anchor=base,inner sep=0pt, outer sep=0pt, scale=  0.88] at (124.59, 19.68) {\scriptsize{$94.0\%$}};
\end{scope}
\begin{scope}
\definecolor{drawColor}{RGB}{0,0,0}

\node[text=drawColor,anchor=base,inner sep=0pt, outer sep=0pt, scale=  1.10] at ( 78.33,  7.64) {\scriptsize{Shift distribution acc.}};
\end{scope}
\begin{scope}
\definecolor{drawColor}{RGB}{255,0,0}

\path[draw=drawColor,line width= 0.6pt,dash pattern=on 6pt off 6pt ,line join=round] (167.32, 71.62) -- (167.32, 79.39);
\end{scope}
\begin{scope}
\definecolor{drawColor}{RGB}{255,0,0}

\path[draw=drawColor,line width= 0.6pt,dash pattern=on 6pt off 6pt ,line join=round] (167.32, 71.62) -- (167.32, 79.39);
\end{scope}
\begin{scope}
\definecolor{drawColor}{RGB}{0,0,0}

\path[draw=drawColor,line width= 0.6pt,dash pattern=on 6pt off 6pt ,line join=round] (167.32, 63.85) -- (167.32, 71.62);
\end{scope}
\begin{scope}
\definecolor{drawColor}{RGB}{0,0,0}

\path[draw=drawColor,line width= 0.6pt,dash pattern=on 6pt off 6pt ,line join=round] (167.32, 63.85) -- (167.32, 71.62);
\end{scope}
\begin{scope}
\definecolor{drawColor}{RGB}{0,0,0}

\node[text=drawColor,anchor=base west,inner sep=0pt, outer sep=0pt, scale=  0.88] at (175.23, 72.48) {\scriptsize{Acc. at $\delta_{\texttt{Taylor}}$}};
\end{scope}
\begin{scope}
\definecolor{drawColor}{RGB}{0,0,0}

\node[text=drawColor,anchor=base west,inner sep=0pt, outer sep=0pt, scale=  0.88] at (175.23, 64.70) {\scriptsize{Training acc.}};
\end{scope}
\begin{scope}
\definecolor{drawColor}{RGB}{0,0,0}

\node[text=drawColor,anchor=base west,inner sep=0pt, outer sep=0pt, scale=  1.10] at (164.91, 41.36) {\scriptsize{Random shift acc.}};
\end{scope}
\begin{scope}
\definecolor{fillColor}{RGB}{15,140,46}

\path[fill=fillColor] (165.51, 30.27) rectangle (169.12, 36.84);
\end{scope}
\begin{scope}
\definecolor{drawColor}{RGB}{0,0,0}

\node[text=drawColor,anchor=base west,inner sep=0pt, outer sep=0pt, scale=  0.88] at (175.23, 30.53) {\scriptsize{Higher than $\mathbb{E}_{\delta_{\texttt{Taylor}}}$}};
\end{scope}
\end{tikzpicture}}
    \end{subfigure}
    \begin{subfigure}{0.45\linewidth}
        \centering
        \scalebox{0.75}{
\begin{tikzpicture}[x=1pt,y=1pt]
\definecolor{fillColor}{RGB}{255,255,255}
\begin{scope}
\definecolor{fillColor}{RGB}{15,140,46}

\path[fill=fillColor] ( 16.19, 32.98) rectangle ( 29.43, 42.17);

\path[fill=fillColor] ( 29.43, 32.98) rectangle ( 42.66, 49.06);

\path[fill=fillColor] ( 42.66, 32.98) rectangle ( 55.89, 58.25);

\path[fill=fillColor] ( 55.89, 32.98) rectangle ( 69.13, 58.25);

\path[fill=fillColor] ( 69.13, 32.98) rectangle ( 82.36, 78.93);

\path[fill=fillColor] ( 82.36, 32.98) rectangle ( 95.60, 53.66);

\path[fill=fillColor] ( 95.60, 32.98) rectangle (108.83, 58.25);

\path[fill=fillColor] (108.83, 32.98) rectangle (122.07, 49.06);

\path[fill=fillColor] (122.07, 32.98) rectangle (135.30, 49.06);

\path[fill=fillColor] (135.30, 39.87) rectangle (148.54, 39.87);

\path[fill=fillColor] (148.54, 35.28) rectangle (161.77, 35.28);

\path[fill=fillColor] (161.77, 32.98) rectangle (175.00, 32.98);
\definecolor{fillColor}{RGB}{253,165,68}

\path[fill=fillColor] ( 16.19, 32.98) rectangle ( 29.43, 32.98);

\path[fill=fillColor] ( 29.43, 32.98) rectangle ( 42.66, 32.98);

\path[fill=fillColor] ( 42.66, 32.98) rectangle ( 55.89, 32.98);

\path[fill=fillColor] ( 55.89, 32.98) rectangle ( 69.13, 32.98);

\path[fill=fillColor] ( 69.13, 32.98) rectangle ( 82.36, 32.98);

\path[fill=fillColor] ( 82.36, 32.98) rectangle ( 95.60, 32.98);

\path[fill=fillColor] ( 95.60, 32.98) rectangle (108.83, 32.98);

\path[fill=fillColor] (108.83, 32.98) rectangle (122.07, 32.98);

\path[fill=fillColor] (122.07, 32.98) rectangle (135.30, 32.98);

\path[fill=fillColor] (135.30, 32.98) rectangle (148.54, 39.87);

\path[fill=fillColor] (148.54, 32.98) rectangle (161.77, 35.28);

\path[fill=fillColor] (161.77, 32.98) rectangle (175.00, 32.98);
\end{scope}
\begin{scope}
\definecolor{drawColor}{gray}{0.30}

\node[text=drawColor,anchor=base,inner sep=0pt, outer sep=0pt, scale=  0.88] at ( 16.19, 19.68) {\scriptsize{$-3.0\%$}};

\node[text=drawColor,anchor=base,inner sep=0pt, outer sep=0pt, scale=  0.88] at ( 55.89, 19.68) {\scriptsize{$-2.0\%$}};

\node[text=drawColor,anchor=base,inner sep=0pt, outer sep=0pt, scale=  0.88] at ( 95.60, 19.68) {\scriptsize{$-1.0\%$}};

\node[text=drawColor,anchor=base,inner sep=0pt, outer sep=0pt, scale=  0.88] at (135.30, 19.68) {\scriptsize{$0.0\%$}};

\node[text=drawColor,anchor=base,inner sep=0pt, outer sep=0pt, scale=  0.88] at (175.00, 19.68) {\scriptsize{$1.0\%$}};
\end{scope}
\begin{scope}
\definecolor{drawColor}{RGB}{0,0,0}

\node[text=drawColor,anchor=base,inner sep=0pt, outer sep=0pt, scale=  1.10] at ( 95.60,  7.64) {\scriptsize{Difference in Shifted acc. ($\mathbb{E}_{\delta_{\texttt{Taylor}}} - \mathbb{E}_{\delta_{\texttt{IS}}}$)}};
\end{scope}
\begin{scope}
\definecolor{drawColor}{RGB}{0,0,0}

\node[text=drawColor,anchor=base west,inner sep=0pt, outer sep=0pt, scale=  1.10] at (199.44, 61.36) {\scriptsize{Lower Acc.}};
\end{scope}
\begin{scope}
\definecolor{fillColor}{RGB}{15,140,46}

\path[fill=fillColor] (200.05, 50.28) rectangle (203.66, 56.84);
\end{scope}
\begin{scope}
\definecolor{fillColor}{RGB}{253,165,68}

\path[fill=fillColor] (200.05, 42.51) rectangle (203.66, 49.07);
\end{scope}
\begin{scope}
\definecolor{drawColor}{RGB}{0,0,0}

\node[text=drawColor,anchor=base west,inner sep=0pt, outer sep=0pt, scale=  0.88] at (209.76, 50.53) {\scriptsize{Taylor}};
\end{scope}
\begin{scope}
\definecolor{drawColor}{RGB}{0,0,0}

\node[text=drawColor,anchor=base west,inner sep=0pt, outer sep=0pt, scale=  0.88] at (209.76, 42.76) {\scriptsize{IS}};
\end{scope}
\end{tikzpicture}}
    \end{subfigure}
    \caption{(Left) Model accuracy at randomly drawn shifts. (Right) Difference in accuracy in the worst-case shifts identified by Taylor and importance sampling approaches. The Taylor method identifies a more adversarial shift than importance sampling in $96\%$ of simulations (green).\label{fig:compare-ipw-taylor-optim}}
\end{figure}

\textbf{Examining a single shift}: To illustrate the type of shift found by our approach, we consider the $\delta_{\text{Taylor}}$ (over the $K$ runs) which yields the $\P_{\delta}$ with median test accuracy.
We display the largest components of that $\delta$ in \cref{tab:celebA-table} (left). Among others, this shift entails a $5\%$ increase in the probability of an older woman being bald, and a 5\% decrease in the probability of a young woman wearing lipstick. This suggests that the learned classifier $f$ relies on these associations in the images for prediction. We validate that this shift leads to a decrease in accuracy of around $3.8\%$, using simulated data from $\P_\delta$. To validate that this drop in accuracy is a non-trivial occurrence, we simulate $K=400$ random shifts $\delta_k$ where $\norm{\delta_k} = \lambda$ and evaluate the model accuracy in $\P_{\delta_k}$
(\cref{fig:compare-ipw-taylor-optim}, left). As expected, the chosen $\delta$ yields a lower accuracy (red line) than all of the random shifts.

\section{Conclusion}%
\label{sec:conclusion}
We argue for considering parametric shifts in distribution, to evaluate model performance under a set of changes that are interpretable and controllable. For parametric shifts in conditional exponential family distributions, we derive a local second-order approximation to the loss under shift.  This approximation enables the use of efficient optimization algorithms (to find the worst-case shift), and empirically provides realistic estimates of the resulting loss. In a computer vision task, this approach finds more impactful shifts (in far less time) than optimizing a reweighted objective, and the estimates of shifted accuracy under the chosen shift are substantially more reliable.

Of course, our method is not without limitations. Our definition of parametric shifts and resulting approximation relies on the relevant mechanisms $\P(W |  Z)$ being a conditional exponential family, and that the relevant variables are observed.  As illustrated in our experiments, this can be used to model changes in the causal relationships \textbf{between} attributes of an image, but does not immediately extend to modelling changes in the distribution of images given a fixed set of attributes.
As with any method that provides worst-case evaluation, there is potential for misuse and false confidence: If the specified shifts fail to capture important real-world changes, the resulting worst-case loss may be overly optimistic and misleading. Even if used correctly, our approach examines a narrow measure of model performance, and a small worst-case error should not be used to claim that a model is free of problematic behavior. For example, implicit dependence on certain attributes (e.g., race in medical imaging \citep{Banerjee2021-gy}) may be problematic based on ethical grounds, even if it does not lead to major issues with predictive performance under small shifts in distribution. 

\subsection*{Acknowledgements}%
\label{sec:acknowledgements}
We thank Jonas Peters, Tommi Jaakkola, Chandler Squires, and Stefan Hegselmann for helpful feedback and discussion, and Irene Chen and Christina X Ji for providing comments on an earlier draft.  MO and DS were supported in part by Office of Naval Research Award No. N00014-21-1-2807. NT was supported by a research grant (18968) from VILLUM FONDEN. 

\clearpage
\bibliographystyle{abbrvnat}
\bibliography{main.bbl}
\clearpage 

\appendix
\section*{Appendix}%
\label{sec:appendix}

This appendix is structured as follows:
\begin{itemize}
  \item In~\cref{app:details_of_labtest_example_fig1}, we provide details on the synthetic lab testing example, including how we generate the loss landscape in~\cref{fig:lab_testing_shift} (right).
  \item In~\cref{app:users_guide_to_defining_parametric_shifts}, we provide a \enquote{user's guide} to defining and interpreting parametric shifts, including worked examples for many common conditional distributions, as well as guidance on how to define and interpret the shift functions $s(Z; \delta)$.
  \item In~\cref{sec:considerations_for_evaluation_of_the_worst_case_loss}, we provide additional details on the worst-case optimization problem, as well as comparisons of the reweighting-based approach to the Taylor approximation approach.  We also demonstrate that the quadratic approximation is exact, for particularly simple structural causal models.
  \item In~\cref{sec:comparison_to_subpopulation_shift_app}, we compare our approach to that of worst-case conditional subpopulation shifts, in the context of a simpler laboratory testing example where we can explicitly compute the worst-case conditional subpopulations. Here, we demonstrate that our approach can capture more realistic intuition regarding which shifts are plausible in practice.
  \item In~\cref{app:celeba}, we give additional experimental details, as well as illustrative samples from the generative model, for the CelebA experiment described in~\cref{sec:experiments}.
  \item In~\cref{sec:related_work_app}, we give an extended discussion of related work.
  \item In~\cref{app:proofs}, we give proofs for all the results in the main paper.
\end{itemize}

\section{Details of \texorpdfstring{\cref{fig:lab_testing_shift}}{Figure 1}}%
\label{app:details_of_labtest_example_fig1}

In~\cref{fig:lab_testing_shift} (right), we consider the following, artificial, generative model, which resembles the setup in~\cref{sec:illustrative_example_worst_case_parametric_shifts}, but with the addition of age as a continuous variable.
\begin{align*}
    \text{Age}&\sim\cN(0, 0.5^2) \\
    \P(\text{Disease} = 1 |  \text{Age}) &= \operatorname{sigmoid}(0.5 \cdot \text{Age} - 1) \\
    \P(\text{Order} = 1 |  \text{Disease, Age}) &= \operatorname{sigmoid}(2 \cdot \text{Disease} + 0.5 \cdot \text{Age} - 1)\\
    \text{Test Result} |  \text{Order} = 1, \text{Disease} &\sim 
    \cN(-0.5 + \text{Disease}, 1)
\end{align*}
where if $\text{Order} = 0$, the test result is a placeholder value of zero.  In~\cref{fig:lab_testing_shift} (right), we consider a simple predictive model: If lab tests are not available ($\text{Order} = 0$), this model predicts disease based on an unregularized logistic regression model, which uses age to predict disease.  If a lab test is available, then it uses both age and the lab test for prediction.  This model is trained on $100{,}000$ samples from the training distribution.  To construct the loss landscape shown in~\cref{fig:lab_testing_shift} (right), we first observe that 
\begin{equation*}
  \P(O = 1 |  \text{Disease}, \text{Age}) = \operatorname{sigmoid}(\eta(\text{Disease}, \text{Age})),
\end{equation*}
where
\[
  \eta(\text{Disease},\text{Age}) = 2 \cdot \text{Disease} + 0.5\cdot \text{Age} - 1.
\]
We construct shifts using the shift function $s(\text{Disease}, \text{Age}; \delta) = \delta_0 \cdot (1 - \text{Disease}) + \delta_1 \cdot \text{Disease}$, and for a grid of values for $(\delta_0, \delta_1) \in {[-5, 5]}^2$ we consider perturbed distributions with a different conditional distribution of testing,
\begin{align*}
  \P_{\delta}(O = 1 |  \text{Disease, Age}) &= \operatorname{sigmoid}\bigg(\eta(\text{Disease}, \text{Age}) + \delta_0\cdot (1 - \text{Disease}) + \delta_1\cdot \text{Disease}\bigg),
\end{align*}
but where all other parts of the generative model are fixed.  For each value of $(\delta_0, \delta_1) \in {[-5, 5]}^2$, we draw $10{,}000$ samples from the corresponding distribution, and compute the negative log-likelihood of the original predictive model under this new distribution.  The resulting surface is plotted in~\cref{fig:lab_testing_shift} (right).

\section{A user's guide to defining parametric shifts}%
\label{app:users_guide_to_defining_parametric_shifts}

In this section, we discuss practical considerations in designing parametric shift functions for different distributions.  
\begin{itemize}
  \item In~\cref{appsub:examples_of_conditional_exponential_family_models}, we give examples of conditional exponential families, illustrative shift functions, and how to interpret them.
  \item In~\cref{appsub:adding_causal_edges_to_the_graph}, we formalize the idea that one can choose shift functions which depend on additional variables, other than the causal parents of a variable $W_i$.
  \item In~\cref{appsub:domain_preserving_parameterizations_of_shift} we give guidance on how to define shift functions when the parameters $\eta(Z)$ are constrained to lie in a particular domain, which is relevant for considering shifts such as changing the variance of a conditional Gaussian.
\end{itemize}

\subsection{Conditional exponential family models and interpretations of shifts}%
\label{appsub:examples_of_conditional_exponential_family_models}

In this section, we give examples of exponential families and their sufficient statistics, and discuss design considerations in specifying the shift function $s(Z; \delta)$.  Here, we restrict attention to shifts in a single variable, for ease of notation.  In~\cref{tab:examples-expontial-families} we give examples of conditional exponential families, along with their typical parameterizations. 
In the examples below, we review how shift functions $s(Z; \delta)$ impact these parameters, and how they can also be interpreted on the scale of more commonly considered parameters (e.g., conditional means and variances).

\begin{table}[t]
\caption{Examples of conditional exponential family distributions.}
\label{tab:examples-expontial-families}
\centering
\resizebox{\textwidth}{!}{%
\begin{tabular}{@{}lllll@{}}
\toprule
Distribution & Parameter space & Sufficient statistic & Inverse parameter map \\
\midrule
Binary($p$)       & $\eta(Z) \in \R$ & $T(W) = W$ & $p(W = 1 |  Z) = \text{sigmoid}(\eta(Z))$ \\
Categorical($p_1, \ldots, p_k$) & $\eta(Z) \in \R^{k}$ & $[T(W)]_i = \1{W = i}$ & $\P(W = i |  Z) = [\text{softmax}(\eta(Z))]_i$ \\
Poisson$(\lambda)$ & $\eta(Z) \in \R$ & $T(W) = W$ & $\lambda = \exp(\eta(Z))$ \\
Gaussian($\mu, \sigma^2$)  & $\eta(Z)_1 \in \R, \eta(Z)_2 < 0$ & $T(W) = (W, W^2)$ & $\mu(Z) = -\frac{\eta(Z)_1}{2 \eta(Z)_2}, \sigma^2(Z) = - \frac{1}{2 \eta(Z)_2}$ \\
Gamma$(\alpha, \beta)$ & $\eta(Z)_1 > -1, \eta(Z)_2 < 0$ & $T(W) = (\log W, W)$ & $\alpha(Z) = \eta(Z)_1 + 1, \beta(Z) = -\eta(Z)_2$\\
\bottomrule
\end{tabular}%
}
\end{table}

\begin{example}[Log-odds shift in a binary variable]\label{ex:logodds_shift}
Consider the distribution of a binary variable $W$ conditioned on variables $Z$.  Without loss of generality, we can write that 
\begin{equation*}
  \P(W = 1 |  Z) = \sigma(\eta(Z))
\end{equation*}
where $\sigma$ is the sigmoid function, and $\eta(Z)$ is an arbitrary measurable function of $Z$, taking on values in the extended real line $\eta(Z) \in \R \cup \{-\infty, +\infty\}$.  This can be written in canonical form as 
\begin{equation*}
  \P(W |  Z) = \exp\bigg\{\eta(Z)\cdot W  - \log(1 + \exp^{\eta(Z)})\bigg\}
\end{equation*}
where $\eta(Z)$ is the canonical parameter (the log-odds ratio), $T(W) = W$ is the sufficient statistic, and $h(\theta) = \log(1 + \exp^{\eta(Z)})$ is the normalizing constant. We can consider shifts $\eta_{\delta}(Z) \coloneqq \eta(Z) + \delta$, yielding the new conditional distribution
\begin{equation*}
  \P_{\delta}(W = 1 |  Z) = \sigma(\eta(Z) + \delta),
\end{equation*}
which is well-defined for any $\delta \in \R$. 
\end{example}

Here, we note that these shifts occur on the \enquote{natural} parameter scale $\eta(Z)$ (e.g., the log-odds), which at first glance may seem difficult to interpret: Why should we care about changes on the log-odds scale, instead of on the original probability scale? In addition to mathematical convenience, we argue that in some settings, working with natural parameters is advantageous for retaining a common scale across across multiple variables.

For instance, consider shifts in the two independent variables $W_1$ and $W_2$, where $V_i \sim \text{Bernoulli}(p_i)$, with $p_1 = 10^{-4}$ and $p_2 = 0.6$.  Suppose we wished to consider an additive shift on the probability scale, e.g., $p_1' = p_1 + 0.1, p_2' = p_2 + 0.1$.  Setting aside the inconvenience that we need to ensure $p_1', p_2' \in [0, 1]$, we argue that these shifts are not truly of a comparable scale.  In particular, this shift in $p_1$ may seem implausible in magnitude, while the same shift in $p_2$ seems more reasonable. On the other hand, an additive shift in the log-odds captures some aspect of this idea.

Of course, there is some flexibility to incorporate prior expectations of shifts in absolute probabilities. For instance, in binary variable with no causal parents, we can always construct a one-to-one map of $\delta$ to a change in the marginal probability.  For conditional shifts, we can similarly construct a one-to-one map between the value of $\delta$ in a shift $s(Z; \delta) = \delta$ and the resulting marginal probability of $W_i$, as formalized below.
\begin{restatable}{proposition}{LogOddsMarginaltoShift}\label{prop:log_odds_marginal_to_shift}
Consider a binary random variable $W$ with conditional distribution 
\begin{equation*}
  \P_{\delta}(W = 1 |  Z) = \sigma(\eta(Z) + \delta)
\end{equation*}
for an arbitrary measurable function $\eta(Z)$ whose range is the extended real numbers $\eta(Z) \in \R \cup \{+ \infty, - \infty\}$. Let $p_{+} \coloneqq \P(\eta(Z) = +\infty)$, $p_{-} \coloneqq \P(\eta(Z) = - \infty)$, and assume that $p_{+} + p_{-} < 1$. Then, the marginal probability 
\begin{equation*}
  p_{\delta} = \P_\delta(W = 1)
\end{equation*}
is a strictly monotonically increasing function of $\delta \in \R$ whose range is $(p_{+}, 1 - p_{-})$, 
\end{restatable}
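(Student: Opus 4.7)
The plan is to decompose $p_\delta$ according to the value of $\eta(Z)$, then verify strict monotonicity by differentiating under the expectation, and finally compute the limits as $\delta\to\pm\infty$.

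First I would split the expectation $p_\delta = \E[\sigma(\eta(Z) + \delta)]$ into three pieces based on the events $\{\eta(Z) = +\infty\}$, $\{\eta(Z) = -\infty\}$, and $\{\eta(Z) \in \R\}$, using the natural conventions $\sigma(+\infty) = 1$ and $\sigma(-\infty) = 0$. The $+\infty$ branch contributes $p_+$ for every $\delta$, the $-\infty$ branch contributes $0$ for every $\delta$, and the finite branch gives
\begin{equation*}
  p_\delta = p_+ + \E\bigl[\sigma(\eta(Z) + \delta)\,\1{\eta(Z) \in \R}\bigr].
\end{equation*}
All the $\delta$-dependence is now concentrated in an expectation of a bounded, smooth function of $\eta(Z)$.

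Next, for strict monotonicity, I would note that $\sigma'(x) = \sigma(x)(1-\sigma(x)) \in (0, 1/4]$ for $x \in \R$, which provides a uniform bound enabling differentiation under the integral via dominated convergence. This yields
\begin{equation*}
  \frac{d p_\delta}{d\delta} = \E\bigl[\sigma'(\eta(Z) + \delta)\,\1{\eta(Z) \in \R}\bigr],
\end{equation*}
and the integrand is strictly positive on $\{\eta(Z) \in \R\}$, an event with probability $1 - p_+ - p_- > 0$ by assumption. Hence $p_\delta$ is strictly increasing.

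For the range, I would apply dominated convergence (with dominating function $\1{\eta(Z) \in \R}$) to the pointwise limits $\sigma(\eta(Z) + \delta)\,\1{\eta(Z) \in \R} \to \1{\eta(Z) \in \R}$ as $\delta \to +\infty$, giving $p_\delta \to p_+ + (1 - p_+ - p_-) = 1 - p_-$, and analogously $p_\delta \to p_+$ as $\delta \to -\infty$. Continuity of $p_\delta$ (from differentiability) combined with strict monotonicity and the intermediate value theorem shows every value in $(p_+, 1 - p_-)$ is attained. The endpoints themselves are excluded because $0 < \sigma < 1$ on the finite branch forces $p_+ < p_\delta < 1 - p_-$ for every finite $\delta$. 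The only nontrivial point in the argument is carefully handling the extended-real-valued $\eta(Z)$ when decomposing the expectation and applying the convergence theorems; once the decomposition is in place, the remainder is routine.
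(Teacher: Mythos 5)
Your proof is correct and follows essentially the same route as the paper's: decompose the marginal according to whether $\eta(Z)$ is finite, note the infinite branches contribute constants, and obtain strict monotonicity and the limits $p_+$ and $1-p_-$ from the finite branch. Your version is somewhat more explicit on the technical side (differentiation under the expectation for strict monotonicity, dominated convergence for the limits, and the intermediate value theorem to show every value in $(p_+, 1-p_-)$ is actually attained), whereas the paper argues pointwise monotonicity on the finite event and states the limits directly, leaving the surjectivity of the range implicit.
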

\Cref{prop:log_odds_marginal_to_shift} states that, for any achievable marginal probability $p_{\delta} = \P_{\delta}(W = 1)$, there exists a unique value of $\delta$ that achieves this probability.  Because this relationship is strictly monotonic, we can hope to efficiently find such a value by e.g., binary search.  In the laboratory testing example of~\cref{ex:lab_testing_rates}, this would allow us to specify a plausible strength for the conditional shift $\delta$ in terms of an impact on the overall testing rate, e.g., modelling a scenario where the testing rate decreases from 20\% to 15\%.

Similar to the binary case, we can (if desired) directly parameterize shifts in terms of the conditional mean of a Gaussian distribution, as illustrated in~\cref{ex:gaussian_shift}, which operates on the scale of $\mu(Z)$ alone.
\begin{example}[Mean shift in a conditional Gaussian]\label{ex:gaussian_shift}
Consider the distribution of a multi-variate Gaussian variable $W$ conditioned on a binary variable $Z$, where we write 
\begin{equation*}
  p(w |  z) \eid \cN(w; \mu(z), \Sigma(z))
\end{equation*}
where $\cN(w; \mu(z), \Sigma(z))$ denotes the Gaussian density with mean $\mu(z)$ and covariance $\Sigma(z)$.  This can be written as an exponential family model with natural parameters $\eta(Z) = [{\Sigma(Z)}^{-1} \mu(Z), -\frac{1}{2}{\Sigma(Z)}^{-1}]$ and sufficient statistic $T(W) = [W, WW^\top]$.  Here, a shift in the mean can be parameterized by $s(Z; \delta) = [{\Sigma(Z)}^{-1} \delta, 0]$, such that 
\begin{equation*}
  p_{\delta}(w |  z) \eid \cN(w; \mu(z) + \delta, \Sigma(z)).
\end{equation*}
\end{example}

However, shifts of the same magnitude in the conditional mean may not be comparable.  Suppose that 
\begin{equation*}
    \P(W |  Z = 0) \eid \cN(0, 1) \qquad\text{and}\qquad \P(W |  Z = 1) \eid \cN(0, 0.001),
\end{equation*}
such that $\delta = 1$ in~\cref{ex:gaussian_shift} corresponds to 
\begin{equation*}
    \P_{\delta=1}(W |  Z = 0) \eid \cN(1, 1) \qquad\text{and}\qquad \P_{\delta=1}(W |  Z = 1) \eid \cN(1, 0.001).
\end{equation*}
While it may seem plausible that the mean of $W |  Z = 0$ can increase by $1$, it may seem unrealistic for $W |  Z = 1$. 
Here, it may be more reasonable to consider a different parameterization of $s(Z; \delta)$, where the impact of the shift in a direction is proportional to the variance in that direction; we discuss this in the next example.

\begin{example}[Variance-scaled mean shift in a conditional Gaussian]\label{ex:gaussian_shift_variance_scaled}
Consider the distribution of a multi-variate Gaussian variable $W$ conditioned on variables $Z$, where we write 
\begin{equation*}
  p(w |  z) \eid \cN(w; \mu(z), \Sigma(z))
\end{equation*}
where $\cN(w; \mu(z), \Sigma(z))$ denotes the Gaussian density with mean $\mu(z)$ and covariance $\Sigma(z)$.  This can be written as an exponential family model with natural parameters $\eta(Z) = [{\Sigma(Z)}^{-1} \mu(Z), -\frac{1}{2}{\Sigma(Z)}^{-1}]$ and sufficient statistic $T(W) = [W, WW^\top]$.  Here, a shift in the mean can be parameterized by $s(Z; \delta) = [\delta, 0]$, such that 
\begin{equation*}
  p_{\delta}(w |  z) \eid \cN(w; \mu(z) + \delta^\top \Sigma(Z), \Sigma(z)).
\end{equation*}
\end{example}

In~\cref{ex:gaussian_shift_variance_scaled}, the parameter $\delta$ has a different interpretation, as a variance-scaled mean-shift.  If $W$ is one-dimensional, we can see that this becomes 
\begin{equation*}
  p_{\delta}(w |  z) \eid \cN(w; \mu(z) + \delta \sigma^2(Z), \sigma^2(z)).
\end{equation*}
As we demonstrate in~\cref{sec:example-estimate-eta-not-needed}, this particular example of a parameterization has other benefits: For instance, for estimation of shift gradients and Hessians at $\delta = 0$ can be done without knowledge of $\Sigma(Z)$.

\subsection{Adding causal edges to the graph}%
\label{appsub:adding_causal_edges_to_the_graph}
In \cref{sec:defining_parametric_robustness_sets}, we consider the case where the shift function $s(Z; \delta)$ alters a conditional $\P(W|Z)$ by a shift function $s(Z;\delta)$. We now discuss shift functions that use a larger set $Z'$. In particular, we consider the setting where $Z$ represents the parents in a graph $\cG$ (that is, $Z \coloneqq \PA_{\cG}(W)$), and consider shift functions that correspond to adding additional parents in that causal graph. Our definitions and results immediately extend to measuring the impact of shifts that \textbf{add edges} to the graph, in the form of shift functions that depend on non-descendants of $W$.  

\textbf{Building intuition with a simple example:} To build intuition, consider the causal graph given in~\cref{fig:add_causal_edge}.  We consider a shift in $X_2$, with a shift function which depends not only on the causal parent $Y$, but also on $X_1$.
\begin{figure}[t]
\begin{center}
\begin{tikzpicture}[
  obs/.style={circle, draw=gray!90, fill=gray!30, very thick, minimum size=10mm}, 
  int/.style={rectangle, draw=red!10, fill=red!30, minimum size=5mm}, 
  uobs/.style={circle, draw=gray!90, fill=gray!10, dotted, minimum size=10mm}, 
  bend angle=30]
  \node[obs] (X1) {$X_1$};
  \node[obs] (Y) [right=of X1]  {$Y$};
  \node[obs] (X2) [right=of Y] {$X_2$} ;
  \node[int] (s) [above=of X2] {$s(X_1, Y; \delta)$} ;
  \draw[-latex, thick] (X1) -- (Y);
  \draw[-latex, red] (X1) -- (s);
  \draw[-latex, red] (Y) -- (s);
  \draw[-latex, thick] (Y) -- (X2);
  \draw[-latex, red] (s) -- (X2);
\end{tikzpicture}
\end{center}
\caption{Illustrative example of an intervention $s(X_1, Y; \delta)$, and modified causal graph, which creates a dependence between $X_1$ and $X_2$ that bypasses $Y$.}%
\label{fig:add_causal_edge}
\end{figure}
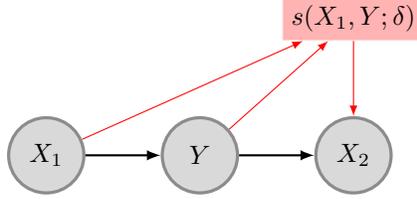
Suppose that the distribution $\P(X_2 |  Y)$ is a conditional exponential family, given by 
\begin{equation*}
  \P(X_2 |  Y) = g(X_2) \exp({\eta(Y)}^\top T(X_2) - h(\eta(Y))).
\end{equation*}
Using that $X_2\indep X_1 | Y$, we have $\P(X_2|Y) = \P(X_2|Y, X_2)$, and the joint probability factorizes as
\begin{equation*}
  \P(X_1, X_2, Y) = \P(X_2 |  Y) \P(Y |  X_1) \P(X_1) = \P(X_2 |  Y, X_1) \P(Y |  X_1) \P(X_1). 
\end{equation*}
This enables us to consider $Z = (Y, X_1)$ as the conditioning set in the context of~\cref{assmp:cef_factorization}.  This is useful, because it allows us to consider shift functions that depend on $Z$, which includes $X_1$ in addition to $Y$. The $\delta$-perturbation of this conditional distribution under the shift function $s(Y, X_1; \delta)$ is given by 
\begin{equation*}
  \P_{\delta}(X_2 |  Y, X_1) = g(X_2) \exp\bigg({\{\eta(Y) + s(Y, X_1; \delta)\}}^\top T(X_2) - h\big(\eta(Y) + s(Y, X_1; \delta)\big)\bigg), 
\end{equation*}
and we can observe that under both graphs, the distribution factorizes in the same fashion, where 
\begin{equation*}
  \P_{\delta}(X_1, X_2, Y) = \P_{\delta}(X_2 |  Y, X_1) \P(Y |  X_1) \P(X_1),
\end{equation*}
keeping the same convention that $s(Y, X_1; \delta = 0) = 0$, such that $\P_{0} = \P$.  This is one example of how our results can be applied with shift functions that effectively add edges to the causal graph.  Of course, not all edges are permitted, so we give a more general treatment below.

\newcommand{\ND}{\texttt{nd}}
\textbf{General guidelines for adding edges}: Allowing for the use of non-causal parents in the shift functions is straightforward, and can be done safely as follows, without violating~\cref{assmp:cef_factorization}: Given knowledge of the directed acyclic graph $\cG$ which generates the observed distribution $\P$, we can \textbf{add} edges to the graph, as long as they do not create cycles.  

Formally, let $\cG = (\bV, E)$ denote the causal DAG which generates the distribution $\P$, where $\bV$ denotes variables and $E$ denotes the set of edges, where we denote a directed edge by $e = (V_i, V_j)$, going from $V_i$ to $V_j$.  Let $\cG' = (\bV', E')$ denote another DAG (of our creation) with the constraint that we can only add edges, and that the graph must remain acyclic, such that $E' \supseteq E$, and $\bV' = \bV$.

For any variable $W_i \in \bV$, this implies that $\PA_{\cG'}(W_i) \supseteq \PA_{\cG}(W_i)$.  Moreover, any new causal parent $V_i$ of $W_i$ in $\cG'$ must have been a non-descendant of $W_i$ in the original graph, as otherwise the graph $\cG'$ would have a cycle from $W_i \rightarrow V_i \rightarrow W_i$. For ease of notation, let $N(W_i) \coloneqq \PA_{\cG'}(W_i) \setminus \PA_{\cG}(W_i)$ denote the set of new causal parents of $W_i$ in $\cG'$.  For any variable $W_i$ such that $N(W_i) \neq \varnothing$, we can write that 
\begin{equation}\label{eq:add_parents_cond_indep}
  W_i \indep_{\cG} N(W_i) |  \PA_{\cG}(W_i)
\end{equation}
by the rules of d-separation \citep{pearl2009causality}. As in~\cref{assmp:cef_factorization}, we use $\bW = \{W_1, \ldots, W_m\}$ to denote the set of variables to be intervened upon, and accordingly will assume that in the causal graph $\cG'$, we have not added new parents to any other variables, i.e., $N(V_i) = \varnothing$ for any $V_i \subsetneq \bW$. 

By~\cref{eq:add_parents_cond_indep}, we can write that the distribution $\P$ factorizes as 
\begin{equation*}
  \P(\bV) = \left(\prod_{W_i \in \bW} \P(W_i |  \PA_{\cG'}(W_i) )\right) \prod_{V_i \in \bV \setminus \bW} \P(V_i |  \PA_{\cG}(V_i))
\end{equation*}
because $\P(W_i |  \PA_{\cG'}(W_i)) = \P(W_i |  \PA_{\cG}(W_i)$, and if $\P(W_i |  \PA_{\cG}(W_i))$ is a conditional exponential family satisfying~\cref{def:conditional_exponential_family}, then $\P(W_i |  \PA_{\cG}(W_i))$ also satisfies this definition, where the function $\eta(\PA_{\cG}(W_i), N(W_i))$ is constant with respect to fluctuation in the variables $N(W_i)$.  Thus, taking $Z_i \coloneqq \PA_{\cG'}(W_i)$ as the conditioning set satisfies Assumption~\ref{assmp:cef_factorization}, and the rest of our results hold, where the corresponding $\delta$-perturbations in Definition~\ref{def:multiple_shift} are given by
\begin{equation*}
  \P_{\delta}(\bV) = (\prod_{W_i \in \bW} \P_{\delta_i}(W_i |  \PA_{\cG'}(W_i) )) \prod_{V_i \in \bV \setminus \bW} \P(V_i |  \PA_{\cG}(V_i))
\end{equation*}
with shift function $s_i(\PA_{\cG'}(W_i); \delta_i)$ that are parametric functions of causal parents in the modified graph $\cG'$.

\subsection{Domain-preserving parameterizations of shift}%
\label{appsub:domain_preserving_parameterizations_of_shift}

For both of the examples considered above, we did not need to restrict the magnitude of the additive change to $\eta(Z)$.  However, in some cases, such as changing the variance of a conditional Gaussian, we have the restriction that $\eta_{\delta}(Z) = \eta(Z) + s(Z; \delta)$ must lie in the proper domain, e.g., we cannot consider a shift which causes the conditional variance to become negative.  For a conditional Gaussian, we can consider unrestricted shifts in $\eta(Z)_1$, which controls the mean, because the mean has unrestricted domain.  On the other hand, $\eta(Z)_2 = (-2 \sigma^2(Z))^{-1}$ controls the variance, and must remain negative, such that $\eta(Z)_2 + s(Z; \delta)_2 < 0$ for the shifts we consider.

This can be resolved in one of two ways.  First, one can consider parameterizations of $s(Z; \delta)$ which are guaranteed to preserve the correct domain with an additional constraint on the values of $\delta$, such as the multiplicative shift below, which is sign-preserving for $\delta > -1$
\begin{align*}
\eta_{\delta}(Z)_2 &= \eta(Z)_2 + \underbrace{\delta \eta(Z)_2}_{s(Z; \delta)} = (1 + \delta) \eta(Z)_2.
\end{align*}
To handle the general case, at the expense of some additional complexity in the gradients of $s(Z; \delta)$, one can define the shifts as follows for parameters $\eta(Z)$ that have a lower bound $L$, with an equivalent formulation for shifts where the parameters have an upper bound, for any desired shift function $s'(Z; \delta)$
\begin{equation*}
  \eta(Z) + \underbrace{s'(Z; \delta) \cdot \text{sigmoid}(\gamma \cdot [(\eta(Z) + s'(Z; \delta)) - (L + \epsilon)])}_{s(Z; \delta)}
\end{equation*}
where $\text{sigmoid}(\gamma \cdot (x - (L + \epsilon)))$ is a smooth relaxation of the indicator function $\1{x > L + \epsilon}$, for a sufficiently large temperature parameter $\gamma > 0$ and a small $\epsilon > 0$.  This transformation preserves the twice-differentiable nature of $s(Z; \delta)$.  In practice, however, we typically evaluate the gradient of $s(Z; \delta)$ at $\delta = 0$, where $\eta(Z)$ does not lie at the boundary of allowable parameter space, such that we can consider simpler parameterizations like
\begin{equation*}
  \eta(Z) + \underbrace{s'(Z; \delta) \cdot \1{\eta(Z) + s'(Z; \delta) > L + \epsilon}}_{s(Z; \delta)}
\end{equation*}
as long as $\epsilon$ is taken sufficient small such that $\eta(Z) > L + \epsilon$ almost everywhere in $\P$.

\section{Considerations and additional results for evaluation of the worst-case loss}%
\label{sec:considerations_for_evaluation_of_the_worst_case_loss}
In this section, we present additional results on the Taylor approximation and compare how the Taylor approximation compares to the reweighting approach in evaluation and worst-case optimization of the shifted loss.
\begin{itemize}
    \item In \cref{appsec:algorithm_calculation_theorem1} we give a full treatment of how shift gradients and Hessians are estimated from samples, following~\cref{thm:sg-cond-cov-several-shift}.
    \item In \cref{sec:example-estimate-eta-not-needed}, we demonstrate in some cases, one does not need to estimate all of $\eta(Z)$, but only the parts of $\eta(Z)$ that is shifting.
    \item In~\cref{sec:linear-model-exact}, we demonstrate that the second-order Taylor expansion is exact in a linear-Gaussian setting, which gives a conceptual connection between this work and that of Anchor Regression \citep{rothenhausler2021anchor}, which considered a restricted type of additive shift intervention in a globally linear structural causal model.
    \item In \cref{sec:shift_gradient_and_hessian_binary}, we work out the expression for the shift gradient and Hessian when we condition on binary variables.
    \item In \cref{ex:lab_test_variance_plots,sec:variance-ipw-vs-taylor,experiment:compare-ipw}, we provide experiments that compare the variance of the importance sampling estimate $\ipw$ (see \cref{eq:ipw-estimate-of-mean}) to the variance of the Taylor estimate $\taylor$ (see \cref{eq:taylor-approx}) of the loss in a shifted distribution.
    \item In \cref{subsec:example-of-bound}, we consider the bound in \cref{thm:taylor-approximation-bound} in a covariate shift setting, and give an explicit expression for this under additional assumptions.
\end{itemize}

\subsection{Algorithm for Estimation of Shift Gradients and Hessians}
\label{appsec:algorithm_calculation_theorem1}

Here, we recall the form of the shift gradients and Hessians in~\cref{thm:sg-cond-cov-several-shift}, and demonstrate how to compute them in practice using a set of auxiliary regression functions fit to the validation data.
\SgConvCovSeveralShift* 

\textbf{Notation and Dimensions}: Let $\bW = \{W_1, \ldots, W_m\}$ denote the set of $m$ intervened variables, and let $\bZ = \{Z_1, \ldots, Z_m\}$ denote the conditioning sets.  Note that for a single $W_i \in \R^{d_{W_i}}$, we will generally have it that $Z_i \in \R^{d_Z}$, where $d_W$ is the dimension of $W$ (typically 1) and $d_Z$ is the number of conditioning variables, and when considering $n$ samples, $W_i$ will be a matrix in $\R^{n \times d_{W}}$, and $Z_i$ will be a matrix $\R^{n \times d_Z}$.  The sufficient statistic $T_i(W_i)$ maps from $\R^{d_W}$ to $\R^{d_T}$, where $d_T$ is the dimension of the sufficient statistic.  For many common distributions, $T_i(W_i) = W_i$, the identity function.  For others, like the conditional multi-variate Gaussian, $T_i(W_i) = [W_i, W_i W_i^\top]$, where $W \in \R^{d_W}$ and $W_i W_i^\top \in \R^{d_W \times d_W}$.  In these cases, we squeeze $T_i(W_i)$ to be a single vector, so in this case $d_T = d_W + d_W^2$.

\newcommand{\predw}{\hat{\mu}_{W_i}}
\newcommand{\predl}{\hat{\mu}_{\ell}}
\newcommand{\residl}{\hat{\epsilon}_{\ell |  Z_i}}
\newcommand{\residtzi}{\hat{\epsilon}_{T_i |  Z_i}}
\newcommand{\residtzj}{\hat{\epsilon}_{T_j |  Z_j}}
\textbf{Auxiliary models}: To estimate the shift gradients and Hessians, we first learn auxiliary predictive models, which are required for computing the relevant conditional covariances. For simplicity, we do not consider sample-splitting in the algorithm given below, but one could employ sample-splitting to learn these predictive models on an independent validation sample.  
\begin{itemize}
    \item For each $W_i$, we learn $\predw(Z_i)$ as a regression model for $\E[T_i(W_i) |  Z_i]$.  Because $T_i(W_i)$ may have multiple dimensions, this is a function from $\R^{d_Z}$ to $\R^{d_T}$.
    \item For each conditioning set $Z_i$, we learn $\predl(Z_i)$ as a regression model for $\E[\ell |  Z_i]$.  Because the loss is one-dimensional, this is a function from $\R^{d_Z}$ to $\R$.
\end{itemize}

We then construct the following, which are defined for each data point in the sample.
\begin{itemize}
    \item For each $W_i$, we construct $\residtzi \coloneqq T_i(W_i) - \predw(Z_i)$, which is a vector of length $d_{T_i}$.
    \item For each conditioning set $Z_i$, for the loss $\ell$, we construct $\residl \coloneqq \ell - \predl(Z_i)$, which is a real number.
    \item For each conditioning set $Z_i$, we compute $D_{i, 1}(Z_i)$ as $\nabla_{\delta_i} s_i(Z_i; \delta_i) \big|_{\delta = 0}$,  which is a matrix of size $d_T \times d_{\delta_i}$, and a function of $Z_i$ that we can evaluate on each sample.
    \item For each conditioning set $Z_i$, we compute $D_{i, 2}(Z_i)$ as $\nabla^2_{\delta_i} s_i(Z_i; \delta_i) \big|_{\delta = 0}$,  which is a tensor of size $d_T \times d_{\delta_i} \times d_{\delta_i}$, and a function of $Z_i$ that we can evaluate on each sample.
\end{itemize}

\textbf{Estimating shift gradients}
The shift gradient and Hessian in \cref{thm:sg-cond-cov-several-shift} are expressed as conditional covariance. Since $\E[\cov(A, B|C)] = \E[\epsilon_{A|C}\epsilon_{B|C}]$ where $\epsilon_{A|C}:=A - \E[A|C]$ and $\epsilon_{B|C}:=B - \E[B|C]$, we can use the estimated conditional means above, to compute the shift gradient and Hessian.
Suppose that we observe $N$ samples, $n \in \{1, \ldots, N\}$. 
For each index $i \in [m] := \{1, \ldots, m\}$, 
\begin{equation*}
   \hat{\sg}_i^1 = \frac{1}{N} \sum_{n = 1}^{N} \residl^{(n)} \cdot {D_{i, 1}(Z_i^{(n)}) }^\top \residtzi^{(n)}
\end{equation*}
which yields a vector of length $d_{\delta_i}$, and these are concatenated together for each $i$ to yield the entire shift gradient.  The shift Hessian is constructed block-wise, for each index $i, j \in [m] \times [m]$ as follows:  If $i = j$, then we construct the corresponding $d_{\delta_i} \times d_{\delta_i}$ block as 
\begin{equation*}
   \hat{\sg}^2_{i,i} = \frac{1}{N} \sum_{n = 1}^{N} \residl^{(n)} \cdot \left[\left({D_{i, 1}(Z_i^{(n)})}^\top \residtzi^{(n)}\right)^{\otimes 2} - D_{i, 2}(Z_i^{(n)})^\top \residtzi \right]
\end{equation*}
where $v^{\otimes 2}$ denotes the outer product so that $v^{\otimes 2} = vv^\top$, and the transpose of $D_{i, 2}$ refers to a transpose which has dimension $d_{\delta_i} \times d_{\delta_i} \times d_T$.   On the other hand, if $i \neq j$ we have 
\begin{equation*}
   \hat{\sg}^2_{i,j} = \frac{1}{N} \sum_{n = 1}^{N} (\ell^{(n)} - \bar{\ell}) \cdot \left(D_{i, 1}(Z_i^{(n)})^\top \residtzi^{(n)}\right) \left(D_{j, 1}(Z_j^{(n)})^\top \residtzj^{(n)}\right)^\top
\end{equation*}
where $\bar{\ell}$ is the average value of $\ell$ in the validation sample.

\subsection{Shifts where estimating all of \texorpdfstring{$\eta(Z)$}{} is not necessary for estimating shift gradient and Hessian}%
\label{sec:example-estimate-eta-not-needed}
The following example shows that when a shift occurs in an exponential conditional distribution with parameter $\eta(Z)$, we do not necessarily need to model all of $\eta(Z)$ in order to compute the shift gradient and Hessian. In particular, we only need to model the parts of $\eta(Z)$ that shift. This is different from estimating the shifted loss using importance sampling, where $\eta(Z)$ needs to be evaluated to evaluate \cref{eq:ipw-weights-exponential-family}.
\begin{example}\label{ex:gaussian-only-estimate-mean}
  Consider the distribution of $W$ conditioned on variables $Z$ that is a multi-variate Gaussian variable,
  \begin{equation*}
      W |  Z = \cN(\mu(Z), \Sigma(Z)),
  \end{equation*}
  for unknown functions $\mu, \Sigma$.
  The sufficient statistic for the multivariate Gaussian distribution is $T(W) = (W, WW^\top)$ and the canonical parameter is $\eta(Z) = (\Sigma(Z)^{-1}\mu(Z), -\frac{1}{2}\Sigma(Z)^{-1})$.\footnote{Or, more formally, $T(W) = \big(W, \vectorize(WW^\top)\big)$ and $\eta(Z) = \big(\sigma(Z)^{-1}\mu(Z), -\frac{1}{2}\vectorize({\mu(Z)})\big)$, where $\vectorize$ denotes the vectorization operation. For a detailed walk through of the exponential family parameterization of multivariate Gaussian distributions, see \url{https://maurocamaraescudero.netlify.app/post/multivariate-normal-as-an-exponential-family-distribution/}.}
  The first component of $\eta(Z)$ is a signal-to-variance ratio and the second is the inverse covariance matrix. 
  For a shift $(\delta, 0)$ that only affects the first component, we show that we do not need to model $\Sigma(Z)$, but only $\mu(Z)$. This is beneficial, since estimating a conditional covariance from data can be challenging, especially if $W$ is high-dimensional. 

  For $\delta \in \R^{d_W}$, let $s(Z; \delta) = (\delta, 0)^\top$, and suppose that we wish to estimate $\E_\delta[\ell]$ using \cref{eq:taylor-approx}. 
  The derivative of $s$ is given by
  \begin{align*}
    D_1 = \nabla_\delta^2 s(Z;\delta) = 
    \begin{pmatrix}
    \begin{bmatrix}
    1 & 0 & \cdots & 0 \\
    0 & 1 & \cdots & 0 \\
    \vdots & \vdots &\ddots & \vdots \\
    0 & 0 & \cdots & 1
    \end{bmatrix}
    \begin{bmatrix}
    0 & 0 & \cdots & 0 \\
    0 & 0 & \cdots & 0 \\
    \vdots & \vdots &\ddots & \vdots \\
    0 & 0 & \cdots & 0
    \end{bmatrix}
    \end{pmatrix},
  \end{align*}
  where the first block is a $d_W \times d_W$ diagonal matrix, and the second is a $d_{W} \times d_W^2$ matrix of zeros. The second derivative of $s$ is $D_2 = 0$.
  Hence, using \cref{thm:sg-cond-cov-several-shift}, the shift gradient is 
  \begin{align*}
      \sg^1 = \E[D_1 \cov(\ell, (W, WW^\top)|Z)] = \E[\cov(\ell, W|Z)],
  \end{align*}
  and 
  \begin{align*}
    \sg^2 &= \E\left[D_1 \cov(\ell, \bigg(W - \E[W|Z], WW^\top - \E[WW^\top|Z]^{\otimes 2}\bigg)|Z) D_1^\top\right]\\
    &= \E\left[\cov(\ell, \big(W - \E[W|Z]\big)^{\otimes 2}|Z)\right].
  \end{align*}
  Conditional covariances can be computed by only residualizing one of the variables: $\E[\cov(A, B|C)] = \E[A(B-\E[B|C])]$. Thus, if we only residualize $\ell$, we get
  \begin{align*}
      \sg^1 = \E[(\ell-\E[\ell|Z]) W ] \qquad\text{and}\qquad \sg^2 = \E[(\ell - \E[\ell|Z])\cdot(W - \mu(Z))^{\otimes 2}].
  \end{align*}
  Therefore, given data from $\P$, we can estimate the shift gradients by plugging in estimators $\hat\mu(Z)$ of $\E[W|Z]$ and $\hat{L}(Z)$ of $\E[\ell|Z]$. It follows that we do not need to model $\Sigma(Z)$ in order to estimate the shift gradients and Hessian at $\delta = 0$. 

  The story is different for a reweighting based estimator that seeks to estimate $\E_\delta[\ell]$ using importance sampling (see \cref{sec:modelling_shifted_loss_using_ipw}), where the weights are given by
  \begin{equation*}
      w_{\eta,\delta}(Z) = (W - \mu(Z))^\top\delta - \tfrac{1}{2}\delta^\top \Sigma(Z) \delta, 
  \end{equation*}
  and hence estimating $w_{\eta,\delta}(Z)$ requires estimation of $\Sigma(Z)$.
\end{example}

\subsection{The quadratic approximation is exact, for mean shifts in linear models}%
\label{sec:linear-model-exact}

\begin{figure}[t]
  \centering
  \begin{subfigure}{0.2\textwidth}
      \centering
      \begin{tikzpicture}
          \node (A) at (0, 1) {$A$};
          \node (X) at (-1, 0) {$X$};
          \node (H) at (0, 0) {$H$};
          \node (Y) at (1, 0) {$Y$};
          \draw[->] (A) -- (X);
            \draw[->] (A) -- (H);
            \draw[->] (A) -- (Y);
            \draw[-] (X) -- (H);
            \draw[-] (H) -- (Y);
            \draw[-] (X) edge[bend right=30] (Y);
        \end{tikzpicture}
    \end{subfigure}
    \begin{subfigure}{0.4\textwidth}
        \centering
            \begin{tikzpicture}[scale=0.5]
                \begin{axis}[
                    xlabel=$\delta_1$,ylabel=$\delta_2$,
                    zlabel=loss,
                    mesh/interior colormap name=mycolors,
                    colormap={mycolors}{color=(mygreen) color=(myred) color=(myred)},
                    ticks=none,
                    axis x line = bottom,
                    axis y line = left,
                    axis z line = left
                    ]
                    \addplot3 [domain=-0.5:0.5, surf, shader=faceted] {(0.7*x + 0.9*y)^2};
                \end{axis}
            \end{tikzpicture}
    \end{subfigure}
    \begin{subfigure}{0.35\textwidth}
        \centering
\pgfmathtruncatemacro{\Divisions}{15} 
\pgfmathsetmacro{\Cube}{1.5} 

\begin{tikzpicture}
[   x={(0.5cm,0.5cm)},
    y={(0.95cm,-0.25cm)},
    z={(0cm,0.9cm)}
]   
    \draw[-,thin] (0, 0, 0) -- (-\Cube/2, 0, 0);
    \draw[-,thin] (0, 0, 0) -- (0, \Cube/2, 0);
    \draw[-,thin] (0, 0, 0) -- (0, 0, \Cube/2);

    \begin{scope}[canvas is yz plane at x=-\Cube/2]
        \clip (-\Cube/2,-\Cube/2) rectangle (\Cube/2,\Cube/2);
        \colorlet{SIDE}[RGB]{myred}
        \colorlet{MIDDLE}[RGB]{mygreen}
        \foreach \x in {1,...,\Divisions}
        {   \pgfmathtruncatemacro{\px}{min((\x+2)/(\Divisions-1)*100, (\Divisions - \x+2)/(\Divisions-1)*100)}
            \colorlet{C}[RGB]{MIDDLE!\px!SIDE}
            \fill[C,fill opacity=0.95] ({-\Cube/2+\Cube*(\x-1)/\Divisions},{-\Cube/2}) rectangle ({-\Cube/2+\Cube*(\x+0.1)/\Divisions},{\Cube/2});
        }
        \draw (-\Cube/2,-\Cube/2) rectangle (\Cube/2,\Cube/2);
    \end{scope}
    
    \begin{scope}[canvas is xz plane at y=\Cube/2]
        \clip (-\Cube/2,-\Cube/2) rectangle (\Cube/2,\Cube/2);
        \fill[myred,fill opacity=0.95] (-\Cube/2,-\Cube/2) rectangle (\Cube/2,\Cube/2);
        \draw (-\Cube/2,-\Cube/2) rectangle (\Cube/2,\Cube/2);
    \end{scope}
    
    \begin{scope}[canvas is xy plane at z=\Cube/2]
        \clip (-\Cube/2,-\Cube/2) rectangle (\Cube/2,\Cube/2);
        \colorlet{MIDDLE}[RGB]{myred}
        \colorlet{SIDE}[RGB]{mygreen}
        \foreach \y in {1,...,\Divisions}
        {   \pgfmathtruncatemacro{\py}{min((\y+2)/(\Divisions-1)*100, (\Divisions-\y+2)/(\Divisions-1)*100)}
            \colorlet{C}[RGB]{SIDE!\py!MIDDLE}
            \fill[C,fill opacity=0.95] ({-\Cube/2},{-\Cube/2+\Cube*(\y-1)/\Divisions}) rectangle ({\Cube/2},{-\Cube/2+\Cube*(\y+0.1)/\Divisions});
        }
        \draw (-\Cube/2,-\Cube/2) rectangle (\Cube/2,\Cube/2);
    \end{scope}
    
    \node at (-1.04*\Cube, 0.4*\Cube/2, -\Cube/2) {Loss};
    \shade[left color=mygreen, right color=myred] (-\Cube, \Cube/2, -\Cube/2) rectangle (-\Cube, 4*\Cube/2, 0);
    
    \draw[->,thin] (-\Cube/2, 0, 0) -- (-2*\Cube/2, 0, 0);
    \draw[->,thin] (0, \Cube/2, 0) -- (0, 2*\Cube/2, 0);
    \draw[->,thin] (0, 0, \Cube/2) -- (0, 0, 2*\Cube/2);
    \node at (-2.8*\Cube/2, 0, 0) {$\delta_1$};
    \node at (0, 2.3*\Cube/2, 0) {$\delta_2$};
    \node at (0, 0, 2.3*\Cube/2) {$\delta_3$};
\end{tikzpicture}
        \end{subfigure}
    \caption{(Left) Graphical model assumed by \cref{eq:linear-scm}. The undirected edges represent either any directed configuration of directed edges or the dependence structures arising due to an acyclic SCM \citep{bongers2021foundations}.
    (Middle) Plotting $\E_{\delta}[(Y-\gamma^\top X)^2]$ as a function of $\delta\in \R^2$ for a fixed predictor $\gamma$. 
    (Right) Plotting $\E_{\delta}[(Y-\gamma^\top X)^2]$ as a function of $\delta\in \R^3$, with the loss indicated by the color. The loss only varies with changes in $\delta_2$ (corresponding in \cref{lemma:ar-quadratic-loss} to $v_\gamma \propto (0, 1, 0)^\top$).
    }
    \label{fig:ar-graph}
\end{figure}

We now consider data generated by a linear model, and show that the shifted loss is a quadratic function of $\delta$, meaning that the Taylor approximation $\taylorPop$ is globally exact. 
Suppose that data is sampled from a linear structural causal model, and a shift in mean occurs in an variable $A$ that does not have any causal parents. In particular, let $A$ have a normal distribution with mean $\mu$ and finite variance and let
\begin{equation}
  \label{eq:linear-scm}
  \begin{pmatrix} X \\ Y \\ H \end{pmatrix} = B \begin{pmatrix} X \\ Y \\ H \end{pmatrix} + MA + \epsilon.
\end{equation}
This is the model assumed by \citet{rothenhausler2021anchor}, and the corresponding graphical model is shown in \cref{fig:ar-graph} (left).
We consider the linear predictor $f_\gamma(X) = \gamma^\top X$ and the mean squared loss $\ell(f_\gamma(X), Y) = (Y - f(X))^2$. Due to the linearity of the model, the loss under a mean shift in $A$ is quadratic \citep{rothenhausler2021anchor}. 
\begin{restatable}{lemma}{AnchorRegQuadLoss}
\label{lemma:ar-quadratic-loss}
    Suppose $A \sim \mathcal{N}(\mu, \Sigma)$ and that $(X, Y, H)$ are generated according to \cref{eq:linear-scm}. 
    For $\gamma\in\R^{d_X}$ define $\ell:= (Y-\gamma^\top X)^2$. Then there exist $v_\gamma, u_{\mu,\gamma} \in \R^{d_A}$ such that for all shifts $\delta\in\R^{d_A}$:
    \begin{align*}
        \E_{\delta}[\ell] = \E[\ell] + \delta^\top u_{\mu,\gamma} + \tfrac{1}{2}\delta^\top v_\gamma v_\gamma^\top \delta,
    \end{align*}
    where $\E_{\delta}$ corresponds to taking the mean in the distribution where $A \sim \mathcal{N}(\mu + \delta, \Sigma)$.
    Further $u_{\mu,\gamma} = 0$ if $\mu = 0$.
\end{restatable}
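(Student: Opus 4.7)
The plan is to reduce the loss to a quadratic function of $A$ plus a noise term that is independent of $A$, and then evaluate the expectation under the shifted Gaussian by direct moment computation.

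First I would solve the structural equation in reduced form. Since the SCM is assumed acyclic, $(I-B)$ is invertible, so letting $V = (X,Y,H)^\top$ we have $V = (I-B)^{-1}MA + (I-B)^{-1}\epsilon$. Writing $C := (I-B)^{-1}M$ and $\xi := (I-B)^{-1}\epsilon$, and denoting by $C_Y, C_X$ the rows corresponding to $Y$ and to the block $X$, we obtain $Y - \gamma^\top X = (C_Y - \gamma^\top C_X) A + (\xi_Y - \gamma^\top \xi_X)$. Define $a_\gamma := (C_Y - \gamma^\top C_X)^\top \in \R^{d_A}$ and $R := \xi_Y - \gamma^\top \xi_X$, which is a function of $\epsilon$ alone and therefore independent of $A$ (under the standard assumption that $\epsilon \perp A$, which is built into \cref{eq:linear-scm}). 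Then $\ell = (a_\gamma^\top A + R)^2$.

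Next I would expand the square and take the expectation under the shifted distribution, using that $A\sim\cN(\mu+\delta,\Sigma)$ implies $\E_\delta[A] = \mu+\delta$ and $\E_\delta[AA^\top] = \Sigma + (\mu+\delta)(\mu+\delta)^\top$, while $R$ does not depend on $\delta$. Expanding,
\begin{align*}
\E_\delta[\ell] = a_\gamma^\top\Sigma a_\gamma + \bigl(a_\gamma^\top(\mu+\delta)\bigr)^2 + 2\,a_\gamma^\top(\mu+\delta)\,\E[R] + \E[R^2].
\end{align*}
Subtracting the same expression at $\delta = 0$ gives
\begin{align*}
\E_\delta[\ell] - \E[\ell] = 2\bigl(a_\gamma^\top\mu + \E[R]\bigr)(a_\gamma^\top\delta) + (a_\gamma^\top\delta)^2,
\end{align*}
which is manifestly a polynomial of degree at most two in $\delta$, with the quadratic part $(a_\gamma^\top\delta)^2 = \delta^\top a_\gamma a_\gamma^\top\delta$ being a rank-one quadratic form.

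To match the stated form exactly, I would set $v_\gamma := \sqrt{2}\,a_\gamma$ and $u_{\mu,\gamma} := 2(a_\gamma^\top\mu + \E[R])\,a_\gamma$, yielding $\E_\delta[\ell] = \E[\ell] + \delta^\top u_{\mu,\gamma} + \tfrac12\delta^\top v_\gamma v_\gamma^\top \delta$. For the final claim, I would invoke the convention that the exogenous noise $\epsilon$ in \cref{eq:linear-scm} has mean zero, so that $\E[R] = 0$, and then $\mu = 0$ immediately gives $u_{\mu,\gamma} = 0$. I do not anticipate a serious obstacle: the main subtlety is being careful about the independence of $R$ from $A$ (which relies on the SCM being acyclic and the noise being exogenous to $A$), and about the factor-of-two bookkeeping between the $\tfrac12$ in the lemma statement and the natural appearance of $(a_\gamma^\top\delta)^2$ without a prefactor.
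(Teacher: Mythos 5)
Your proposal is correct and follows essentially the same route as the paper's proof: write the reduced form of the acyclic linear SCM so that $Y-\gamma^\top X$ becomes a linear function of $A$ plus a noise term independent of $A$ (the paper cites \citet{rothenhausler2021anchor} for this representation, where your $a_\gamma$ and $R$ are their $b_\gamma$ and $\kappa_\gamma^\top\epsilon$), then expand the square under the mean-shifted Gaussian and read off $u_{\mu,\gamma}$ and $v_\gamma=\sqrt{2}\,a_\gamma$. If anything, you are slightly more explicit than the paper in tracking the $\E[R]$ term and noting that the final claim $u_{\mu,\gamma}=0$ when $\mu=0$ uses the mean-zero convention for $\epsilon$, which the paper assumes implicitly.
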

\Cref{prop:reconciliation-ar} elicits two properties of this linear model: First the loss is described by a quadratic function globally, i.e. also for very large $\delta$. 
In \cref{fig:ar-graph} (middle), we plot $\E_{\delta}[\ell]$ as a function of $\delta$. We observe a `valley' in the loss, in which the expected loss does not at all change with $\delta$. This is a consequence of \cref{lemma:ar-quadratic-loss}, and particularly that if $\delta$ is orthogonal to both $u_{\mu,\gamma}$ and $v_\gamma$ then $\E_{\delta}[\ell] = \E[\ell]$. In higher dimensions $d_A > 2$, since $v_\gamma v_\gamma^\top$ has rank $1$, the `valley' persists in that the loss does not grow at all in $d_A - 2$ dimensions (or $d_A - 1$ if $A$ has mean $\mu=0$), see \cref{fig:ar-graph} (right).

We now show that coefficients in the quadratic form in \cref{lemma:ar-quadratic-loss} is equal to the shift gradient and Hessian. We use that the Gaussian distribution with known variance $\Sigma$ can be parameterized as an exponential family with sufficient statistic $T(A) = \Sigma^{-1}A$ and parameter $\eta = \mu$.\footnote{It can also be parameterized as $T(A) = \Sigma^{-1/2}A, \eta = \Sigma^{-1/2}\mu$, which would yield the same result.}
\begin{restatable}{proposition}{ReconAR}
\label{prop:reconciliation-ar}
    Suppose $A \sim \mathcal{N}(\mu, \Sigma)$ and that $(X, Y, H)$ are generated according to \cref{eq:linear-scm}.
    Then the shift gradient and Hessian are given by
    \begin{align*}
        \sg^1 = \cov(\ell, \Sigma^{-1}A) \qquad\text{and}\qquad \sg^2 = \cov(\ell, \Sigma^{-1}(A - \mu)(A-\mu)^\top \Sigma^{-\top})
    \end{align*}
    and the loss under a mean shift of $\delta$ in $A$ is given by
    \begin{align*}
        \E_{\delta}[\ell] = \E[\ell] + \delta^\top \sg^1 + \tfrac{1}{2}\delta^\top \sg^2\delta,
    \end{align*}
    where $\ell := (Y-\gamma^\top X)^2$ and $\E_{\delta}$ corresponds to taking the mean in the distribution where $A \sim \mathcal{N}(\mu + \delta, \Sigma)$.
\end{restatable}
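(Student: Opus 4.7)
The plan is to deduce this proposition by combining Corollary 1 (simple shift in a single variable) with Lemma 1 (Anchor Regression quadratic loss), rather than recomputing moments of the linear SCM from scratch. The setup naturally fits Definition 3: the intervention set is $\bW = \{A\}$ with $Z = \varnothing$, the shift function is $s(\delta) = \delta$ (so $D_1 = I$, $D_2 = 0$), and the remaining conditional $\P(X, Y, H \mid A)$ is left unmodeled. The Gaussian $A \sim \mathcal{N}(\mu, \Sigma)$ admits the exponential family form
\begin{equation*}
  p(a) = g(a) \exp\bigl(\mu^\top \Sigma^{-1} a - \tfrac{1}{2}\mu^\top\Sigma^{-1}\mu\bigr),
\end{equation*}
with base measure $g(a) \propto \exp(-\tfrac{1}{2} a^\top \Sigma^{-1} a)$, natural parameter $\eta = \mu$, and sufficient statistic $T(A) = \Sigma^{-1} A$; hence $\eta_\delta = \mu + \delta$ exactly encodes a mean shift.

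With this identification, I would invoke Corollary 1. Since $Z = \varnothing$, the conditional covariances reduce to unconditional covariances, and $\epsilon_{T\mid Z} = T(A) - \E[T(A)] = \Sigma^{-1}(A - \mu)$. The corollary therefore immediately gives
\begin{align*}
\sg^1 &= \cov(\ell, \Sigma^{-1} A), \\
\sg^2 &= \cov\bigl(\ell, \Sigma^{-1}(A - \mu)(A - \mu)^\top \Sigma^{-\top}\bigr),
\end{align*}
which are exactly the expressions in the statement.

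It remains to argue that the second-order Taylor expansion is globally exact, not merely a local approximation. For this I would apply Lemma 1 to conclude that $\delta \mapsto \E_\delta[\ell]$ is a quadratic polynomial in $\delta$, of the form $\E[\ell] + \delta^\top u_{\mu,\gamma} + \tfrac{1}{2}\delta^\top v_\gamma v_\gamma^\top \delta$. Because any quadratic function coincides with its own second-order Taylor expansion about any point, we have
\begin{equation*}
  \E_\delta[\ell] = \E[\ell] + \delta^\top \nabla_\delta \E_\delta[\ell]\big|_{\delta=0} + \tfrac{1}{2} \delta^\top \nabla^2_\delta \E_\delta[\ell]\big|_{\delta = 0}\, \delta,
\end{equation*}
and by Definition 4 the gradient and Hessian at $\delta = 0$ are precisely $\sg^1$ and $\sg^2$ computed above. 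This yields the claimed identity.

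There is not really a substantive obstacle here: the work is bookkeeping to verify that the chosen exponential family parameterization $(\eta, T) = (\mu, \Sigma^{-1} A)$ is legitimate and that the residuals $\epsilon_{T \mid Z}$ specialize correctly with $Z = \varnothing$. The only mildly subtle point is justifying the jump from a local Taylor expansion to a global identity, which is handled cleanly by Lemma 1 together with the observation that a quadratic equals its own Taylor polynomial. As a sanity check one could verify directly, using $\ell = (Y - \gamma^\top X)^2$ and the linear dependence of $(X, Y)$ on $A$, that $\E_\delta[\ell]$ is polynomial of degree at most two in $\delta$ and that differentiating twice recovers the stated covariance expressions, but this is not needed once Corollary 1 and Lemma 1 are in hand.
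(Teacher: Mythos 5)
Your proof is correct, but it takes a shorter, more abstract route than the paper's. The paper also invokes \cref{thm:sg-cond-cov-several-shift} to write $\sg^1$ and $\sg^2$ as covariances, but then spends the bulk of the argument explicitly evaluating those covariances via Gaussian third- and fourth-moment calculations (first for $\Sigma = \Id$, then reducing general $\Sigma$ by a whitening transformation $\tilde A = \Sigma^{-1/2}A$), showing $\sg^1 = 2 b_\gamma b_\gamma^\top \mu$ and $\sg^2 = 2 b_\gamma b_\gamma^\top$, and matching these against the coefficients in the expansion of \cref{lemma:ar-quadratic-loss} to conclude exactness. You instead take the covariance representations directly from \cref{thm:sg-cond-cov} with the parameterization $(\eta, T) = (\mu, \Sigma^{-1}A)$, $Z = \varnothing$, $s(\delta) = \delta$, and obtain global exactness purely structurally: \cref{lemma:ar-quadratic-loss} says $\delta \mapsto \E_\delta[\ell]$ is a quadratic polynomial, a quadratic equals its own second-order Taylor expansion at $0$, and by \cref{def:shift-gradient} the Taylor coefficients at $0$ are exactly $\sg^1, \sg^2$. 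This is valid and eliminates both the moment computations and the whitening step. What the paper's longer computation buys is the explicit closed forms $\sg^1 = 2 b_\gamma b_\gamma^\top \mu$ and $\sg^2 = 2 b_\gamma b_\gamma^\top$, which expose the rank-one structure used in the surrounding discussion (the loss \enquote{valley} and the identification with the anchor-regression penalty) and serve as an independent sanity check of \cref{thm:sg-cond-cov-several-shift}; your argument, while cleaner, does not produce those explicit coefficients.
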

This elicits a connection to anchor regression \citep{rothenhausler2021anchor}: 
Under the generative model \cref{eq:linear-scm} and using the quadratic loss $\ell=(Y-\gamma^\top X)^2$ for $\gamma\in\R^{d_X}$, they show that for any $\lambda \geq 0$, the worst-case loss $\E_\delta[\ell]$ over a set $\Delta = \{\delta|\delta\delta^\top \preceq \lambda \E[AA^\top]\}$ equals the objective $\ell_{\text{AR}} = \E[\ell] + \lambda \E[\E[Y - \gamma^\top X|A]^2]$, which is computable from the observed distribution. 

Because of \cref{prop:reconciliation-ar}, $\ell_{\text{AR}}$ \emph{also} equals the solution of the optimization problem \cref{eq:second-order-maximization} over the constraint set $\Delta$. Therefore minimizing the anchor regression objective over $\gamma$ or minimizing \cref{eq:second-order-maximization} over $\gamma$ will lead to the same estimator. 
Since our proposed Taylor approximation in \cref{eq:second-order-maximization} does not assume linearity, one could use the approximation to extend the rationale of anchor regression of minimizing the worst-case loss to non-linear models. This however comes at the cost of not optimizing the exact worst-case loss, but rather an approximation, whose quality is given by \cref{thm:taylor-approximation-bound}. Further, this would involving a minimax problem, minimizing \cref{eq:second-order-maximization} over models $f$, and there are questions, such as convexity and tractability, which would need to be solved.

\subsection{Estimating the shift gradient and Hessian for conditional on binary variables}%
\label{sec:shift_gradient_and_hessian_binary}
To build intuition for the shift gradient and Hessian, we here give an example where we condition on variables $Z$ that take a finite number of values and write out explicit expressions for the shift gradient and Hessian. However, we emphasize, that in most practical scenarios, one will not have to work out the shift gradient and Hessian explicitly, but can simply estimate them as covariances from the data (\cref{thm:sg-cond-cov-several-shift}).

\begin{example}[Shift Function of Discrete Parents]
    Consider a conditional distribution $W|Z$ where $Z$ takes values in a finite set $\cZ$. This is for instance the case if $Z = (Z_1, \ldots, Z_d)$ where each $Z_i$ is binary, so $|\cZ| = 2^d$. Instead of a shift $\eta(Z) + \delta$, where the parameter increases by the same amount for all values of $Z$, we may consider a shift $\eta(Z) + s(Z;\delta)$ where $s(Z;\delta) = \sum_{z\in\cZ} \delta_z 1_{Z=z}$, meaning that the shift is different in each category $Z$. Since $\eta(Z)$ only takes a finite number of variables, this shift corresponds to an arbitrary change in $\eta(Z)$.
    
    $s(Z;\delta)$ is a differentiable function in $\delta$, and if $d_T = 1$ the shift gradient is a $(1\times 2^d)$-row vector, $\nabla_{\delta}s(Z;\delta) = (1_{Z=z})_{z\in \cZ}$, and the shift Hessian vanishes, $\nabla^2_{\delta}s(Z;\delta) = 0$. 
    Enumerating $\cZ = \{1, \ldots, 2^d\}$, the $i$'th entry in the shift gradient becomes
    \begin{equation*}
        (\sg^1)_i = \E\left[1_{Z = i}\cov\bigg(\ell, T(W)\bigg| Z\bigg)\right] = \P(Z = i)\cov(\ell, T(W)|Z=i),
    \end{equation*}
    and the $i,j$'th entry of the shift Hessian becomes $0$ if $j\neq i$ and else
    \begin{equation*}
        (\sg^2)_{i,i} = \E\left[1_{Z=i} \cov_\delta\left(\ell, \epsilon_{T|Z}^{\otimes 2}\bigg|Z\right)\right] = \P(Z = i)\cov(\ell, \epsilon_{T|Z}^{\otimes 2}|Z=i).
    \end{equation*}
    
    Consider for example the case where both $W$ and $Z$ are binary. Then $T(W) = W$ and $s(Z;\delta) = 1_{Z=0}\delta_0 + 1_{Z=1}\delta_1$ and $s^{(1)} = (1_{Z=0}, 1_{Z=1})$ and $s^{(2)} = 0$.
    The conditional covariance can be evaluated by residualizing only one of the variables, $\E[\cov(A, B|C)] = \E[A(B-\E[B|C])]$, so we can chose to residualize only $W$ (for $\sg^1$) or $(W - \E[W|Z=i])^2$ (for $\sg^2$). Finally, if we let $p_i = \P(W=1|Z=i)$ and use that $\E[W|Z=i] = p_i$ and $\E[(W - p_i)^2|Z=i] = \var(W|Z=i) = p_i(1-p_i)$, we get that
    \begin{align*}
        \sg^1 &= \E\left[\begin{pmatrix} 
        p_0 \cdot \ell \cdot (W - p_0)\\
        p_1 \cdot \ell \cdot (W - p_1)\\
        \end{pmatrix}\right],
    \end{align*}
    and
    \begin{align*}
        \sg^2 &= \E\left[\begin{pmatrix}\ell p_0 \big\{(W - p_0)^2 - p_0(1-p_0)\big\} & 0 \\ 0 & \ell p_1 \big\{(W - p_1)^2 - p_1(1-p_1)\big\} \end{pmatrix}\right].
    \end{align*}
\end{example}

\subsection{Comparison of variance of reweighting and Taylor estimates in the lab ordering example}
\label{ex:lab_test_variance_plots}
To compare the bias and variance of the Taylor and the importance sampling estimates of the shifted loss, we simulate data from the following, artificial, generative model (which is the same generative model that was used to construct the loss landscape in~\cref{fig:lab_testing_shift} (right)). 
\begin{align*}
    \text{Age}&\sim\cN(0, 0.5^2) \\
    \P(\text{Disease} = 1 |  \text{Age}) &= \operatorname{sigmoid}(0.5 \cdot \text{Age} - 1) \\
    \P(\text{Order} = 1 |  \text{Disease, Age}) &= \operatorname{sigmoid}(2 \cdot \text{Disease} + 0.5 \cdot \text{Age} - 1)\\
    \text{Test Result} |  \text{Order} = 1, \text{Disease} &\sim 
    \cN(-0.5 + \text{Disease}, 1)
\end{align*}
where if $\text{Order} = 0$, the test result is a placeholder value of zero.
\begin{figure}[t]
    \centering
    \resizebox{\textwidth}{!}{\input{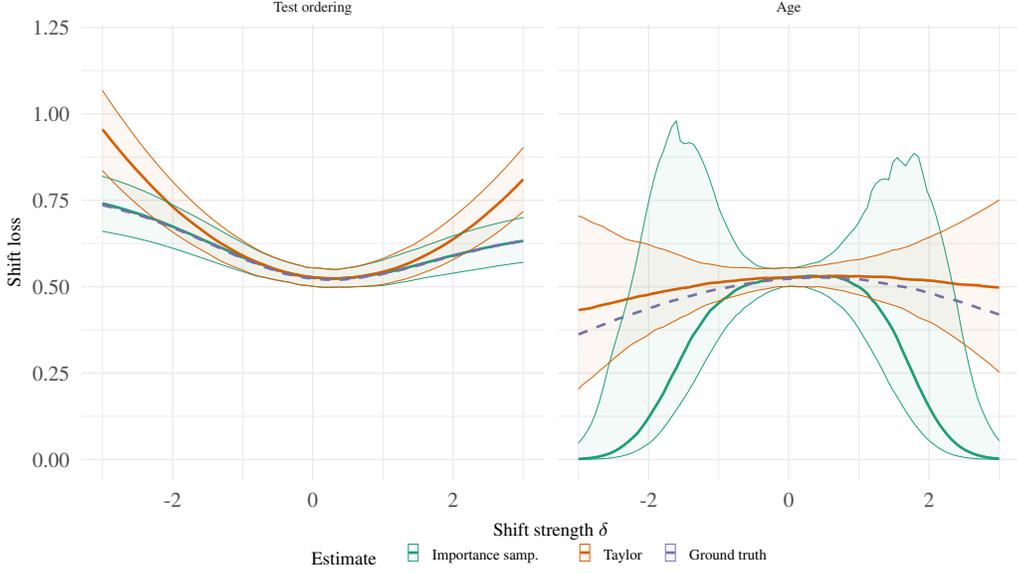}}
    \caption{We plot the mean and confidence intervals of $\taylor$ and $\ipw$ when the shifted loss as in the lab test ordering example \cref{ex:lab_testing_rates}. (Left) We consider a shift in the logits of ordering lab tests from $\eta(Z)$ to $\eta(Z) + \delta_0$. (Right) We consider a shift in the mean of Age. In the observed distribution $\eta = \mu/\sigma = 0$ and we shift to a mean of $\eta = \delta$. 
    }
    \label{fig:compare-variances-ipw-and-taylor}
\end{figure}

We consider either a shift in the logits of ordering lab tests $\eta_\delta(Z) = \eta(Z) + \delta$ (\cref{fig:compare-variances-ipw-and-taylor} left) or a mean shift in the Gaussian distribution of age $\eta_\delta = \delta$ (\cref{fig:compare-variances-ipw-and-taylor} right). 
For each $\delta$ in a grid, we compute estimates $\ipw$ and $\taylor$ of the loss under a shift of size $\delta$, 
We repeat this $n=1{,}000$ times, and plot the mean and point-wise prediction intervals (the pointwise $0.05$ and $0.95$ quantiles) for $\ipw$ and $\taylor$.
We also simulate ground truth data from $\P_\delta$, to compute the actual loss under shift.

For shifts in the binary variable (\cref{fig:compare-variances-ipw-and-taylor}, left), both estimates capture the loss well for small shifts, but as $\delta$ gets larger, the quadratic approximation increasingly deviates from the true mean; the importance sampling estimate remains very close to the ground truth shifted loss.
On the contrary, for the Gaussian mean shift (\cref{fig:compare-variances-ipw-and-taylor}, right), the importance sampling weights are ill-behaved, and the variance dramatically increases as $\delta$ becomes larger. This supports the intuition, that while importance sampling tends to work well for binary variables, the variance can be large in continuous distributions, such as the Gaussian distribution.

\subsection{Comparison of theoretical variance of reweighting and Taylor estimates}\label{sec:variance-ipw-vs-taylor}
\begin{example}\label{ex:variance-ipw-vs-taylor}
    To demonstrate the reduction in variance obtained from using the Taylor approximation of the importance weights, we consider a simple example where $\P(X) \sim \cN(0, 1)$ and $\P_\delta(X) \sim \cN(\delta, 1)$ and we wish to estimate $\E_\delta[\ell(X)]$ for some loss function $\ell(X)$.\footnote{In practice one would not use importance sampling estimation for such a simple shift, but use other approaches, such as analytically work out an estimate of $\E_\delta[\ell]$.}
    The importance sampling weights are given by $w_\delta(X) = \exp(-\tfrac{1}{2}\delta^2 + X\cdot \delta)$, and the shift gradient and Hessians are $\sg^1 = \E[\ell(X)X]$ and $\sg^2 = \E[\ell(X) X^2]$. 
    
    Therefore samples $X_1, \ldots, X_n$ from $\P$ consider the estimators, for any loss function $\ell(X)$, two estimators of $\E_\delta[\ell]$ are
    \begin{align*}
        \hat\mu_{\text{IS}} = \frac{1}{n}\sum_{i=1}^n w_\delta(X_i)\ell(X_i) \quad\text{and}\quad
        \hat\mu_{\text{Taylor}} = \frac{1}{n}\sum_{i=1}^n \ell(X_i) + \delta\cdot \ell(X_i) X_i + \tfrac{1}{2} \delta^2 \ell(X_i)X_i^2, 
    \end{align*}
    and the variances of the estimators are
    \begin{align*}
        \var(\hat\mu_{\text{IS}}) &= \frac{\E[\{\ell(X + 2\delta)\}^2]}{n}\exp(\delta^2) \\
        \var(\hat\mu_{\text{Taylor}}) &= \frac{\var\left(\ell(X) + \delta X\ell(X) + \tfrac{1}{2}\delta^2 X^2 \ell(X)\right)}{n}.
    \end{align*}
    The variance of $\hat\mu_{\text{Taylor}}$ grows like $\delta^4$ and the variance of 
    $\hat\mu_{\text{IS}}$ grows exponentially fast (unless $\E[\{\ell(X + 2\delta)\}^2]$ also diminishes exponentially fast, which is generally not the case), and so except for small $\delta$, the variance of the importance sampling estimator will be orders of magnitude larger than the variance of the estimator using the Taylor approximation. 
    While, $\hat\mu_{\text{IS}}$ is an unbiased estimator of $\E_\delta[\ell(X)]$ and $\hat\mu_{\text{Taylor}}$ is a biased, the overall mean squared error will be smaller for the Taylor approximation, unless the bias of the Taylor approximation also grows exponentially. 
    
    For the sake of analysis, consider the simple example $\ell(X) = X$. In this case, the Taylor estimate is unbiased because $\E_\delta[X] = \delta$ is a linear function of $\delta$, so the quadratic approximation is adequate. Further, the variances are given by
    \begin{align*}
        \var(\hat\mu_{\text{IS}}) = \frac{\exp(\delta^2)(1 + 4\delta^2)-\delta^2}{n} \quad\text{and}\quad \var(\hat\mu_{\text{Taylor}}) =\frac{1 + 5\delta^2 + \tfrac{15}{4}\delta^4}{n}. 
    \end{align*}
    In particular, the variance of the importance sampling estimate grows like $\exp(\delta^2)$ while that of the Taylor estimate grows like $\delta^4$.
\end{example}

\subsection{Comparison of variance of reweighting and Taylor estimates in a simple synthetic example}\label{experiment:compare-ipw}
\begin{figure}[t]
    \centering
    \resizebox{\textwidth}{!}{\input{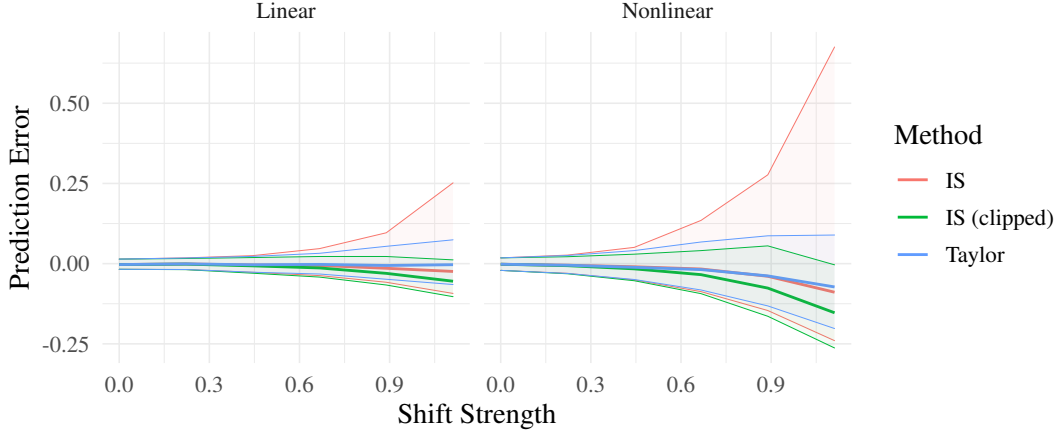}}
    \caption{Median and quantiles of the error in predicting $\E_\delta[\ell]$ under a shift $\delta$. }
    \label{fig:compare-ipw}
\end{figure}
In this experiment, we compare the variance of importance sampling and Taylor estimates in a simple synthetic example. 
We simulate data from $\P$ where $X \in \R^{3}$ and $Y\in\R^{1}$ depend either linearly or quadratically on $W\in\R^{3}$,
\begin{equation*}
    W \sim \mathcal{N}(0, \operatorname{Id}_3) \qquad \text{and}\qquad 
    \begin{pmatrix} X \\ Y \end{pmatrix} = (\operatorname{Id}_4 - B)^{-1}M(W + \alpha (W \odot W) + \epsilon),
\end{equation*}
where $\odot$ refers to entrywise multiplication, $\epsilon \sim \mathcal{N}(0, \operatorname{Id}_4)$, $\alpha$ is either 0 (linear) or $\frac{1}{2}$ (nonlinear) and
\begin{align*}
    B := \begin{pmatrix}
       2 & 1 & 0 & 1\\
       2 & 2 & 0 & 3\\
       3 & 3 & 0 & 2\\
       4 & 2 & 4 & 0
    \end{pmatrix} \qquad\text{and}\qquad
    M := \begin{pmatrix}
       2 & 1 & 0 \\
       2 & 1 & 1 \\
       2 & 2 & 0 \\
       4 & 1 & 1
    \end{pmatrix}.
\end{align*}

On the simulated data from $\P$, we then fit a linear predictor $f(X)$ of $Y$, and consider a shift in the mean of $W$ from $\P(W) \sim \mathcal{N}(0, \operatorname{Id}_3)$ to $\P_\delta(W) \sim \mathcal{N}(\delta, \operatorname{Id}_3)$, where $\delta = [s, s, s]^\top$ for some shift strength $s > 0$. We then compute the shift gradient $\sg^1 = \cov(\ell,W)$ and Hessian $\sg^2 = \cov(\ell,WW^\top)$, and approximate $\E_\delta[\ell]$ by $\taylor$ (see \Cref{eq:taylor-approx}).
In the linear data, the Taylor approximation is exact (see \Cref{sec:linear-model-exact}), such that any prediction error can be attributed to finite-sample fluctuation, whereas both model misspecification and finite-sample fluctuation contribute to the error in the nonlinear setting. 

Similarly, we estimate $\E_\delta[\ell]$ by importance sampling, $\E_\delta[\ell] = \E[w_\delta(W) \ell] \approx \tfrac{1}{n} \sum w_\delta(W) \ell$, where $w_\delta(W) = \frac{\P_\delta(W)}{\P(W)} = \delta^\top W - \frac{1}{2}\delta^\top \delta$, and compare this to ground truth data sampled from $\P_\delta$; we do the same for an importance sampling estimator with weights `clipped' at the $99\%$ quantile. 

We compare the predicted loss $\E_\delta[\ell]$ by actually simulating data from $\P_\delta$ and evaluating $\E_\delta[\ell]$ (where $\ell$ is still the model trained on data from $\P$). We then compute the prediction error, as the difference $\E_\delta[\ell] - \taylor$ or $\E_\delta[\ell] - \ipw$. 

For a number of different shift strengths $s$, we repeat this procedure $M=1{,}000$ times, and in \cref{fig:compare-ipw} we plot the median and a confidence interval defined by the $2.5$ and the $97.5\%$ quantiles of the prediction error.

In the linear case, both the importance sampling and the Taylor approximation retains a median error close to $0$, with the variance of $\ipw$ being larger than $\taylor$. The clipped importance sampling estimate has a smaller variance than that of ordinary importance sampling, though the median deviates further from $0$, and the variance is not smaller than that of the Taylor estimate. 

In the non-linear cases, all three models underestimate the shifted loss. For $\taylor$, this happens because as the mean of $W$ shift, the mean shift is amplified by the non-linearity, such that the quadratic approximation of the loss is an underestimate. 
While the variance of the clipped importance sampling is smaller than the variance of the ordinary importance sampling estimate and comparable to the variance of the Taylor estimate, this prediction is further from $0$ than the Taylor estimate.

Since importance sampling methods are known to produce very large outliers, the use of the median and quantiles, as opposed to the mean an confidence intervals based on the standard deviation, is favouring importance sampling; the Taylor method looks even more favourable if we instead plot the mean and standard deviations. 

\subsection{The bound in \texorpdfstring{\cref{thm:taylor-approximation-bound}}{} under covariate shift}%
\label{subsec:example-of-bound}
The bound in \cref{thm:taylor-approximation-bound} is in a general form that applies to any shift in the CEF framework. In concrete cases, the bound can be made simpler, as we now demonstrate.

Suppose that $X$ is a covariate that is Gaussian distributed $\mathcal{N}(0, 1)$.
Also consider a prediction target $Y \coloneqq f_0(X) + \epsilon$ for some function $f_0$ and noise variable $\epsilon$ that is independent of $X$.

Suppose we consider a predictor $\hat{Y} = f(X)$ and apply our proposed methodology to estimate the mean squared prediction error when predicting $Y\approx f(X)$ under a mean shift of size $\delta\in\R$ to $X$. When we only consider shifts in the mean (and not the variance), the sufficient statistic is $T(X)=X$.
We can use \cref{thm:taylor-approximation-bound} to bound the prediction error.
In this setting, 
\begin{align*}
  \ell = (Y - \hat{Y})^2 = (f_0(X) - f(X) + \epsilon)^2 \quad\text{and}\quad \epsilon_{t\cdot \delta T} = X - t\cdot\delta,
\end{align*}
such that the bound in \cref{thm:taylor-approximation-bound} becomes
\begin{align*}
    &\bigg|\E_\delta[\ell] - \taylorPop\bigg| \\ 
    &\leq \tfrac{1}{2} \sup_{t\in[0,1]} \bigg|\cov_{t\cdot\delta}\big((f_0(X) - f(X) + \epsilon)^2, (X - t\cdot\delta)^2\big) \\
    &\qquad
    - \cov\big((f_0(X) - f(X) + \epsilon)^2, (f_0(X) - f(X) + \epsilon)^2, X^2\big)\bigg| \cdot \delta^2.
\end{align*}
The subscript $\cov_{t\cdot\delta}$ indicates that the covariance is taken in the distribution $\mathcal{N}(t\cdot\delta, 1)$; instead we can write this in the observed distribution, and add $t\cdot\delta$ to $X$. Further, the terms relating to $\epsilon$ disappear, as they are independent of $X$. Thus, if we define the modelling error $g(x) = f_0(x) - f(x)$, we can write
\begin{align*}
  \bigg|\E_\delta[\ell] - \taylorPop\bigg| \leq \tfrac{1}{2} \sup_{t\in[0,1]} \bigg|\cov\big(g(X+t\cdot\delta)^2 - g(X)^2, X^2\big)
  \bigg| \cdot \delta^2.
\end{align*}
We can bound the covariance using the inequality $\cov(A, B) \leq \sqrt{\var(A)\var(B)}$,
\begin{align*}
  \bigg|\E_\delta[\ell] - \taylorPop\bigg| \leq \tfrac{1}{2} \sup_{t\in[0,1]} \bigg|\sqrt{\var\left((g(X+t\cdot\delta)^2 - g(X)^2\right)}\bigg|\cdot \bigg| \sqrt{\var(X^2)}\bigg| \cdot \delta^2.
\end{align*}
The first term on the right hand side is the variance of the difference of approximation error in $X$ and in $X + t\delta$.
If we are willing to make assumptions on the quality of the approximation $f$, we can simplify this further. 
For example, we can assume that $|g(x)^2 - g(y)^2| \leq C\cdot |x - y|^2$, meaning that the squared error of $f_0(x) - f(x)$ does not change faster than quadratically in $x$. In that case, we get
\begin{align*}
  \bigg|\E_\delta[\ell] - \taylorPop\bigg| \leq \tfrac{1}{2} C \bigg| \sqrt{\var(X^2)}\bigg| \cdot \delta^4.
\end{align*}
In some cases, one can sharpen this bound by using prior knowledge about the data generating mechanism (for example, the data generating function $f_0$ may be bounded). 

\section{Limitations of worst-case conditional subpopulation shift for defining plausible robustness sets}%
\label{sec:comparison_to_subpopulation_shift_app}

For the example in~\cref{sec:illustrative_example_worst_case_parametric_shifts}, we can contrast the type of shift we consider with the worst-case $(1 - \alpha)$-conditional subpopulation shift considered by \citet{Subbaswamy2020-vr}.

In this section, we will make the following points: First, worst-case conditional $(1 - \alpha)$-subpopulation shifts can be too pessimistic, with even moderate values of $\alpha$ leading to implausible conditional distributions. Second, we will argue that parametric robustness sets enable more fine-grained control over the set of plausible shifts, leading to more informative estimates of worst-case risk. Overall, we argue that the two approaches are complementary, with different strengths.

Before we proceed, we define a conditional $(1 - \alpha)$ subpopulation shift. A $(1 - \alpha)$ subpopulation shift in the conditional distribution $\P(O |  Y)$ is defined by a weighting function $h: \cO \times \cY \mapsto [0, 1]$, which has the property that $\E[h(O, Y) |  Y] = 1 - \alpha$ for all values of $Y$.  This can be used to construct a worst-case objective, which measures the worst-case loss under such a shift:
\begin{align}
  \sup_{h: \{0, 1\}^2 \mapsto [0, 1]} \qquad & \frac{1}{(1 - \alpha)} \E[h(O, Y) \mu(O, Y)]\label{eq:cond_subpopulation_original} \\
  \text{s.t.} \qquad & \E[h(O, Y) |  Y = y] = 1 - \alpha, \quad \text{for } y \in \{0, 1\} \nonumber
\end{align}
where $\mu(O, Y) \coloneqq \E[\ell(Y, f) |  O, Y]$, for a predictor $f$ and loss $\ell$.  This has the effect of leaving the distribution $\P(Y)$ untouched, while changing the conditional distribution $\P(O |  Y)$. Throughout this section, we will use the same predictor $f(O, L)$ described in~\cref{sec:illustrative_example_worst_case_parametric_shifts}. The rest of this section is structured as follows:

In~\cref{sec:feasible_conditional_subpopulations}, we derive the feasible set of conditional distributions $\P(O |  Y)$ implicitly considered by this objective in the simple generative model of Section~\ref{sec:illustrative_example_worst_case_parametric_shifts}, which only involves variables $O, L$ and $Y$.  We do so by showing that (for discrete $O, Y$), maximizing~\cref{eq:cond_subpopulation_original} over $h$ is equivalent to solving a linear program, where we can characterize the constraints on $h$ exactly, and translate them into constraints on $\P(O = 1|  Y = 1), \P(O = 1 |  Y = 0)$.  Here, we show that the resulting feasible set is quite large, even for moderately large subpopulations.  In particular, whenever  $(1 - \alpha) < \min \{\P(O = 1 |  Y = 0), \P(O = 0 |  Y = 1) \}$, all conditional distributions are possible.  

In~\cref{sec:worst_case_conditional_subpopulations_in_section_sec}, we derive the value of $h$ that maximizes~\cref{eq:cond_subpopulation_original}, and show that, as we vary $\alpha$, the worst-case shift is always in the same \enquote{direction} probability space:  Healthy patients $(Y = 0)$ are tested more, and sick patients $(Y = 1)$ are tested less, and for $\alpha < 0.27$, the worst-case subpopulation shift is the (unrealistic) scenario where healthy patients are always tested, and sick patients are never tested.

In~\cref{sec:iterating_with_domain_experts}, we illustrate how this type of behavior can be avoided with our approach.  We first give a parameterized shift function $s(Z; \delta_0, \delta_1)$ such that we can reach any conditional distribution of $\P(O |  Y)$, for sufficiently large values of $\delta_0, \delta_1$.  We then demonstrate how an iterative process might play out with domain experts, where we consider different constraint sets until we find a constraint set that contains plausible shifts.

\subsection{Feasible conditional subpopulations in~\texorpdfstring{\cref{sec:illustrative_example_worst_case_parametric_shifts}}{}}%
\label{sec:feasible_conditional_subpopulations}

For the simple example in~\cref{sec:illustrative_example_worst_case_parametric_shifts}, we give a self-contained derivation of the feasible region for $1 - \alpha$ conditional subpopulations in the distribution $\P(O |  Y)$.   The advantage of working with this simple generative model is that the conditional distribution can be described by only two numbers, $\P(O = 1 |  Y = 1)$ and $\P(O = 1 |  Y = 0)$, and so we can visualize the resulting conditional distribution.

Because $O, Y$ are discrete, the worst-case subpopulation in this simple example can be solved via a linear program, for a fixed $\alpha$.  We have an optimization problem in two variables, since $h_{11}\P(O = 1 |  Y = 1) + h_{01}\P(O = 0 |  Y = 1) = 1 - \alpha$, and likewise for $h_{10}, h_{00}$, where $h_{ij} = h(O = i, Y = j)$.  We also have the constraint that each variable must live in $[0, 1]$.  Meanwhile, the loss to maximize is a linear function, as an expectation of $\E[h(O, Y) \mu(O, Y)]$, where $\mu(O, Y)$ takes on four possible values, where we write $p_{ij} = \P(O = i |  Y = j)$, and $\mu_{ij}$ similarly.
\begin{align}
  \max_{h \in \R^{2 \times 2}} \quad & h_{00} \mu_{00} + h_{10} \mu_{10} + h_{01} \mu_{01} + h_{11} \mu_{11} \label{eq:cond_subpopulation_lp} \\
  \text{s.t.,}\quad & h_{11} p_{11} + h_{01} (1 - p_{11}) = 1 - \alpha \nonumber \\
       & h_{10} p_{10} + h_{00} (1 - p_{10}) = 1 - \alpha \nonumber \\
       & 0 \leq h_{ij} \leq 1, \forall i, j  \nonumber 
\end{align}
This linear program is simple enough to solve by hand, and we will do here to build intuition.  In this section, we begin by characterizing the feasible region of $h$, and then translating that into a feasible region for $\P_h(O |  Y)$, which we can plot in two dimensions.

\textbf{Characterizing feasible values of $h$}: Here, we focus on characterizing the feasible set that $h$ can lie in, as a way of characterizing the feasible set for $\P(O |  Y)$. From the constraints, we can write that 
\begin{align*}
  h_{11} p_{11} + h_{01} (1 - p_{11}) &= 1 - \alpha &\implies&& h_{01} &= \frac{1 - \alpha - h_{11} p_{11}}{1 - p_{11}}\\
  h_{10} p_{10} + h_{00} (1 - p_{10}) &= 1 - \alpha & \implies&& h_{00} &= \frac{1 - \alpha - h_{10} p_{10}}{1 - p_{10}}
\end{align*}
There are only two constraints on $h_{11}$: Those directly imposed by $0 \leq h_{11} \leq 1$, and those which are imposed by the equality constraint with $h_{01}$ and the fact that $0 \leq h_{01} \leq 1$.  For the latter, with some algebra we can write that 
\begin{align*}
  0 &\leq \frac{1 - \alpha - h_{11}p_{11}}{1 - p_{11}} \leq 1
                              &\implies&&  \frac{p_{11} - \alpha}{p_{11}} &\leq h_{11} \leq \frac{1 - \alpha}{p_{11}}
\end{align*}
So that the constraints on $h_{11}$ become
\begin{equation}
  \max\left\{0, \frac{p_{11} - \alpha}{p_{11}}\right\} \leq h_{11} \leq \min\left\{1, \frac{1 - \alpha}{p_{11}}\right\} \label{eq:cond_subpopulation_h11_constraint}
\end{equation}
which recovers our intuition that if $\alpha = 0$, it must be that $h_{11} = 1$ and $h_{01} = 1$.

\textbf{Bounding feasible values of $\P_{h}(O |  Y)$} 
The parameters $h$ can be understood as importance weights whose expectation is $1 - \alpha$ instead of $1$, that reweight $\P$ to a new distribution $\P_h$ when appropriately normalized.  To compute conditional probabilities $\P_h(O = i |  Y = j)$ under the new distribution, we can compute the expectation of $\1{O = i, Y = j}$, and normalize by $\P(Y = j)$.
\begin{align*}
  \P_{h}(O = i, Y = j) &= \frac{1}{1 - \alpha} \E[h(O, Y) \1{O = i, Y = j}] = \frac{h_{ij}}{1 - \alpha} \P(O = i, Y = j) \\
  \implies \P_{h}(O = i |  Y = j) &= \frac{h_{ij}}{1 - \alpha} \P(O = i |  Y = j)
\end{align*}
where the implication follows from the fact that $\P_{h}(Y) = \P(Y)$.  This allows us to translate bounds on $h_{ij}$ directly into bounds on $\P_h(O = i |  Y = j)$.  Making use of~\cref{eq:cond_subpopulation_h11_constraint}, we can write that
\begin{equation*}
  \max\left\{0, \frac{p_{11} - \alpha}{p_{11}}\right\} \cdot \frac{p_{11}}{1 - \alpha} \leq \P_h(O = 1 |  Y = 1) \leq \min\left\{1, \frac{1 - \alpha}{p_{11}}\right\} \cdot \frac{p_{11}}{1 - \alpha}
\end{equation*}
which yields 
\begin{equation*}
  \max\left\{0, \frac{p_{11} - \alpha}{1 - \alpha}\right\} \leq \P_h(O = 1 |  Y = 1) \leq \min\left\{\frac{p_{11}}{1 - \alpha}, 1 \right\}
\end{equation*}

We can apply a similar logic to $h_{10}$, which is identical except for $p_{11}$ being replaced by $p_{10}$, yielding 
\begin{equation*}
  \max\left\{0, \frac{p_{10} - \alpha}{1 - \alpha}\right\} \leq \P_h(O = 1 |  Y = 0) \leq \min\left\{\frac{p_{10}}{1 - \alpha}, 1 \right\}
\end{equation*}

\textbf{Visualizing the constraint set}: \Cref{fig:subpopulation_worst_case_shift} gives feasible conditional distributions under different values of $\alpha$.  We can observe that when $\alpha = 0.8$, all conditional distributions are feasible, including the distribution where $\P(O = 1 |  Y = 0) = 1$ and $\P(O = 1 |  Y = 1) = 0$, representing the case where every healthy patient gets tested, and no sick patients receive a test.  This is generally possible in this example whenever $1 - \alpha < \min \{\P(O = 1 |  Y = 0), \P(O = 0 |  Y = 1)\}$, as it permits the following subpopulation function, which yields this result.
\begin{equation*}
  h(O = o, Y = y) = \frac{1 - \alpha}{\P(O = o |  Y = y)} \1{o \neq y}
\end{equation*}

\subsection{Worst-case conditional subpopulation shifts}%
\label{sec:worst_case_conditional_subpopulations_in_section_sec}

\begin{figure}[t]
\centering
  \begin{subfigure}[t]{0.3\textwidth}
  \centering
    \includegraphics[width=\textwidth]{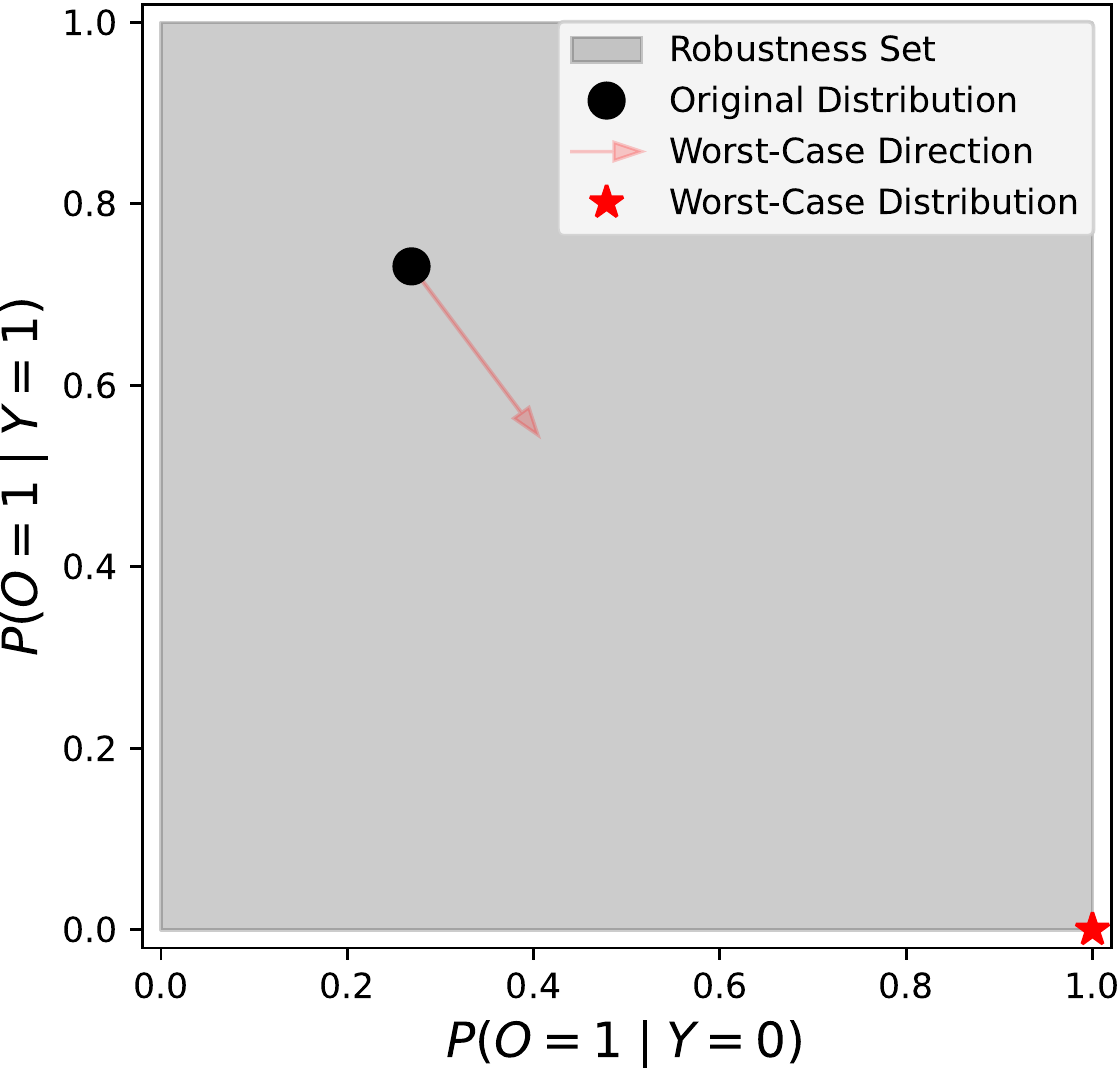}
  \caption{$(1 - \alpha) = 0.2$}%
  \label{subfig:subpopulation_worst_case_alpha80}
  \end{subfigure}
  \begin{subfigure}[t]{0.3\textwidth}
  \centering
    \includegraphics[width=\textwidth]{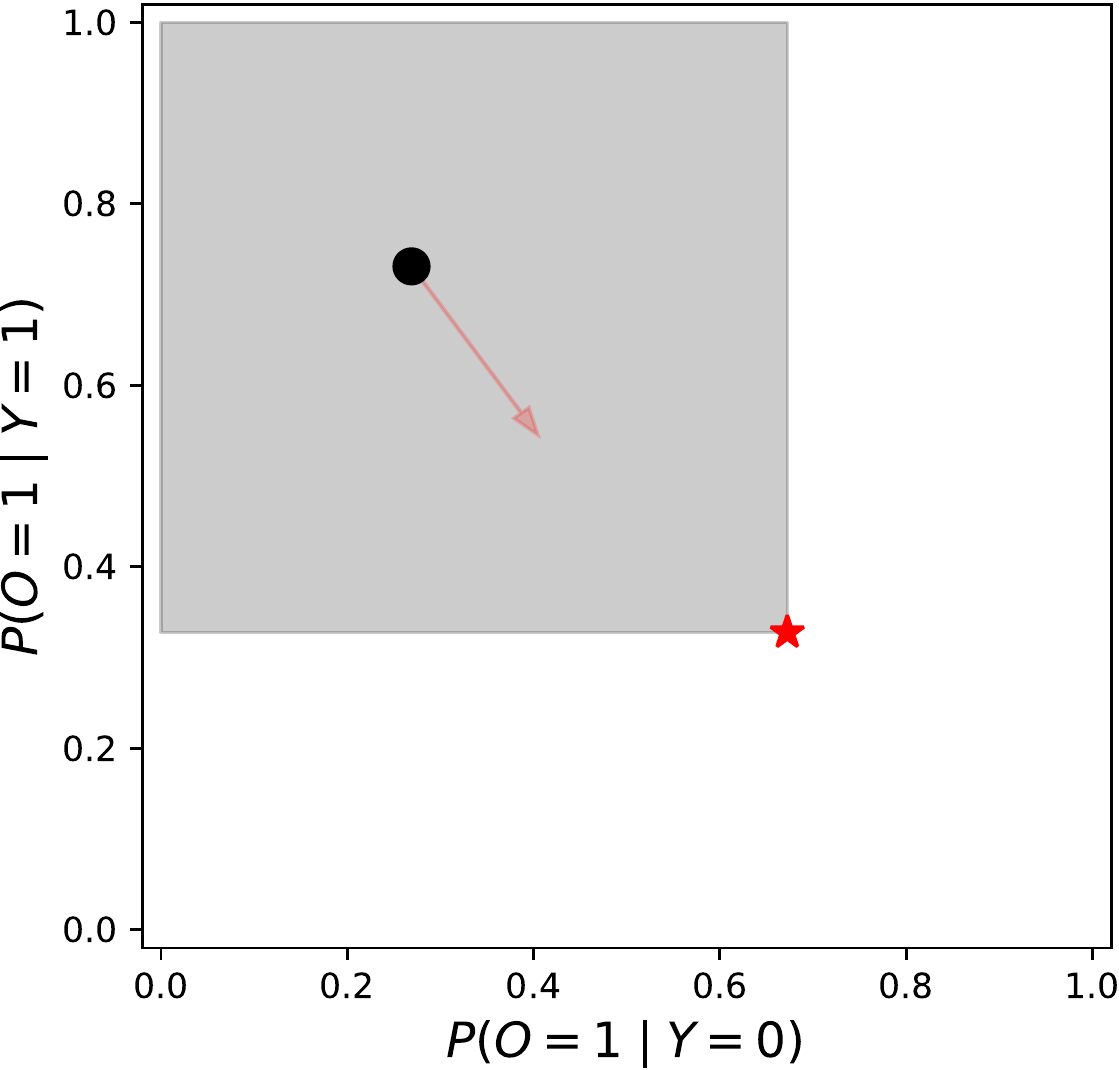}
  \caption{$(1 - \alpha) = 0.4$}%
  \label{subfig:subpopulation_worst_case_alpha60}
  \end{subfigure}
  \begin{subfigure}[t]{0.3\textwidth}
  \centering
    \includegraphics[width=\textwidth]{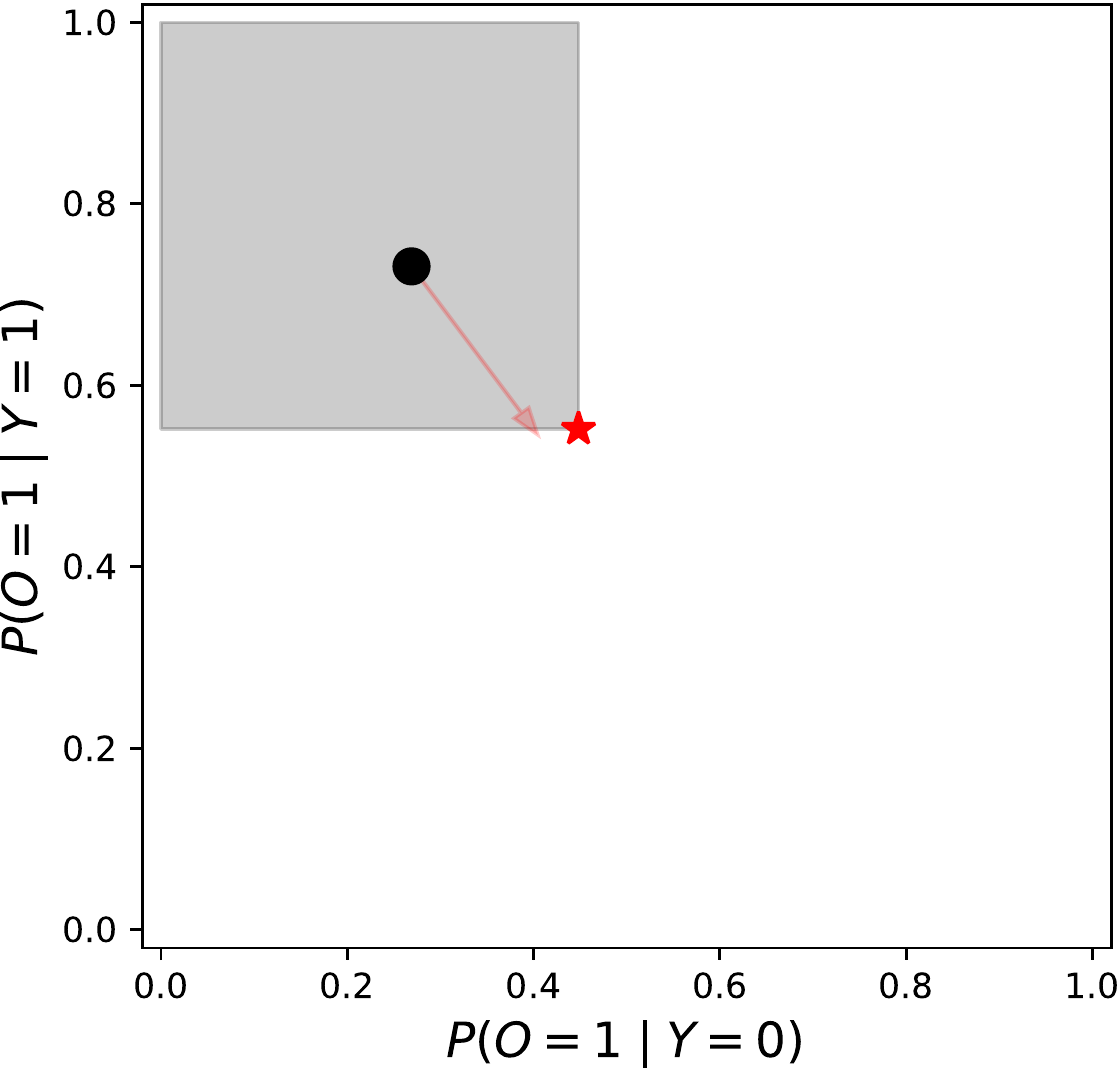}
  \caption{$(1 - \alpha) = 0.6$}%
  \label{subfig:subpopulation_worst_case_alpha40}
  \end{subfigure}
  \caption{Feasible sets, worst-case directions, and worst-case solutions for a $(1 - \alpha)$ subpopulation shift in the conditional distribution $\P(O |  Y)$ for differing values of $\alpha$. Worst-case directions are computed using~\cref{eq:cond_subpopulation_worst_case_direction}, as unit-norm vectors re-scaled to fit in the plot, and the colored dots give the worst-case solutions, all of which lie in the lower-right corner of the constraint set. The original conditional distribution is given by the black dot.}%
\label{fig:subpopulation_worst_case_shift}
\end{figure}

Given the constraint set which describes the feasible set of conditional distributions under the $(1 - \alpha)$-conditional subpopulation objective, we can derive the worst-case conditional distribution. Here, since $Y, O$ are both binary, the expected loss under a new distribution $\P_h$ is given by 
\begin{equation*}
  \E_h[\ell] = \sum_{y, o} \mu(o, y) \P_{h}(O = o |  Y = y) \P(Y = y)
\end{equation*}
which we can write in terms of the constrained probabilities $\P_{h}$ as follows, where $q_{11} \coloneqq \P_{h}(O = 1 |  Y = 1)$ and $q_{10} \coloneqq \P_{h}(O = 1 |  Y = 0)$
\begin{equation*}
  \P(Y = 1) [\mu(1, 1) q_{11} + \mu(0, 1)(1 - q_{11})] + \P(Y = 0)[\mu(1, 0) q_{10} + \mu(0, 0) (1 - q_{10})]
\end{equation*}
which also gives us a direction in which the loss is maximized, since the loss is given by 
\begin{equation}\label{eq:cond_subpopulation_worst_case_direction}
  \E_h[\ell] = q_{11} \cdot \P(Y = 1) \cdot (\mu(1, 1) - \mu(0, 1)) + q_{10} \P(Y = 0) \cdot (\mu(1, 0) - \mu(0, 0)) + C
\end{equation}
where $C = \P(Y = 1) \mu(0, 1) + \P(Y = 0) \mu(0, 0)$. Since $q_{11}, q_{10}$ can be optimized independently, the worst-case solution is given by taking the maximum value of $q_{11}$ if $\mu(1, 1) > \mu(0, 1)$ and the minimum value if $\mu(1, 1) < \mu(0, 1)$, and likewise taking the maximum value of $q_{10}$ if $\mu(1, 0) > \mu(0, 0)$, and the minimum value otherwise.  If $\mu(1, 1) = \mu(0, 1)$ or $\mu(1, 0) = \mu(0, 0)$, then the objective is unaffected by the choice of $q_{11}$ or $q_{10}$ respectively.

\textbf{Visualizing the worst-case conditional distributions} The worst-case directions on the probability scale, and the resulting worst-case conditional distribution obtained by solving~\cref{eq:cond_subpopulation_lp}, are given in~\cref{fig:subpopulation_worst_case_shift}.  The red line arrow visualizes the direction from~\cref{eq:cond_subpopulation_worst_case_direction}, and the worst-case distribution is the point which is furthest in this direction in the constraint set.  Here, we are finding the worst-case accuracy of the same predictive model $f(O, L)$ described in \cref{sec:illustrative_example_worst_case_parametric_shifts}.  We can observe that the worst-case loss is obtained by seeking to reverse the correlation between $Y$ and $O$, decreasing the probability that a sick patient $(Y = 1)$ gets a test ordered, and increasing the probability that a healthy patient $(Y = 0)$ gets a test ordered.

\subsection{Iterating with domain experts to define realistic parametric robustness sets}%
\label{sec:iterating_with_domain_experts}

\begin{figure}[t]
\centering
  \begin{subfigure}[t]{0.33\textwidth}
  \centering
    \includegraphics[width=0.95\textwidth]{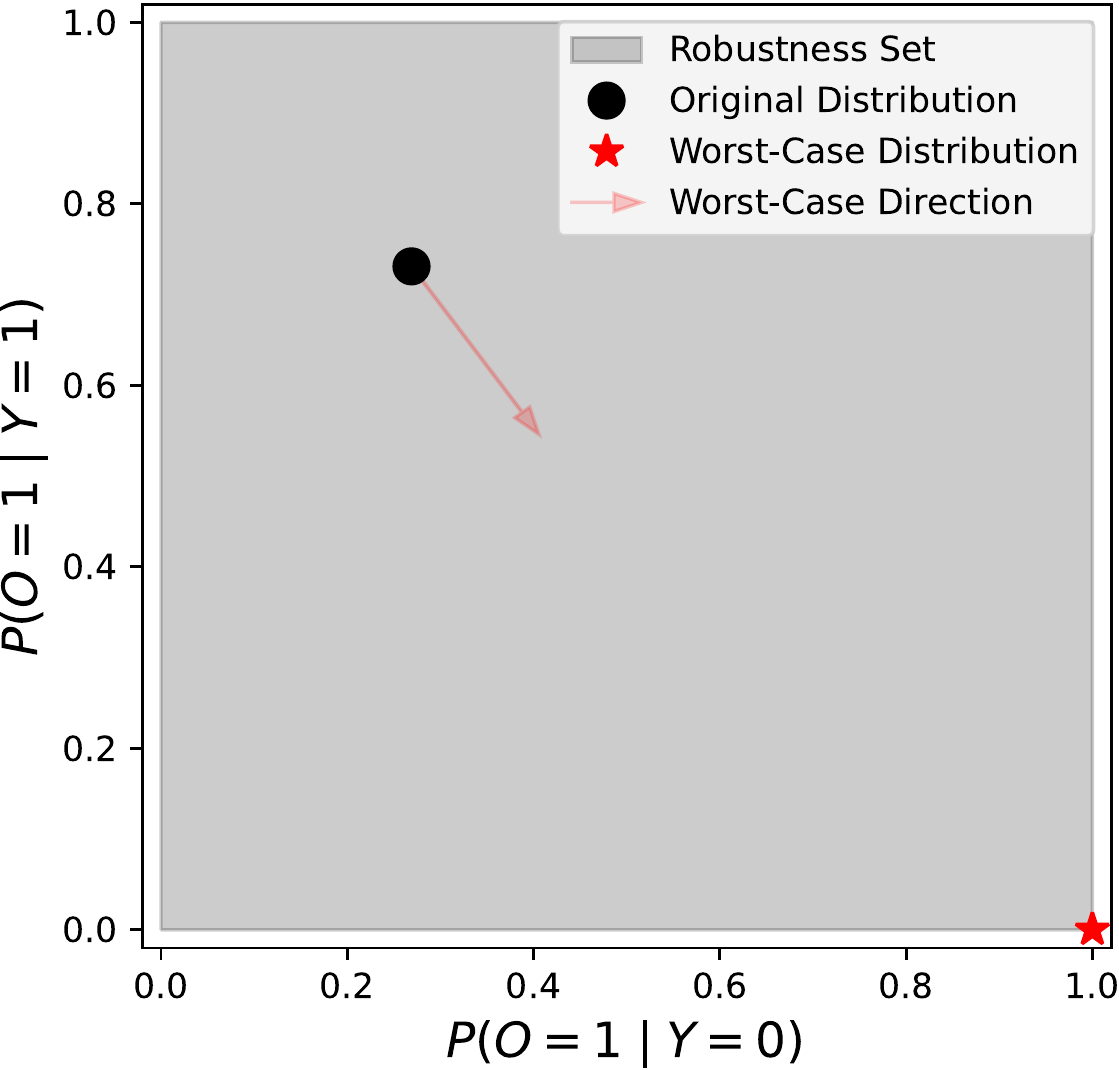}
  \caption{}%
  \label{subfig:user_story_0}
  \end{subfigure}%
  \begin{subfigure}[t]{0.33\textwidth}
  \centering
    \includegraphics[width=0.95\textwidth]{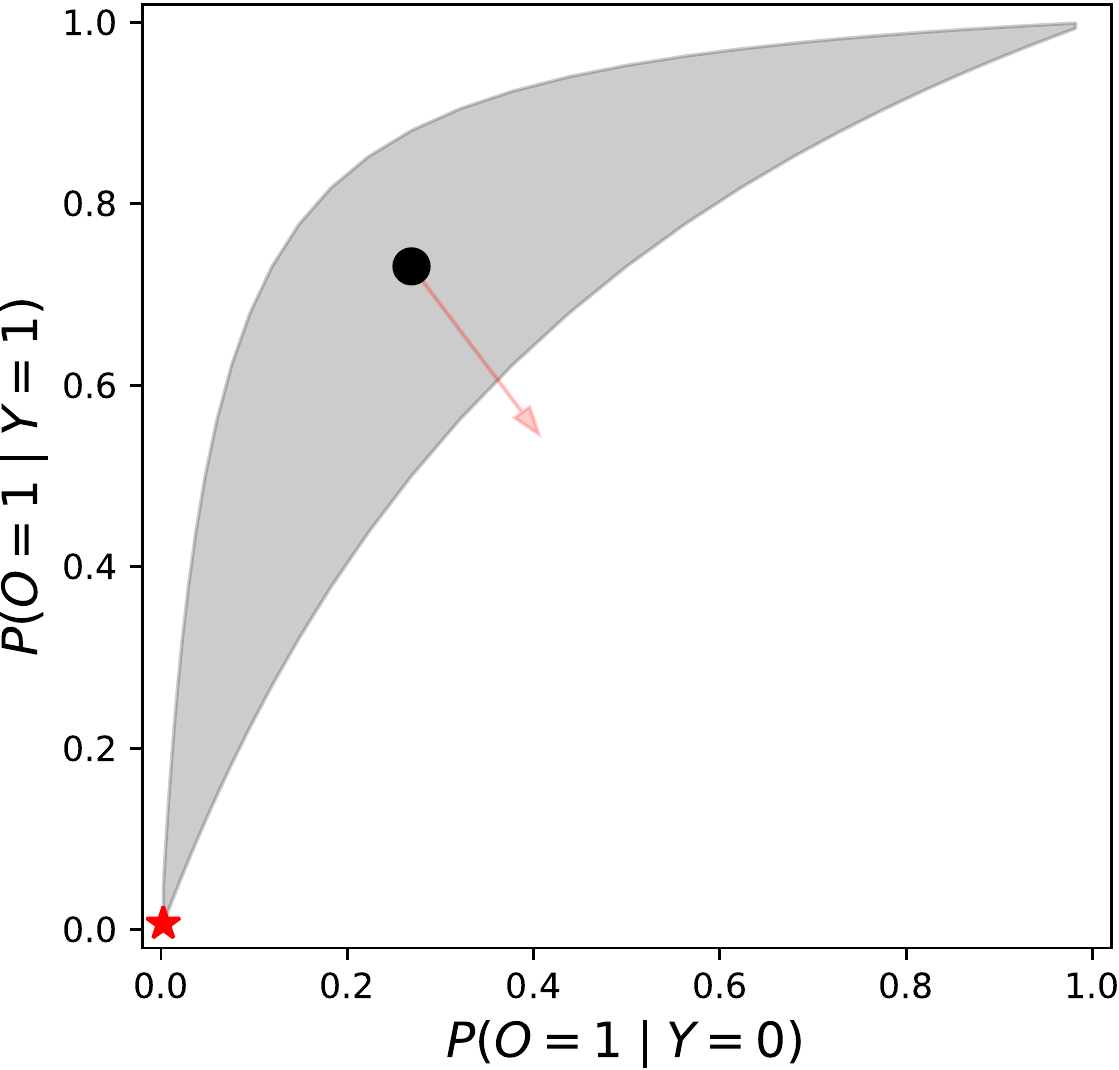}
  \caption{}%
  \label{subfig:user_story_1}
  \end{subfigure}%
  \begin{subfigure}[t]{0.33\textwidth}
  \centering
    \includegraphics[width=0.95\textwidth]{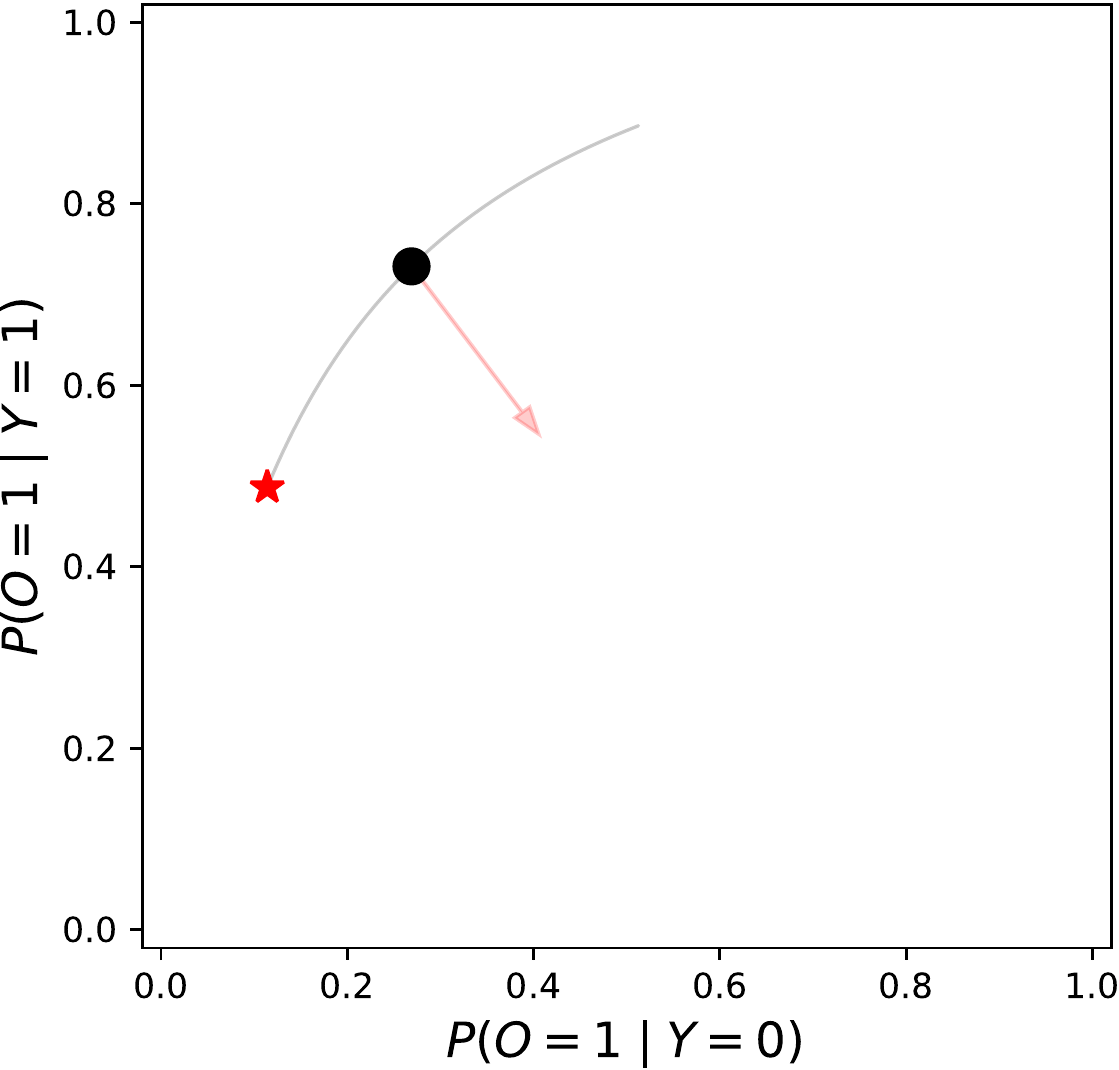}
  \caption{}%
  \label{subfig:user_story_2}
  \end{subfigure}%
  \caption{Each figure shows the set of conditional probability distributions (\enquote{CPDs}) $\P(O |  Y)$ that can be represented by a shift of $(\delta_0, \delta_1) \in \Delta_0 \times \Delta_1$, along with the worst-case distribution (given by the red star) for the 0--1 loss.  In this example, the expected loss under $\P_{\delta}$ is a linear function of the two conditional probabilities (see~\Cref{sec:worst_case_conditional_subpopulations_in_section_sec}), where the loss increases along the red arrow. (\subref{subfig:user_story_0}) captures (nearly) all conditional probability distributions, with $\Delta_0, \Delta_1$ unconstrained. (\subref{subfig:user_story_1}) shows a set of CPDs with $\Delta_0$ unconstrained, and $\Delta_1 = [-1, 1]$, with resulting worst-case accuracy of 50\%.  (\subref{subfig:user_story_2}) shows a more restrictive set of shifts, where $\Delta_0 = [-1.05, 1.05], \Delta_1 = \{0\}$.  The worst-case accuracy in this case is 69\%, comparable to the accuracy of 75\% on the original distribution.}%
\label{fig:user_story}
\end{figure}

In the previous sections, we saw that $(1 - \alpha)$-conditional subpopulation shift does not always produce realistic worst-case conditional distributions.  Moreover, given only the parameter $\alpha$, there is limited ability to control the nature of the resulting worst-case conditional distribution $\P(O |  Y)$.  In this section, we contrast this limitation with the finer-grained control enabled by considering parametric robustness sets.  In particular, we argue that parametric shifts allow for end-users to customize robustness sets, ruling out shifts that represent unrealistic changes.

In practice, we imagine that the following iterative process could be a useful tool in model development:
\begin{enumerate*}[label= (\roman*)]
  \item Define a class of shifts with an appropriate $s(Z;\delta)$ and constraint set $\Delta$, and search for a worst-case shift $\delta$.
  \item Present to domain experts \textbf{both} the worst-case shift $\delta$ (in terms of summary statistics of the resulting distribution $\P_{\delta}$) alongside the associated estimate of the worst-case loss.  For instance, report both the worst-case loss, as well as corresponding rate of testing among sick and healthy patients.
  \item If the shift itself is unrealistic, further the constrain parameter set or shift function, and repeat the process.
\end{enumerate*}

In~\cref{fig:user_story}, we give a concrete example. Each sub-figure shows the set of conditional probability distributions $\P(O |  Y)$ that can be represented by a shift of $(\delta_0, \delta_1) \in \Delta_0 \times \Delta_1$, along with the worst-case conditional distribution (given by the red star) for the 0--1 loss.  Recall that we use the shift function $s(Y; \delta) = \delta_{0} + \delta_{1} Y$, where $\delta_0$ controls a general increase or decrease in testing, while $\delta_1$ controls a shift in the testing rate for only sick patients, and allows for a different change in the testing rate of sick vs healthy patients.

\textit{Iteration 1:} We might imagine starting with a relatively unconstrained robustness set, where $\delta_0$ and $\delta_1$ are unconstrained. \Cref{subfig:user_story_0} shows the resulting robustness set of conditional distributions, and finds a shift with with a worst-case accuracy of 16\%, compared to accuracy of 75\% on the original distribution. 
However, the corresponding $\delta$-perturbation $\P_{\delta}$ is unrealistic, where all healthy patients (and no sick patients) are tested.  Luckily, because we have parameterized the shift, we can constrain the robustness set to exclude these types of results.

\textit{Iteration 2:} A benefit of our approach is that we can refine the robustness set, with this type of feedback in mind.  In~\cref{subfig:user_story_1}, we restrict the support of $\delta_1$ to $[-1, 1]$, to avoid large changes in the relative probability of testing sick vs healthy patients.  Here, the resulting worst-case accuracy is much higher (50\%), but the corresponding worst-case conditional probability distribution is perhaps still unrealistic:  No patients undergo laboratory testing at all!

\textit{Iteration 3:} Finally, we consider only shifts that affect all patients in a similar way, generally raising or lowering the conditional probability of a lab test, represented by shifts in $\delta_0$ alone. This may correspond to a more realistic scenario where (in a new hospital) laboratory testing use is more or less constrained. Additionally, we can specify that this shift should decrease testing rates by at most 20\%, which translates directly into a lower-bound on $\delta_0$.\footnote{In~\Cref{prop:log_odds_marginal_to_shift}, we prove that for binary random variables with a shift $\eta(Z) + \delta$, there is a one-to-one mapping between a new marginal distribution ($\P(O =1)$ in this case) and the value of the parameter $\delta$.}  \Cref{subfig:user_story_2} shows the resulting robustness set of distributions, where the worst-case shift may seem more plausible: A reduction in testing rates for both populations. The worst-case accuracy in this case is 69\%, comparable to the accuracy of 75\% on the original distribution.

\section{CelebA: Experiment details and additional results}\label{app:additional-experiments}\label{app:celeba}
In this section, we give details of the computer vision experiment in \cref{subsec:celeba-gan}. 

\subsection{Details for the experiment}
\paragraph{Creating the training distribution}
To construct the training distribution $\P$, we use the conditional GAN in \citet{kocaoglu2018causalgan}.
In particular, we use their CausalBEGAN, which is an extends the boundary equillibrium GAN \citep{berthelot2017began} to also take attributes as inputs. 
We train the CausalBEGAN using the default hyper parameters in the implementation provided by \citet{kocaoglu2018causalgan}, available under the MIT license. The model is trained for $250{,}000$ iterations on a single GPU, taking around approximately 16 hours. 

Similar to \citet{kocaoglu2018causalgan}, we use the CelebA dataset \citep{liu2015faceattributes}, which contains approximately $200{,}000$ images of faces, along $40$ binary attributes. Of those, we use the following $9$ attributes \{Male, Young, Wearing Lipstick, Bald, Mustache, Eyeglasses, Narrow Eyes, Smiling, Mouth Slightly Open\}. 
The CelebA dataset is licensed for non-commercial research purposes only, and consists of publicly available images of celebrities, which were collected from the internet. 
Although the data set has been widely used, \citet{liu2015faceattributes} do not make any mention of consent by the individuals to have the images included in the data set, and it is therefore likely that those celebrities did not provide consent. 

\paragraph{Training distribution over attributes}
\begin{figure}[t]
\centering
\begin{subfigure}{0.4\linewidth}
\centering
\input{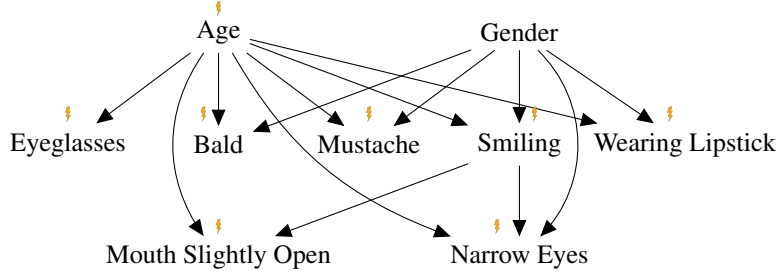}
\end{subfigure}
\hspace{5cm}
\caption{Causal graph over attributes, where lightning bolts indicate changes in mechanisms. Also displayed in \cref{fig:celeba-scm}.}\label{fig:celeba-scm-appendix-copy}
\end{figure}

For the training distribution, we simulate binary attributes according to the structural causal model in \cref{fig:celeba-scm} (for convenience also copied to \cref{fig:celeba-scm-appendix-copy}), where the model parameters are
\begin{align*}
    \P(\text{Young} = 1) &= \sigma(0.0) \\
    \P(\text{Male} = 1) &=    \sigma(0.0) \\
    \P(\text{Eyeglasses} = 1 | \text{Young}) &= \sigma(0.0 -0.4\cdot \text{Young}) \\
    \P(\text{Bald} = 1 | \text{Young, Male}) &=    \sigma(-3.0 + 3.5\cdot \text{Male}-\text{Young}) \\
    \P(\text{Mustache} = 1 | \text{Young, Male}) &=    \sigma(-2.5 + 2.5\cdot\text{Male}-\text{Young}) \\
    \P(\text{Smiling} = 1 | \text{Young, Male}) &= \sigma(0.25- 0.5\cdot\text{Male} + 0.5\cdot\text{Young}) \\
    \P(\text{Wearing Lipstick} = 1 | \text{Young, Male}) &=    \sigma(3.0 - 5.0\cdot\text{Male} - 0.5\cdot\text{Young}) \\
    \P(\text{Mouth Slightly Open} = 1 | \text{Young, Smiling}) &= \sigma(-1.0 + 0.5\cdot\text{Young} + \text{Smiling}) \\
    \P(\text{Narrow Eyes} = 1 | \text{Male, Young, Smiling}) &= \sigma(-0.5 + 0.3\cdot\text{Male} + 0.2\cdot\text{Young} + \text{Smiling}),
\end{align*}
where each variable either takes the value $0$ or $1$ and $\sigma$ indicates the sigmoid.
To generate data, we first simulate attributes from this binary Bayesian network, which we then pass as inputs to the GAN to simulate images (in addition to the random noise used by the GANs to simulate different images). In \cref{fig:celeb-gan-examples-train-test,fig:celeb-gan-examples-characteristics}, we plot examples of the training images that were generated.

\paragraph{Predictive model}
We simulate a training set of $12{,}000$ attribute-image pairs, and a validation set of $2{,}000$ pairs. The training set is used to fit a classifier $f$, and the validation set is used for model selection.
To build a classifier $f$, we use the ResNet-50 \citep{he2016deep} model implemented in the python package \texttt{torch}. We add a final fully connected layer to adapt the ResNet model to a binary classification task, and fine-tune the model on the training data by (only) learning the weights and bias of the final layer. The model is trained using the negative log-likelihood criterion and an ADAM optimizer. The model is trained for $25$ epochs and we select the model which after a full epoch had the best validation set performance. Given the learned model $f$, we simulate a separate validation dataset of $n=1{,}000$ samples, and make model predictions $f(X)$. We then compute the model accuracy as $\ell = \1{f(X) = Y}$, which is the input to computing the shift gradient and Hessian.

\paragraph{Estimation of shifted loss}
We apply the methods in \cref{sec:approximating_the_shift_for_exp_fam} to estimate the worst-case shift to the distribution $\P$ (given by the binary probabilities above). 
For each conditional $\P(W_i|\PA(W_i))$, we consider a shift $\eta_{\delta_i}(\PA(W_i)) = \eta(\PA(W_i)) + \sum_{z\in\cZ} \1{\PA(W_i) = z}\delta_i$, which corresponds to arbitrarily shifting the conditional distribution (see \cref{sec:shift_gradient_and_hessian_binary}).
For example, for $W_i = \text{Bald}$, where $\eta(\text{Young, Male}) = -3.0 + 3.5\cdot\text{Male} - 1.0\cdot\text{Young}$, the shift would be
\begin{equation}\label{eq:example-of-shifted-eta-celeb-a}
    \eta_{\delta_{\text{Bald}}}(\text{Young, Male})  = \eta(\text{Young, Male}) + \begin{cases}
        \delta_{\text{Bald}, 0}, & \text{Young}=0, \text{Male} = 0 \\
        \delta_{\text{Bald}, 1}, & \text{Young}=0, \text{Male} = 1 \\
        \delta_{\text{Bald}, 2}, & \text{Young}=1, \text{Male} = 0 \\
        \delta_{\text{Bald}, 3}, & \text{Young}=1, \text{Male} = 1.
    \end{cases}
\end{equation}
For each $W_i$, this means that $\delta_i$ is $\R^{2^{|\PA(W_i)|}}$, and in total $\delta = (\delta_1, \ldots, \delta_8)\in \R^{31}$ (we do not consider shifts in the distribution of gender, since this is the label we are predicting). 

We compute the shift gradient and Hessian using \cref{thm:sg-cond-cov-several-shift}. In particular, since $W_i$ is binary, the sufficient statistic is $T(W_i) = W_i$, so the shift gradients and Hessians given by \cref{sec:shift_gradient_and_hessian_binary}. See \cref{appsec:algorithm_calculation_theorem1} for a detailed walk through of computing the shift gradient and Hessian from a sample. 

For any given $\delta$, the shifted distribution of $W_i$ is given by $\P_\delta(W_i = 1|\PA(W_i)) = \sigma(\eta_{\delta_i})$, where $\eta_{\delta_i}$ is computed similar to \cref{eq:example-of-shifted-eta-celeb-a}, and $\sigma$ is the sigmoid function. Then the importance sampling weights are given by
\begin{equation*}
    w_\delta = \prod_{i=1}^8 \frac{\sigma(\eta_{\delta_i}(\PA(W_i)))}{\sigma(\eta(\PA(W_i))}. 
\end{equation*}
Using these weights, for any $\delta$, we can estimate $\E_\delta[\ell]$ by $\ipw$ and $\taylor$ using \cref{eq:ipw-estimate-of-mean,eq:taylor-approx-finie-sample}, respectively.

\subsection{Full table of worst-case shift in \texorpdfstring{\Cref{subsec:celeba-gan}}{}}%
\label{subsec:full-table-1}
In \cref{subsec:celeba-gan}, we find the worst-case shift $\delta$, and display the $5$ largest components. In \cref{tab:worst-case-delta-full}, we display the full vector $\delta\in\R^{31}$, sorted by absolute value of the size of the component. 
\begin{table}[!ht]
    \centering
    \begin{tabular}{lr}
\toprule
                                   Conditional &  $\delta_i$ \\
\midrule
                    Bald | Male$=0$, Young$=0$ &       0.899 \\
                    Bald | Male$=1$, Young$=1$ &      -0.800 \\
                    Bald | Male$=1$, Young$=0$ &      -0.680 \\
        Wearing Lipstick | Male$=0$, Young$=1$ &      -0.618 \\
        Wearing Lipstick | Male$=0$, Young$=0$ &      -0.543 \\
                        Eyeglasses | Young$=1$ &       0.507 \\
                Mustache | Male$=1$, Young$=0$ &      -0.476 \\
                Mustache | Male$=0$, Young$=0$ &       0.449 \\
                Mustache | Male$=1$, Young$=1$ &      -0.415 \\
                        Eyeglasses | Young$=0$ &       0.399 \\
                 Smiling | Male$=0$, Young$=0$ &      -0.261 \\
        Wearing Lipstick | Male$=1$, Young$=0$ &       0.205 \\
Narrow Eyes | Male$=0$, Smiling$=0$, Young$=0$ &       0.192 \\
  Mouth Slightly Open | Smiling$=1$, Young$=1$ &       0.191 \\
                 Smiling | Male$=1$, Young$=0$ &       0.183 \\
Narrow Eyes | Male$=1$, Smiling$=1$, Young$=1$ &       0.179 \\
  Mouth Slightly Open | Smiling$=0$, Young$=1$ &      -0.153 \\
                Mustache | Male$=0$, Young$=1$ &       0.133 \\
                    Bald | Male$=0$, Young$=1$ &       0.128 \\
  Mouth Slightly Open | Smiling$=1$, Young$=0$ &      -0.127 \\
Narrow Eyes | Male$=0$, Smiling$=1$, Young$=0$ &      -0.125 \\
        Wearing Lipstick | Male$=1$, Young$=1$ &       0.123 \\
Narrow Eyes | Male$=1$, Smiling$=1$, Young$=0$ &      -0.117 \\
Narrow Eyes | Male$=0$, Smiling$=0$, Young$=1$ &       0.106 \\
                            Young | No parents &       0.092 \\
Narrow Eyes | Male$=0$, Smiling$=1$, Young$=1$ &       0.057 \\
Narrow Eyes | Male$=1$, Smiling$=0$, Young$=1$ &      -0.050 \\
Narrow Eyes | Male$=1$, Smiling$=0$, Young$=0$ &      -0.039 \\
  Mouth Slightly Open | Smiling$=0$, Young$=0$ &       0.028 \\
                 Smiling | Male$=1$, Young$=1$ &       0.028 \\
                 Smiling | Male$=0$, Young$=1$ &       0.017 \\
\bottomrule
\end{tabular}

    \caption{Worst case shift in the $\delta\in\R^{31}$ identified by the Taylor approach in \cref{subsec:celeba-gan}. Each entry corresponds to a shift in a conditional distribution given a particular outcome, and the squared sum of the entries equal $\lambda^2 = 4$.}
    \label{tab:worst-case-delta-full}
\end{table}

\subsection{Sample images from training distribution in \texorpdfstring{\Cref{subsec:celeba-gan}}{}}
\label{subsec:example-images-from-gan}
\begin{figure}[t]
    \centering
    \begin{subfigure}{0.49\textwidth}
        \includegraphics[width=0.9\textwidth]{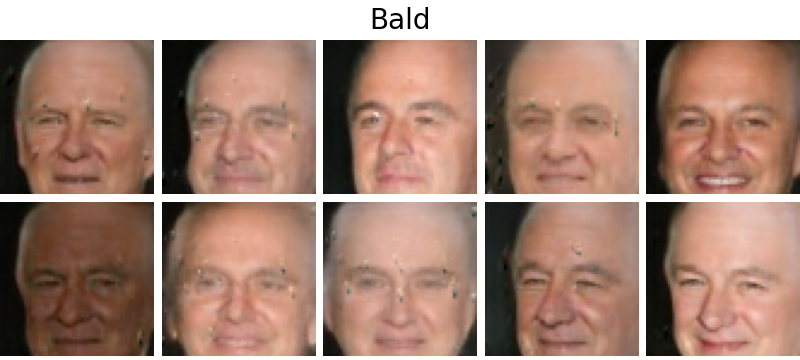}
    \end{subfigure}
    \begin{subfigure}{0.49\textwidth}
        \includegraphics[width=0.9\textwidth]{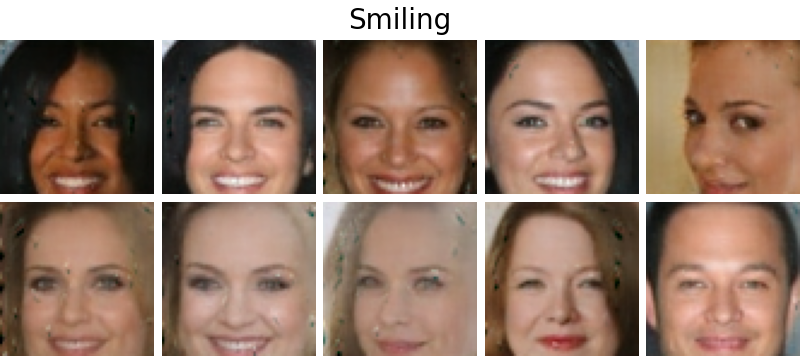}
    \end{subfigure}
    \begin{subfigure}{0.49\textwidth}
        \includegraphics[width=0.9\textwidth]{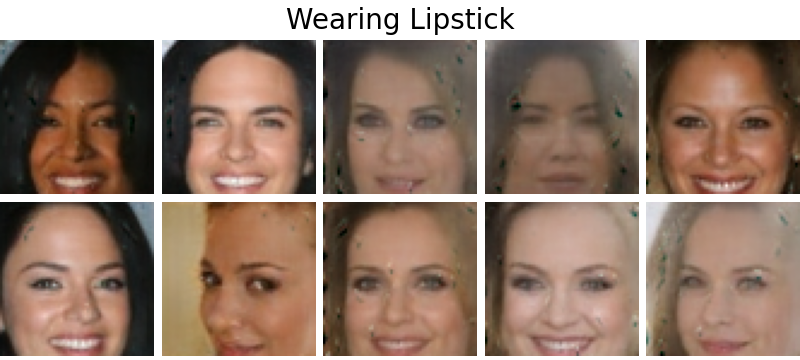}
    \end{subfigure}
    \begin{subfigure}{0.49\textwidth}
        \includegraphics[width=0.9\textwidth]{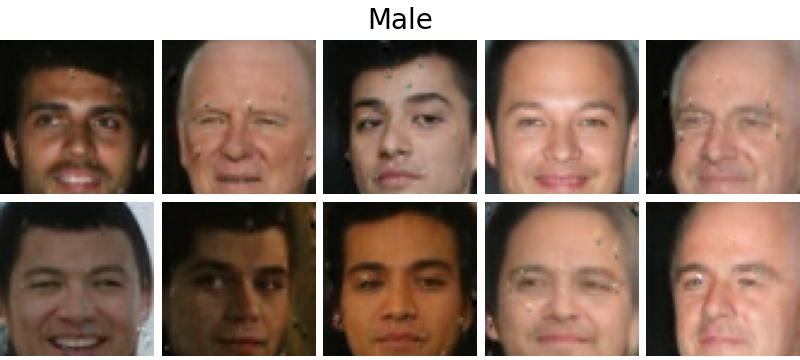}
    \end{subfigure}
    \caption{Examples of images from the training distribution $\P$. Each of the four groups (Bald, Smiling, Wearing Lipstick, Male) show training images who have that characteristic.}
    \label{fig:celeb-gan-examples-characteristics}
\end{figure}%
\begin{figure}[!t]
    \centering
    \begin{subfigure}{0.49\textwidth}
        \includegraphics[width=0.9\textwidth]{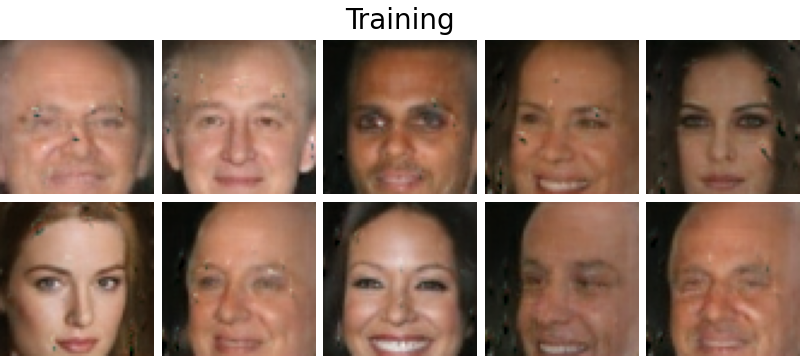}
    \end{subfigure}
    \begin{subfigure}{0.49\textwidth}
        \includegraphics[width=0.9\textwidth]{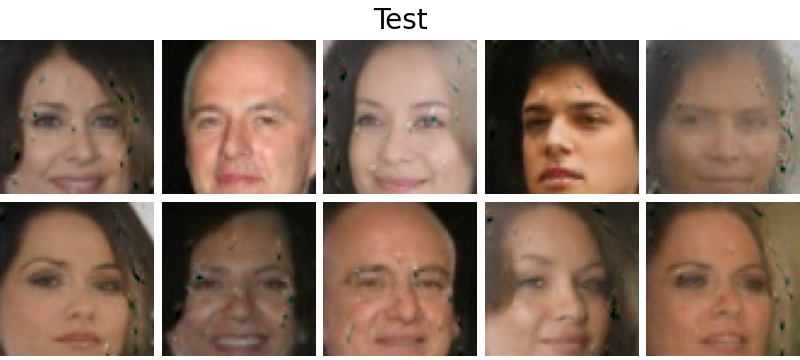}
    \end{subfigure}
    \caption{Examples of images from the training distribution $\P$ and the test distribution $\P_\delta$ that is characterized by the worst-case shift $\delta$, see \cref{fig:celeba-scm}. }
    \label{fig:celeb-gan-examples-train-test}
\end{figure}
In \cref{fig:celeb-gan-examples-characteristics}, for the $4$ attributes $\{\text{Bald, Smiling, Wearing Lipstick, Male}\}$, we display images generated from the training distribution $\P$ (i.e. by the GAN) with that particular attribute. In \cref{fig:celeb-gan-examples-train-test} we show 10 randomly drawn images from the training distribution $\P$ as well as the test distribution $\P_\delta$ corresponding to the worst-case $\delta$ found in \cref{subsec:celeba-gan}. 

\clearpage
\subsection{Impact of changing \texorpdfstring{$\lambda$}{lambda}}%
\label{sec:impact_of_changing_lambda}

The shift considered in the main text yields a relatively small drop in accuracy.  To demonstrate that larger drops in accuracy are possible, we repeated our experimental setup over the same 100 initial validation datasets, while varying the size of the constraint $\norm{\delta}_2 \leq \lambda$. We report results in~\cref{tab:lambda_comparison} for $\lambda \in [2, 4, 6, 8, 10]$, where $\lambda = 2$ corresponds to the setting of~\cref{tab:celebA-table} (right). 
\begin{table}[h]
  \centering
  \caption{Performance of the Taylor and IS approaches over different values of $\lambda$, where $\lambda = 2$ corresponds to the setting of~\cref{tab:celebA-table} (right).  Averages taken over 100 simulations.}%
  \label{tab:lambda_comparison}
  \begin{tabular}{lrrrrr}
  \toprule
  {} &  $\lambda = 2$ &  $\lambda = 4$ &  $\lambda = 6$ &  $\lambda = 8$ &  $\lambda = 10$ \\
  \midrule
  Original Acc. ($\E[\mathbf{1}\{f(X) = Y\}]$)                                  &          0.912 &          0.912 &          0.912 &          0.912 &           0.912 \\
  \midrule
  Acc.\ under Taylor shift ($\E_{\delta_{\text{Taylor}}}[\mathbf{1}\{f(X) = Y\}]$)       &          0.874 &          0.812 &          0.736 &          0.681 &           0.648 \\
  IS est.\ of acc.\ under Taylor shift ($\hat{E}_{\delta_{\text{Taylor}}, \text{IS}}$)         &          0.863 &          0.795 &          0.715 &          0.658 &           0.625 \\
  Taylor est.\ of acc.\ under Taylor shift ($\hat{E}_{\delta_{\text{Taylor}}, \text{Taylor}}$) &          0.863 &          0.798 &          0.711 &          0.601 &           0.466 \\
  \midrule
  Acc.\ under IS shift ($\E_{\delta_{\text{IS}}}[\mathbf{1}\{f(X) = Y\}]$)               &          0.889 &          0.830 &          0.746 &          0.670 &           0.596 \\
  IS est.\ of acc.\ under IS Shift ($\hat{E}_{\delta_{\text{IS}}, \text{IS}}$)                 &          0.821 &          0.670 &          0.463 &          0.264 &           0.130 \\
  \bottomrule
  \end{tabular}
\end{table}

Recall that we have two complementary goals: First, we would like to \textbf{find} a shift that results in a large drop in accuracy.  Second, we would like to reliably \textbf{evaluate} the impact of the shift that we find, using only the training data.  These two goals can be tackled with different approaches, such as using the Taylor approximation to find a shift, but using importance sampling (IS) to estimate the loss under that shift.  \Cref{tab:lambda_comparison} allows us to compare three different strategies: (i) using the Taylor approximation for both finding and evaluating the shift, (ii) using IS for both finding and evaluating, and (iii) using Taylor to find, but IS to evaluate the shift.

From~\cref{tab:lambda_comparison}, we can observe that \textbf{using Taylor to find, but IS to evaluate, consistently performs best in terms of reliable evaluation} (i.e., predicting the shifted accuracy), across all values of $\lambda$.  For $\lambda = 2$, the bias in evaluation is 1\% (predicting 86\% vs ground truth of 87\% on average), and for $\lambda = 10$, the bias of this approach is still only 2\% (predicting 63\% vs ground truth of 65\% on average).  In contrast, for $\lambda = 10$, the first strategy (using Taylor to find and evaluate) over-predicts the impact by 18\%, and the second strategy (using IS to find and evaluate) over-predicts the impact by 47\%.

This strategy \textbf{also tends to find the most impactful shifts, for moderate values of $\lambda$}.  For $\lambda \leq 6$, the shifts found by the Taylor approach are more impactful than those found by the IS approach.  Moreover, the drop in accuracy remains substantial (e.g., a drop of around 17\% at $\lambda=6$).  For $\lambda > 6$, the story is more subtle: The third approach (using IS to find and evaluate shifts) finds more impactful shifts, but (as noted previously) dramatically over-estimates their impact.

\section{Relationship to other approaches}%
\label{sec:related_work_app}

In this section, we give a more detailed discussion of how our work relates to other approaches for evaluation of distributional robustness and learning of robust models.  Much of the content from~\cref{sec:related_work} is duplicated here, but expanded upon to include other relevant work and detailed discussion.

\textbf{Distributionally Robust Optimization/Evaluation with divergence measures}: Distributionally robust optimization (DRO) seeks to learn models that minimize objectives of the form of~\cref{eq:maximum_loss} \citep{Duchi2018-ju, Duchi2020-at, sagawa2019distributionally}.  We focus on proactive worst-case evaluation of a fixed model, not optimization, similar to \citet{Subbaswamy2020-vr, Li2021-qe}, but major differences between our work and prior work lie in the \textbf{definition of the set of plausible future distributions $\cP$}, often called an \enquote{uncertainty set} in the optimization literature, where the goal is to specify a set that captures expected shifts, without being overly conservative.

\textit{Shifts in $\P(X, Y)$}: A conservative approach is to include all joint distributions $\P(X, Y)$ within a certain neighborhood of the training distribution. Many coherent risk measures can be written as a worst-case loss of this form. For instance, the Entropic Value-at-Risk (EVaR), with confidence level $1 - \alpha$, corresponds to the worst-case loss over a set of distributions $\cP = \{ P \ll P_0 : D_{KL}(P \| P_0) \leq - \ln \alpha \}$, where $P_0$ is the original distribution \citep{Ahmadi-Javid2012-yw}.  Similarly, the Conditional Value-at-Risk (CVaR) with parameter $\alpha$ can be seen as the worst-case loss over an uncertainty set obtained from a limiting $f$-divergence (see Example 3 of \citet{Duchi2018-ju}), including all $\alpha$-fractions of the original distribution. These measures are appealing, in that they are straightforward to compute, but can be very conservative.

Indeed, such measures often reduce to only considering the distribution of the loss itself. CVaR, for instance is equivalent to sorting the training examples by their loss, and taking the average loss of the top $\alpha$-fraction.
To illustrate these limitations, it is straightforward to see that, using the 0-1 loss and a classifier with 80\% accuracy, the worst-case loss under both of these measures is $1.0$ for any $\alpha \leq 0.2$. This is intuitive for CVaR (since over 20\% of samples are misclassified in the original distribution), and follows for EVaR from the fact that the binary distribution with probability $q = 1$ has a KL-divergence to the original distribution $p = 0.2$ of $-\ln 0.2$.

\citet{Lam2016-xx} consider a more general problem of estimating the worst-case performance of stochastic systems over infinitesimal changes in distribution, measured by Kullback-Leibler divergence. Their approach is applicable beyond machine-learning settings, and generalizes to e.g., worst-case waiting times in a queueing system. They demonstrate that for a sufficiently small neighborhood of distributions, this worst-case performance can be well-approximated by a Taylor expansion whose coefficients can be estimated from the original distribution.

\textit{Shifts in $\P(X)$ alone}: Partially due to this overly-conservative behavior, there has been a line of work incorporating additional restrictions on the allowable shift (i.e., adding more assumptions).  For instance, \citet{Duchi2020-at} considers learning predictive models that optimize a worst-case loss similar to CVaR (a \enquote{worst-case subpopulation shift}), but where only $\P(X)$ is allowed to change, and $\P(Y \mid X)$ is assumed to be constant. For similar shifts, \citet{Li2021-qe} considers only the task of evaluation, but provides a novel estimation procedure with dimension-free finite-sample guarantees.  However, many real-world shifts do not fit this framework: In~\cref{ex:lab_testing_rates}, both $\P(X)$ and $\P(Y \mid X)$ are changing, where $X = (A, O, L)$, as a result of a shift in $\P(O \mid Y, A)$.

\textit{Shifts in a conditional distribution}: Closer to our work is \citet{Subbaswamy2020-vr} who consider evaluating the loss under worst-case changes in a conditional distribution, but while we consider parametric shifts, they estimates the loss under worst-case $(1 - \alpha)$ conditional subpopulation shifts. However, it is not obvious how to choose an appropriate level of $\alpha$: in some settings, seemingly plausible values of $\alpha$ (e.g., a 20\% subpopulation) correspond to entirely implausible shifts. We give a simple lab-testing example in \cref{sec:comparison_to_subpopulation_shift_app}, where the worst-case subpopulation is one where healthy patients are always tested, and sick patients never tested. 

In contrast to these methods, our approach uses explicit parametric perturbations to define shifts, as opposed to distributional distances or subpopulations. In addition, our approach allows for shifts in multiple marginal or conditional distributions simultaneously: In~\cref{ex:lab_testing_rates}, for instance, we can model a simultaneous change in both the marginal distribution of age, as well as the conditional distribution of lab testing, while other conditionals are unchanged. Our main requirement is that each shifting distribution is exponential family, and that the shift can be represented via the natural parameters: For continuous variables this is a non-trivial restriction, but for discrete variables it is true by definition.

\textbf{Causality-motivated methods for learning robust models:} Several approaches seek to learn models that perform well under arbitrarily large causal interventions (which result in arbitrary changes in selected conditional distributions). Several approaches proactively specify shifting mechanisms/conditional distributions, and then seek to learn predictors that have good performance under arbitrarily large changes in these mechanisms \citep{subbaswamy2019preventing,Veitch2021-cu,Makar2022-eq,Puli2022-dn}.  Other approaches use auxiliary information, such as environments \citep{magliacane2018domain,Rojas-Carulla2018-qe,arjovsky2019invariant} or identity indicators \citep{Heinze-Deml2021-mz} to learn models that rely on invariant conditional distributions.  The worst-case optimality of these approaches is often restricted to cases where the shifts are arbitrarily large: In~\cref{ex:lab_testing_rates}, worst-case optimality under arbitrarily large shifts would correspond to minimizing the worst-case loss under all possible lab testing policies.

However, when the causal interventions (i.e., changes in causal mechanisms) are bounded (i.e., not arbitrary), then these approaches are not necessarily optimal. Closest to our work in motivation is prior work on robustness to bounded shift interventions in linear causal models \citep{rothenhausler2021anchor, oberst2021regularizing, kook2021distributional}. Our work can be seen as extending those ideas to general non-linear causal models, where our focus is on evaluation rather than learning robust models. We discuss this point in more detail in~\cref{sec:why_consider_bounded_shifts} below.

Our work can serve as an aid to deploying these causality-motivated methods in a few ways, by comparing their worst-case performance under bounded shifts: First, our work can inform whether such methods should be deployed at all, as for sufficiently small shifts, it may be the case that standard training yields better performance.  Second, our work can inform hyperparameter selection for several of these approaches, which include regularization terms that implicitly trade off between robustness and in-distribution performance.  More broadly, our approach is useful for probing (and comparing) the reliability of specific learned models under shift, regardless of the algorithm that produced them.

\textbf{Evaluating out-of-distribution performance with unlabelled samples}: A recent line of work has focused on predicting model performance in out-of-distribution settings, where unlabelled data is available from the target distribution \citep{Garg2022-xp,Jiang2022-pn,Chen2021-pd}.  In contrast, our method operates using only samples from the original source distribution, and seeks to estimate the worst-case loss over a set of possible target distributions. 

\subsection{The importance of considering restricted shifts in causal mechanisms}%
\label{sec:why_consider_bounded_shifts}

In~\cref{fig:labtest_loss_under_intercept_shift_app} we revisit~\cref{ex:lab_testing_rates}, adopting the perspective of a model developer, who is aware that laboratory testing policies (i.e., $P(O \mid A, Y)$) may change. 
As this change may impact the correlation between laboratory testing features $(O, L)$ and the label $Y$, how should the model developer proceed? 

From a causal perspective, one way to approach model development is to learn a predictive model that is \enquote{causal} in the sense that it only relies on the causal parents of the label $Y$. In this example, $A$ is the full set of causal parents of $Y$, and the conditional distribution $P(Y = 1 \mid A)$ does not change under changes in laboratory testing policy. This conditional distribution is an example of an \enquote{invariant} conditional distribution \citep{Rojas-Carulla2018-qe}, reflecting the unchanging causal mechanisms that generate $Y$ which are not affected by changes in laboratory testing policy.
With this in mind, we consider the choice between two models:
\begin{itemize}
  \item \textbf{Age-based model}: $f(A)\approx P(Y = 1 \mid A)$, predicting disease using age alone.\footnote{Details of how the full model $f(A, O, L)$ is trained are described in~\cref{app:details_of_labtest_example_fig1}. The model $f(A)$ is trained using unregularized logistic regression. Both models are trained on data drawn from the original distribution, where the marginal testing rate is 50\%.}
  \item \textbf{Full model}: $f(A, O, L) \approx P(Y = 1\mid A, O, L)$, predicting disease using all features.
\end{itemize}

We now demonstrate the utility of incorporating additional knowledge, considering not only \enquote{what} can change (i.e., $P(O \mid A, Y)$), but also considering \enquote{how} and \enquote{how much} it can change, and translating that knowledge into a quantitative comparison between these modelling choices.  The question of \enquote{how} corresponds to our choice of shift function, and \enquote{how much} corresponds to our choice of constraints on shift parameters. We consider changes in testing that correspond to a uniform increase/decrease in testing rates, parameterized as 
\begin{equation}
P_{\delta}(O = 1 \mid A, Y) = \operatorname{sigmoid}(\eta(A, Y) + \delta)
\end{equation}
Other details of the underlying distribution are given in~\cref{app:details_of_labtest_example_fig1}.

In~\cref{fig:labtest_loss_under_intercept_shift_app} (right), we plot the loss of each model under distributions\footnote{In this case, every choice of $\delta$ maps to a unique marginal testing rate in the distribution $P_{\delta}$ (see~\cref{prop:log_odds_marginal_to_shift}), so we plot the loss as a function of testing rate, instead of $\delta$ directly.} that correspond to different choices of $\delta$, and observe that despite having invariant performance, the age-based model only out-performs the full model under substantial changes in testing policy. In this case, the model $f(A)$ (throwing away laboratory testing information) yields better performance if testing rates drop substantially, but for a large set of changes in testing rates, the full model $f(A, O, L)$ is superior.

Considering the worst-case performance of each model can guide model selection. If a substantial change in testing rates is not plausible (which can be expressed as constraints on $\delta$), and the worst-case loss (over plausible changes) of $f(A, O, L)$ is lower than that of $f(A)$, the model developer may decide to use the full model $f(A, O, L)$ in any case.

\begin{figure}[t]
  \centering
  \begin{subfigure}[t]{0.45\textwidth}
    \centering
    \begin{tikzpicture}[
      obs/.style={circle, draw=gray!90, fill=gray!30, very thick, minimum size=5mm}, 
      int/.style={circle, draw=red!10, fill=red!30, minimum size=5mm}, 
      uobs/.style={circle, draw=gray!90, fill=gray!10, dotted, minimum size=5mm}, 
      bend angle=30]
      \node[obs] (A) {$A$} ;
      \node[obs] (Y) [below left=1cm of A]  {$Y$};
      \node[int] (O) [below right=1cm of A] {$O$} ;
      \node[obs] (L) [below right=1cm of Y] {$L$} ;
      \draw[-latex, thick] (A) -- (Y);
      \draw[-latex, thick] (A) -- (O);
      \draw[-latex, thick] (Y) -- (O);
      \draw[-latex, thick] (O) -- (L);
      \draw[-latex, thick] (Y) -- (L);
    \end{tikzpicture}
  \end{subfigure}%
  \begin{subfigure}[t]{0.40\textwidth}
  \centering
    \includegraphics[width=0.8\linewidth]{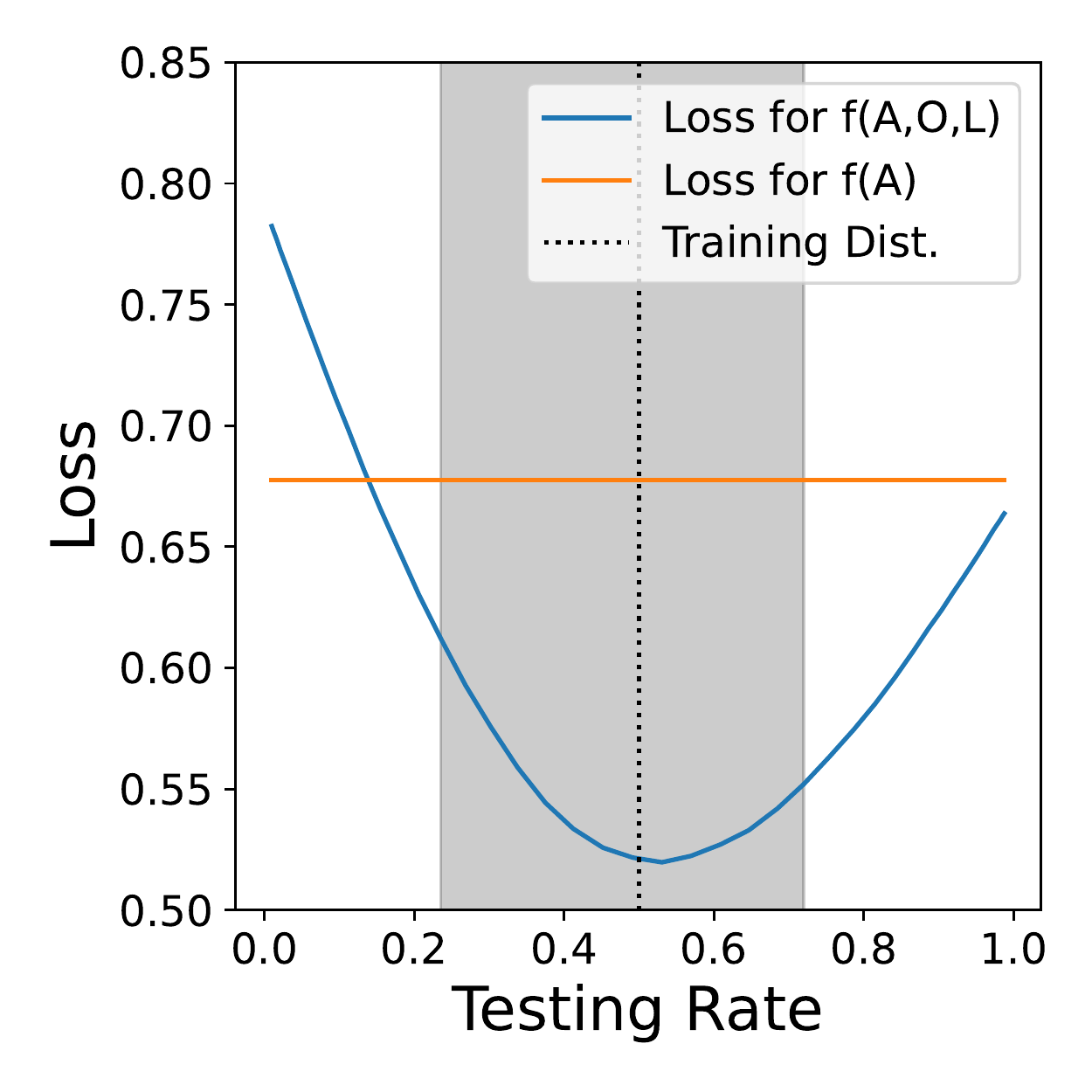}
  \end{subfigure}
  \centering
  \caption{(Left) Causal graph for~\cref{ex:lab_testing_rates}, where the variables are $Y \in \{0, 1\}$ for the label (Disease), $A \in \R$ for Age, $O \in \{0, 1\}$ for whether a laboratory test is ordered (Test Order), and $L \in \R$ for the lab result (Test Result), if available.
  (Right) Using the same generative model as in~\cref{app:details_of_labtest_example_fig1}, we contrast the performance of the full model $f(A,O,L)$ and a model $f(A)$ that only uses age, across distributions which differ in testing rates according to $P_{\delta}(O = 1 \mid A, Y) = \operatorname{sigmoid}(\eta(A, Y) + \delta)$. Comparing performance on a range of distributions where we vary $\delta$, we observe that $f(A)$ has invariant loss, but $f(A, O, L)$ has better performance for a wide range of shifts $\delta$. In particular, if we compare the worst-case loss under shifts $\abs{\delta} \leq 1.5$ (corresponding to marginal testing rates in the grey region), we can observe that the worst-case loss of $f(A,O,L)$ is lower than that of $f(A)$.}%
  \label{fig:labtest_loss_under_intercept_shift_app}
\end{figure}

\section{Proofs}\label{app:proofs}
\subsection{Proof of \texorpdfstring{\Cref{prop:ipw-weights-exp-family}}{}}
\CommonSupport*
\begin{proof}
    By \cref{def:multiple_shift,assmp:cef_factorization}, we have that
    \begin{align*}
        \P_\delta(\bV) &= \prod_{i=1}^m \P_{\delta_i}(W_i|Z_i) \prod_{V_j\in \bV\setminus \bW}\P(V_j|U_j)\\
        \P(\bV) &= \prod_{i=1}^m\P(W_i|Z_i) \prod_{V_j\in \bV\setminus \bW}\P(V_j|U_j).
    \end{align*}
    It follows that the supports of $\P_\delta$ and $\P$ are the same: Since the exponential family density is given by the base measure $g_i(W_i)$ times a exponential term (which is always strictly positive), and since the terms $\prod_{V_j\in \bV\setminus \bW}\P(V_j|U_j)$ are shared between $\P_\delta$ and $\P$, their supports agree. 
    
    To get the density ratio, we take the ratio of $\P_\delta(\bV)$ and $\P(\bV)$, and the terms $V_j\in \bV\setminus \bW$ cancel:
    \begin{align*}
        w_{\delta}(\bV) &= \frac{\P_\delta(\bV)}{\P(\bV)} \\
        &= \prod_{i=1}^m \frac{\P_{\delta_i}(W_i|Z_i)}{\P(W_i|Z_i)}.
    \end{align*}
    By \cref{def:multiple_shift,assmp:cef_factorization}, each $\P_{\delta_i}(W_i|Z_i)$ is a $\delta_i$-perturbation around the CEF distribution $\P(W_i|Z_i)$, so plugging in the exponential family densities, we get
    \begin{align*}
        w_{\delta}(\bV) &= \prod_{i=1}^m \frac{g(W_i)\exp\bigg(\{\eta_i(Z_i) + s_i(Z_i;\delta_i)\}^\top T_i(W_i) - h_i(\eta_i(Z_i) + s_i(Z_i;\delta_i))\bigg)}{g(W_i)\exp\bigg(\eta_i(Z_i)^\top T_i(W_i) - h_i(\eta_i(Z_i))\bigg)} \\
        &= \prod_{i=1}^m \exp\bigg(s_i(Z_i;\delta_i) T_i(W_i) - h_i(\eta_i(Z_i) + s_i(Z_i;\delta_i)) + h_i(\eta_i(Z_i))\bigg) \\
        &=\exp\bigg(\sum_{i=1}^ms_i(Z_i;\delta_i) T_i(W_i)\bigg)  \exp\bigg(\sum_{i=1}^m h_i(\eta_i(Z_i))-h_i(\eta_i(Z_i) + s_i(Z_i;\delta_i)) \bigg).
    \end{align*}
\end{proof}

\subsection{Proof of \texorpdfstring{\Cref{thm:sg-cond-cov-several-shift}}{}}
\SgConvCovSeveralShift*
\begin{proof}
  For simplicity throughout, we use $h^{(1)}_i$ to denote the gradient of the log-partition function $\nabla h_i(\cdot)$ with respect to the arguments, which is a column vector of length $d_{T_i}$, and we use $h^{(2)}_i$ to denote the Hessian $\nabla^2 h_i(\cdot)$, which is a matrix of size $d_{T_i} \times d_{T_i}$.  We also use $\eta_{\delta_i}(z_i)$ as short-hand for $\eta_i(z_i) + s_i(z_i; \delta_i)$.

  \textbf{Shift Gradient}: By~\cref{def:multiple_shift}, the probability density / mass function $\P_{\delta}$ factorizes as follows, where $\delta = (\delta_1, \ldots, \delta_m)$
  \begin{equation}\label{eqapp:factorization_of_pd}
        \P_\delta(\bV) = \left(\prod_{W_i \in \bW} \P_{\delta_i}(W_i|Z_i)\right)\left( \prod_{V_i \in \bV\setminus \bW} \P(V_i|\PA(V_i)) \right),
    \end{equation}
  and the gradient with respect to shift parameters $\delta_i$ is given by 
  \begin{equation*}
    \nabla_{\delta_i} p_{\delta}(v) = p_{\delta}(v) \nabla_{\delta_i} \log p_{\delta}(v) = p_{\delta}(v) \nabla_{\delta_i} \log p_{\delta_i}(w_i |  z_i)
  \end{equation*}
  where the last equality follows from additivity of the log-likelihood in the conditionals, the factorization above, and the fact that $\delta_i$ only enters into the given conditional distribution.  Given the assumed form of $\log p_{\delta_i}(w_i |  z_i)$ given in~\cref{def:delta-perturbation}, we can observe that 
  \begin{align}
    \nabla_{\delta_i} \log p_{\delta_i}(w_i |  z_i) &= \nabla_{\delta_i} \left[{(\eta_i(z_i) + s_i(z_i; \delta_i))}^\top T_i(w_i) - h_i(\eta(z_i) + s_i(z_i; \delta_i))\right]\nonumber\\
                                                      &= {(\nabla_{\delta_i} s_i(z_i; \delta_i))}^\top T_i(w_i) - {(\nabla_{\delta_i} s_i(z_i; \delta_i))}^\top \nabla h_i(\eta(z_i) + s_i(z_i; \delta_i)) \nonumber\\
                                                      &= {(\nabla_{\delta_i} s_i(z_i; \delta_i))}^\top (T_i(w_i) - h^{(1)}_i(\eta_{\delta_i}(z_i))) \label{eq:grad_log_prob_i}
  \end{align}
  where ${\nabla_{\delta_i} s_i(z_i; \delta_i)} \in \R^{d_{T_i} \times d_{\delta_i}}$, and $\nabla h_i(\eta(z_i) + s_i(z_i; \delta_i))$ is the gradient of the function $h_i: \R^{d_{T_i}} \rightarrow \R$, which is a column vector of length $d_{T_i}$.  It follows from known properties of the log-partition function \citep[Proposition~3.1]{wainwright2008graphical}, that $h^{(1)}_i(\eta_{\delta_i}(z_i)) = \E_{\delta}[T_i(W_i) |  z_i]$. This gives us that 
  \begin{align*}
    \nabla_{\delta_i} \E_{\delta}[\ell] &= \E_{\delta}\left[\ell \cdot {(\nabla_{\delta_i} s_i(Z_i; \delta_i))}^\top (T_i(W_i) - \E_{\delta}[T_i(W_i) |  Z_i])\right]\\
                                        &= \E_{\delta}\left[{(\nabla_{\delta_i} s_i(Z_i; \delta_i))}^\top \E_{\delta}[\ell \cdot (T_i(W_i) - \E_{\delta}[T_i(W_i) |  Z_i]) |  Z_i] \right]\\
                                        &= \E_{\delta}\left[{(\nabla_{\delta_i} s_i(Z_i; \delta_i))}^\top \cov_{\delta}(\ell, T_i(W_i) |  Z_i)\right],
  \end{align*}
  where the second equality follows from the tower property and $Z_i$-measurability of $\nabla_{\delta_i} s_i(Z_i; \delta_i)$, and the final equality follows from the definition of the conditional covariance. This expression, evaluated at $\delta = 0$, gives us the desired result, that 
  \begin{equation*}
    \sg_{i}^1 \coloneqq \nabla_{\delta_i} \E_{\delta}[\ell] \big|_{\delta = 0} = \E\left[D_{i, 1}^{\top} \cov(\ell, T_i(W_i) |  Z_i)\right],
  \end{equation*}
  where $D_{i, 1} = \nabla_{\delta_i} s_i(Z_i, \delta_i) |_{\delta = 0}$.  The result follows from the definition that gradients are taken entry-wise, giving $\sg^1 = (\sg^1_1, \ldots, \sg^1_m) \in \R^{d_{\delta_1} + \cdots d_{\delta_m}}$.

  \textbf{Shift Hessian (Diagonal)}:  
  For the shift Hessian, we first compute the diagonal entries of $\nabla^2_{\delta}\E_\delta[\ell]|_{\delta=0}$, which are blocks of size $\R^{d_{\delta_i} \times d_{\delta_i}}$.  We begin by computing the Hessian of the likelihood.
    \begin{align*}
    &\nabla_{\delta_i}^2 p_{\delta}(v) \\
    &= \nabla_{\delta_i} \bigg(p_{\delta}(v) \nabla_{\delta_i} \log p_{\delta_i}(w_i |  z_i)\bigg)\\
          &=p_\delta(v) \bigg((\nabla_{\delta_i} \log p_{\delta_i}(w_i|z_i))^{\otimes 2} + \nabla^2_{\delta_i} \log p_{\delta_i}(w_i|z_i) \bigg) \\
            &= p_\delta(v) \bigg(\{\nabla_{\delta_i} s_i(z_i; \delta_i)\}^\top\big(T_i(w_i) - h^{(1)}_i(\eta_{\delta_i}(z_i))\big)^{\otimes 2}\{\nabla_{\delta_i} s_i(z_i; \delta_i)\} \\
            & \qquad\qquad - \{\nabla^2_{\delta_i} s_i(z_i; \delta_i)\}^\top (T_i(w_i) - h^{(1)}_i(\eta_{\delta_i}(z_i))) \\
            &\qquad\qquad - \{\nabla_{\delta_i} s_i(z_i; \delta_i)\}^\top h^{(2)}_i(\eta_{\delta_i}(z_i))\{\nabla_{\delta_i} s_i(z_i; \delta_i)\}\bigg),\\
            &= p_\delta(v) \bigg(\{\nabla_{\delta_i} s_i(z_i; \delta_i)\}^\top\bigg(\big(T_i(w_i) - h^{(1)}_i(\eta_{\delta_i}(z_i))\big)^{\otimes 2} - h^{(2)}_i(\eta_{\delta_i}(z_i))\bigg) \{\nabla_{\delta_i} s_i(z_i; \delta_i)\} \\
            & \qquad\qquad - \{\nabla^2_{\delta_i} s_i(z_i; \delta_i)\}^\top \big(T_i(w_i) - h^{(1)}_i(\eta_{\delta_i}(z_i))\big) \bigg)
    \end{align*}
    where we use the notation $v^{\otimes 2} := vv^\top$, and we note that $\nabla_{\delta_i}^2 s(z_i; \delta_i)$ is a tensor of size $d_{T_i} \times d_{\delta_i} \times d_{\delta_i}$, and $\{\nabla^2_{\delta_i} s_i(z_i; \delta_i)\}^\top h^{(1)}_i(\cdot)$ is a matrix of size $d_{\delta_i} \times d_{\delta_i}$, where the $(m,n)$'th entry is $\{\tfrac{\partial}{\partial \delta_{im}}\tfrac{\partial}{\partial \delta_{in}} s(z_i; \delta_i)\}^\top h^{(1)}(\cdot)$. 

    Now, using the fact that $h^{(1)}(\eta_{\delta_i}(z)) = \E_{\delta}[T_i(W_i)|z_i]$ and $h^{(2)}(\eta_{\delta_i}(z_i)) = \var_{\delta}[T_i(W_i)|z_i]$ \citep[Proposition~3.1]{wainwright2008graphical}, and the definition $\epsilon_{T_i|Z_i} = T_i(W_i) - \E_{\delta}[T_i(W_i)|Z_i]$, we obtain
    \begin{align*}
      &\nabla_{\delta_i}^2 \E_{\delta}[\ell]\\
        &= \E_\delta\left[\ell \cdot \{\nabla_{\delta_i} s_i(Z_i; \delta_i)\}^\top\bigg(\epsilon_{T|Z_i}^{\otimes 2} - \var_{\delta}(T_i(W_i)|Z_i)\bigg)\{\nabla_{\delta_i} s_i(Z_i; \delta_i)\} \right] \\
        &\quad- \E_{\delta}\left[\ell \cdot \{\nabla^2_{\delta_i} s_i(Z_i; \delta_i)\}^\top \epsilon_{T_i|Z_i} \right]\\
        &=\E_\delta\left[\{\nabla_{\delta_i} s_i(Z_i; \delta_i)\}^\top \cov_{\delta}\left(\ell, \epsilon_{T_i|Z_i}^{\otimes 2}\bigg|Z_i\right)\{\nabla_{\delta_i} s_i(Z_i; \delta_i)\}\right]\\
        &\qquad\qquad- \E_\delta\left[\ell \cdot \{\nabla^2_{\delta_i} s_i(Z_i; \delta_i)\}^\top \epsilon_{T_i|Z_i}\right]
    \end{align*}
    which gives the desired result when we evaluate at $\delta = 0$.

\textbf{Shift Hessian (Off-Diagonal)} For $i \neq j$, we have that 
    \begin{align*}
    &\nabla_{\delta_i}\nabla_{\delta_j} p_{\delta}(v) \\
    &= \nabla_{\delta_i} (p_{\delta}(v) \nabla_{\delta_j} \log p_{\delta_j}(w_j |  z_j)) \\
                             &= \nabla_{\delta_i} (p_{\delta}(v) \nabla_{\delta_j} \log p_{\delta_j}(w_j |  z_j)) \\
                             &= p_{\delta}(v) \nabla_{\delta_i} \log p_{\delta_i}(w_i |  z_i) {\left(\nabla_{\delta_j} \log p_{\delta_j}(w_j |  z_j)\right)}^\top \\
                             &= p_{\delta}(v) \bigg({\{\nabla_{\delta_i} s_i(z_i; \delta_i)\}}^\top (T_i(w_i) - h^{(1)}_i(\eta_{\delta_i}(z_i)))\bigg) \\
                             &\qquad \qquad\qquad\bigg({\{\nabla_{\delta_j} s_j(z_j; \delta_j)\}}^\top (T_j(w_j) - h^{(1)}_j(\eta_{\delta_j}(z_j)))\bigg)^\top
    \end{align*}
    where the third line follows from the fact that $\nabla_{\delta_i} (\nabla_{\delta_j} \log p_{\delta_j}(w_j |  z_j)) = 0$, and the last line follows from the derivation of the gradient of the log-likelihood in~\cref{eq:grad_log_prob_i}.
    We can again use the fact that $h^{(1)}_i(\eta_{\delta_i}(Z_i)) = \E_{\delta}[T_i(W_i)|Z_i]$ and the shorthand $\epsilon_{T_i|  Z_i} \coloneqq T_i(W_i) - \E_{\delta}[T_i(W_i) |  Z_i]$ to write that 
    \begin{align*}
        &\nabla_{\delta_i}\nabla_{\delta_j} \E_{\delta}[\ell]\\
        &= \E_{\delta}\bigg[ \ell \cdot {\{\nabla_{\delta_i} s_i(z_i; \delta_i)\}}^\top \bigg((T_i(w_i) - h^{(1)}_i(\eta_{\delta_i}(z_i)))\bigg)\\
        &\qquad\qquad\qquad\qquad\qquad\bigg((T_j(w_j) - h^{(1)}_j(\eta_{\delta_j}(z_j)))\bigg)^\top{\{\nabla_{\delta_j} s_j(z_j; \delta_j)\}} \bigg]
    \end{align*}
    and when we evaluate this expression at $\delta = 0$, we obtain 
    \begin{align*}
        \nabla_{\delta_i}\nabla_{\delta_j} \E_{\delta}[\ell] \big|_{\delta=0} &= \E\left[ \ell \cdot D_{i, 1}^\top \epsilon_{T_i |  Z_i} (\epsilon_{T_j |  Z_j})^\top D_{j, 1} \right] = \cov(\ell, D_{i, 1}^\top\epsilon_{T_i|Z_i}\epsilon_{T_j|Z_j}^\top D_{j, 1}).
    \end{align*}
    Where the last equality follows because $\E[D_{i, 1}^\top \epsilon_{T_i|Z_i}\epsilon_{T_j|Z_j}^\top D_{j, i}] = 0$.  To see this, note that one of $W_i, W_j$ must be a non-descendant of the other, and we will assume without loss of generality that $W_j$ is a non-descendant of $W_i$ in the causal graph consistent with the factorization given in~\cref{eqapp:factorization_of_pd}, which implies that $Z_j$ (the parents of $W_j$ in the underlying graph) are also non-descendants of $W_i$.Thus, $W_i \indep (W_j, Z_j) |  Z_i$, because $(W_j, Z_j)$ are both non-descendants of $W_i$.  Then, observe that $D_{i, 1}$ is a function of $Z_i$, and $\epsilon_{T_i |  Z_i}$ is a variable with zero-mean conditioned on $Z_i$.  Thus, $\E[D_{i, 1}^\top \epsilon_{T_i |  Z_i} |  Z_i] = 0$, for all $Z_i$.  Moreover, given $Z_i$, we have that $D_{i, 1}^\top \epsilon_{T_i |  Z_i}$ is independent of $D_{j, 1}^\top \epsilon_{T_j |  Z_j}$.  As a result, we can write that 
    \begin{align*}
      \E[D_{i, 1}^\top \epsilon_{T_i|Z_i}\epsilon_{T_j|Z_j}^\top D_{j, 1}] 
      &= \E[\E[D_{i, 1}^\top \epsilon_{T_i|Z_i}\epsilon_{T_j|Z_j}^\top D_{j, 1} |  Z_i]]  \\
      &= \E[\E[D_{i, 1}^\top \epsilon_{T_i|Z_i} |  Z_i] \E[\epsilon_{T_j|Z_j}^\top D_{j, 1} |  Z_i]]  \\
      &= \E[ 0 \cdot \E[\epsilon_{T_j|Z_j}^\top D_{j, 1} |  Z_i]]  \\
        &= 0
    \end{align*}
\end{proof}

\subsection{Proof of \texorpdfstring{\Cref{thm:sg-cond-cov}}{}}
\SgCondCov*
\begin{proof}
    We have $\nabla_\delta s(Z;\delta) = \nabla_\delta \delta = 1$ and $\nabla_\delta^2 s(Z;\delta) = \nabla_\delta^2 \delta = 0$. The result now follows from \Cref{thm:sg-cond-cov-several-shift}.
\end{proof}

\subsection{Proof of \texorpdfstring{\Cref{thm:taylor-approximation-bound}}{}}
\TaylorApproxBound*
\begin{proof}
    The expectation is continuous and twice-differentiable with respect to $\delta$, because of the smoothness of the exponential family in the parameter, the fact that the shift function $s$ is twice-differentiable, and because the support does not change. Thus, applying Taylors remainder theorem to the function $t \mapsto \E_{t\cdot\delta}[\ell]$, it follows that there exist a $t_0 \in [0, 1]$ such that
    \begin{equation}\label{eq:proof-taylor-thm-1}
        \E_{1\cdot\delta}[\ell] - \E_{0\cdot\delta}[\ell] - \bigg(\tfrac{\mathrm{d}}{\mathrm{d}t} \E_{t\cdot\delta}[\ell]\bigg)\bigg|_{t=0} = \bigg(\tfrac{1}{2}\tfrac{\mathrm{d}^2}{\mathrm{d}^2t} \E_{t\cdot\delta}[\ell]\bigg)\bigg|_{t=t_0}.
    \end{equation}
    We have $\bigg(\tfrac{\mathrm{d}}{\mathrm{d}t} \E_{t\cdot\delta}[\ell]\bigg)\bigg|_{t=0} = \sg^1$ and by the same arguments (see the proof of \cref{thm:sg-cond-cov-several-shift}), it follows that $\bigg(\tfrac{1}{2}\tfrac{\mathrm{d}^2}{\mathrm{d}^2t} \E_{t\cdot\delta}[\ell]\bigg)\bigg|_{t=t_0} = \delta^\top\cov_{t_0\cdot\delta}(\ell, \epsilon_{t_0\cdot\delta, T|Z}^{\otimes 2})\delta$. Plugging this in, and subtracting $\tfrac{1}{2}\delta^\top\sg^2\delta$ on both sides of \cref{eq:proof-taylor-thm-1} yields
    \begin{align*}
      \bigg|\E_{\delta}[\ell] - E_{\delta, \text{Taylor}} \bigg|
        &= \tfrac{1}{2}\bigg|\delta^\top\bigg(\cov_{t_0\cdot\delta}(\ell, \epsilon_{t_0\cdot\delta, T|Z}^{\otimes 2}) - \cov(\ell, \epsilon_{0, T|Z}^{\otimes 2})\bigg)\delta\bigg| \\
        &\leq \tfrac{1}{2}\sup_{t\in[0,1]}\bigg|\delta^\top\bigg(\cov_{t\cdot\delta}(\ell, \epsilon_{t\cdot\delta, T|Z}^{\otimes 2}) - \cov(\ell, \epsilon_{0, T|Z}^{\otimes 2})\bigg)\delta\bigg|.
    \end{align*}
    Let $K := \bigg(\cov_{t\cdot\delta}(\ell, \epsilon_{t\cdot\delta, T|Z}^{\otimes 2}) - \cov(\ell, \epsilon_{0, T|Z}^{\otimes 2})\bigg)$. Since $K$ is symmetric and real valued, it is diagonalizeable, $K = U^\top \Lambda U$ for an orthonormal matrix $U$ and diagonal matrix $\Lambda = \operatorname{diag}(\alpha_1, \ldots, \alpha_d)$. 
    We then have
    \begin{align*}
        |\delta^\top K \delta| &= |\delta^\top U^\top \Lambda U \delta| \\
        &= |(\Lambda^{1/2}U\delta)^\top (\Lambda^{1/2}U\delta)| \\
        &= \|\Lambda^{1/2}U\delta\|_2^2 \\
        &\leq \|\Lambda^{1/2}\|_2^2 \|U\delta\|_2^2 \\
        &= \sigma(K) \|\delta\|_2^2,
    \end{align*}
    where $\Lambda^{1/2} = \operatorname{diag}(\sqrt{\alpha_1}, \ldots, \sqrt{\alpha_d})$, $\|\cdot\|_2$ denotes the supremum-norm when applied to matrices and the $2$-norm when applied to vectors and $\|U\delta\|_2 = \|\delta\|_2$ because $\|U\delta\|_2^2 = \delta^\top U^\top U \delta = \delta^\top \delta = \|\delta\|_2^2$, using orthonormality of $U$. Plugging in this inequality, we get that
    \begin{equation*}
      \bigg|\E_{\delta}[\ell] - E_{\delta, \text{Taylor}} \bigg| \leq \tfrac{1}{2}\sup_{t\in[0,1]} \sigma\bigg(\cov_{t\cdot\delta}(\ell, \epsilon_{t\cdot\delta, T|Z}^{\otimes 2}) - \cov(\ell, \epsilon_{0, T|Z}^{\otimes 2})\bigg) \|\delta\|_2^2,
    \end{equation*}
    which concludes the proof.
\end{proof}

\subsection{Proof of \texorpdfstring{\Cref{prop:log_odds_marginal_to_shift}}{}}
\LogOddsMarginaltoShift*
\begin{proof}
  Let $F$ denote the event that $\eta(Z)$ is finite (i.e., $\eta(Z) \not\in \{- \infty, + \infty\}$). Under $F$, the conditional probability function $\sigma(\eta(Z) + \delta)$ is a strictly monotonically increasing function of $\delta$, and if $\eta(Z) \in \{- \infty, + \infty\}$, then the conditional probability is a constant function of $\delta$ (zero or one, respectively). Hence, we can write that 
  \begin{equation*}
    \P_{\delta}(W = 1) = \P_{\delta}(W = 1 |  F) (1 - p_{+} - p_{-}) + p_{+}
  \end{equation*}
  and by assumption, $1 - p_{+} - p_{-} > 0$. The marginal probability $\P_{\delta}(W = 1 |  F)$ is a strictly monotonically increasing function of $\delta$, with a limit of $1$ as $\delta \rightarrow \infty$, and a limit of $0$ as $\delta \rightarrow - \infty$.  As a result, it is bounded in $(p_{+}, 1 - p_{-})$.
\end{proof}

\subsection{Proof of \texorpdfstring{\Cref{lemma:ar-quadratic-loss}}{}}
\AnchorRegQuadLoss*
\begin{proof}
    It follows from \cref{eq:linear-scm} that one can write $(X^\top, Y^\top, H^\top) = (1-B)^{-1}(MA + \epsilon)$, and for a given $\gamma$, there exist $b_\gamma, \kappa_\gamma$ such that $Y - \gamma^\top X = b_\gamma^\top A + \kappa_\gamma^\top \epsilon$ \citep{rothenhausler2021anchor}.
    In $\P_{\delta}$, we can write $A = \mu + \delta + \epsilon_A$, where $\epsilon_A \sim \mathcal{N}(0, \Sigma)$, for all values of $\mu$ and $\delta$. Plugging this in yields
    \begin{align*}
        \E_{\delta}[(Y-\gamma^\top X)^2] &= \E_{\delta}[(b_\gamma^\top A + \kappa_\gamma^\top \epsilon)^2] \\
        &= \E_{\delta}[(b_\gamma^\top (\mu + \delta + \epsilon_A) + \kappa_\gamma^\top \epsilon)^2] \\
        &=\E[(b_\gamma^\top (\mu + \epsilon_A) + \kappa^\top \epsilon)^2] + (2 b_\gamma^\top\mu) \delta^\top b_\gamma + \delta^\top b_\gamma b_\gamma^\top \delta \\
        &=\E[(Y - \gamma^\top X)^2] + (2 b_\gamma^\top\mu) \delta^\top b_\gamma + \delta^\top b_\gamma b_\gamma^\top \delta.
    \end{align*}
    where we do not put a subscript on the expectation in the third line because it is taking expectations over $\epsilon_A$ and $\epsilon$, both which do not depend on the choice of $\mu$ and $\delta$. The statement of the lemma follows by letting $u_{\mu,\gamma} = 2b_\gamma^\top \mu$ and $v_\gamma = \sqrt{2} b_\gamma$.
\end{proof}

\subsection{Proof of \texorpdfstring{\Cref{prop:reconciliation-ar}}{}}
\ReconAR*
\begin{proof}
    Similar to \cref{lemma:ar-quadratic-loss}, we rewrite $Y-\gamma^\top X = b_\gamma^\top A + \kappa^\top \epsilon$, and by rewriting $A = \mu + \delta + \epsilon_A$, where $\epsilon_A \sim \mathcal{N}(0, \Sigma)$, we obtain
    \begin{align}
        \E_{\delta}[(Y-\gamma^\top X)^2] &= \E(b_\gamma^\top (\mu + \epsilon_A) + \kappa^\top \epsilon)^2 \label{eq:ar-recon-1} \\
        &+ (2 b_\gamma^\top\mu) \delta^\top b \label{eq:ar-recon-2}\\
        &+ \delta^\top b b^\top \delta. \label{eq:ar-recon-3}
    \end{align}
    We recognize that \cref{eq:ar-recon-1} equals $\E(Y-\gamma^\top X)^2$. Similarly, we now show that \cref{eq:ar-recon-2,eq:ar-recon-3} match the shift gradients (multiplied appropriately with $\delta$).
    
    First, we assume that $\Sigma=\Id$.
    Since $A$ is a Gaussian with (known) mean $\Id$, the sufficient statistic is $T(A) = A$. Hence, according to \cref{thm:sg-cond-cov-several-shift}, we can compute the shift gradient as
    \begin{equation*}
        \sg^1 = \cov(A, \ell) = \cov(A, (Y-\gamma^\top X)^2) = \cov(A, (b_\gamma^\top A)^2).
    \end{equation*}
    We can calculate the $i$'th entrance of this vector as:
    \begin{align*}
        \sg^1 = \cov(A_i, (b_\gamma^\top A)^2) 
        &= \cov(A_i - \mu_i, (b_\gamma^\top A)^2)) \\
        &= \cov(A_i-\mu_i, b_{\gamma, i}^2 A_i^2 + 2 \sum_{j\neq i} b_i b_j A_i A_j)\\
        &= b_{\gamma, i}^2\cov(A_i-\mu_i, A_i^2) + 2b_{\gamma, i}\sum_{j\neq i} b_j \cov(A_i - \mu_i, A_i A_j),
    \end{align*}
    where in the first equality we use that subtracting a constant doesn't change the covariance, and we use independence of $A_i$ from $A_{j}A_{j'}$ when $i \notin \{j, j'\}$.
    Using the assumption that $A_i$ has unit variance, we now get that
    \begin{align*}
        \cov(A_i - \mu_i, A_i^2) &= \E[A_i^3 - \mu_i A_i^2] = (\mu_i^3 + 3\mu_i) - \mu_i (\mu_i^2 + 1) = 2\mu_i \\
        \cov(A_i - \mu_i, A_i A_j)&=\E[A_i^2 - A_i\mu_i]\E[A_j] = (\mu_i^2 + 1 - \mu_i^2) \mu_j = \mu_j.
    \end{align*}
    By plugging in, we obtain
    \begin{align*}
        \sg^1(\mu_i) &= 2b_{\gamma, i}^2 \mu_i + 2 b_{\gamma,i}\sum_{j\neq i} b_j \mu_j \\
        &=2b_{\gamma,i} b_{\gamma}^\top \mu.
    \end{align*}
    Since this was element-wise, we obtain that the full vector is $\sg^1 = 2 b_\gamma b_\gamma^\top\mu$, which, when multiplied with $\delta$ yields \cref{eq:ar-recon-2}.
    
    We compute $\sg^2$ similarly. The diagonal entries are given by
    \begin{align*}
        \sg^2_{i,i} 
        &= \cov((A_i-\mu_i)^2, (b_\gamma^\top A)^2) \\
        &= \cov((A_i-\mu_i)^2, b_{\gamma, i}^2 A_i^2 + b_{\gamma, i}\sum_{j\neq i} b_{\gamma, j} A_i A_j) \\
        &= b_{\gamma, i}^2 \cov((A_i-\mu_i)^2, A_i^2) + b_{\gamma, i}\sum_{j\neq i} b_{\gamma, j} \cov((A_i-\mu_i)^2, A_i A_j).
    \end{align*}
    Because $\Sigma=\Id$, the second through fourth moments of $A_i$ are given by $\E[A_i^2] = \mu_i^2 + 1$, $\E[A_i^3] =\mu_i^3 + 3\mu_i$ and $\E[A_i^4] = \mu_i^4 + 6\mu_i^2 + 3$. Using this, we get
    \begin{align*}
        \cov((A_i-\mu_i)^2, A_i^2) 
        &= \E[A_i^4 - 2\mu_i A_i^3 + \mu_i^2 A_i^2] - \E[(A_i-\mu_i)^2]\E[A_i^2] \\
        &= (\mu_i^4 + 6\mu_i^2 + 3) - 2\mu_i (\mu_i^3 + 3\mu_i) + \mu_i^2(\mu_i^2 + 1) - 1\cdot(\mu_i^2 + 1)\\
        &= 2,
    \end{align*}
    and for $j\neq i$:
    \begin{align*}
        \cov((A_i-\mu_i)^2, A_i A_j) 
        &= \cov((A_i-\mu_i)^2, (A_i - \mu_i) A_j) + \cov((A_i-\mu_i)^2, \mu_i A_j) \\
        &= \cov((A_i-\mu_i)^2, (A_i - \mu_i) A_j) \\
        &= \E[(A_i-\mu_i)^3]\E[A_j] - \E[(A_i-\mu_i)^2]\E[(A_i-\mu_i)] \E[A_j] \\
        &= 0 - 0,
    \end{align*}
    using linearity of the covariance, that $A_i \indep A_j$ and that the first and third moments are zero for a centered Gaussian $A_i - \mu_i$. Plugging this in, we get that the diagonal entries are given by
    \begin{align*}
        \sg^2_{i,i} = 2b_{\gamma,i}^2.
    \end{align*}
    
    We can compute the off-diagonal entries similarly. For $i \neq j$, we have:
    \begin{align}\label{eq:ar-recon-sg-2-off-diag}
        \sg^2_{i,j} &= \cov\bigg((A_i - \mu_i)(A_j - \mu_j), \\
        &\qquad\qquad b_{\gamma, i}^2 A_i^2 + b_{\gamma, j}^2 A_j^2 + 2 b_{\gamma, i}b_{\gamma, j} A_i A_j + 2\sum_{v \notin\{i,j\}} b_{\gamma, i}b_{\gamma,v} A_i A_v + b_{\gamma, j}b_{\gamma,v} A_j A_v\bigg).\nonumber
    \end{align}
    Using the independence of $A_i$ and $A_j$, we have
    \begin{align*}
        &\cov((A_i - \mu_i)(A_j - \mu_j), A_i^2) \\
        &= \E[A_i^2(A_i-\mu_i)]\underbrace{\E[A_j - \mu_j]}_{=0} - \underbrace{\E[A_i - \mu_i]}_{=0}\E[A_j - \mu_j]\E[A_i^2]\\
        &= 0,
    \end{align*}
    and similarly $\cov((A_i - \mu_i)(A_j - \mu_j), A_j^2)=0$. Using the same reasoning, for $v \notin \{i,j\}$
    \begin{align*}
        &\cov((A_i - \mu_i)(A_j-\mu_j), A_i A_v) \\
        &= \E[(A_i-\mu_i)A_i]\E[A_j-\mu_j] \E[A_v] - \E[(A_i-\mu_i)]\E[A_i]\E[A_j-\mu_j] \E[A_v]\\
        &= 0,
    \end{align*}
    and the same for $\cov((A_i - \mu_i)(A_j-\mu_j), A_j A_v)$. Finally, we have
    \begin{align*}
        &\cov((A_i - \mu_i)(A_j-\mu_j), A_i A_j) \\
        &= \E[(A_i - \mu_i)A_i]\E[(A_j - \mu_j)A_j] - \E[(A_i - \mu_i)]\E[A_i]\E[(A_j - \mu_j)]\E[A_j] \\
        &= \E[(A_i - \mu_i)A_i]\E[(A_j - \mu_j)A_j] \\
        &= \E[A_i^2 - \mu_iA_i]\E[A_j^2 - \mu_j A_j] \\
        &= [(\mu_i^2 + 1) - \mu_i^2][[(\mu_j^2 + 1) - \mu_j^2]] \\
        &= 1.
    \end{align*}
    Plugging into \cref{eq:ar-recon-sg-2-off-diag}, we get that
    \begin{equation*}
        \sg^2_{i,j} = 2 b_{\gamma,i}b_{\gamma,j},
    \end{equation*}
    and hence for both diagonal and off-diagonal entries, $\sg^2_{i,j} = 2 b_{\gamma,i}b_{\gamma,j}$, implying that
    \begin{equation*}
        \sg^2 = 2 b_{\gamma}b_{\gamma}^\top.
    \end{equation*}
    In particular $\tfrac{1}{2}\delta^\top\sg^2\delta$ matches \cref{eq:ar-recon-3}.
    
    Finally, we consider the case $\Sigma\neq \Id$. Let $\Sigma^{-1/2}$ be the `square-root' of $\Sigma^{-1}$, such that $\Sigma^{-1/2}\Sigma^{-\top/2}$ (where the latter denotes $(\Sigma^{-1/2})^\top$.\footnote{Formally, if $\Sigma^{-1} = U\Lambda U^\top$ where $\Lambda = \operatorname{diag}(\lambda_1, \ldots, \lambda_{d_A})$, define $\Sigma^{-1/2} := U\operatorname{diag}(\sqrt{\lambda_1},\ldots, \sqrt{\lambda_{d_A}})$.}
    
    The sufficient statistics for the mean in a multivariate Gaussian distribution with known variance is given by $T(A) = \Sigma^{-1}A$.
    We then have
    \begin{align*}
        \sg^1 
        &= \cov(\Sigma^{-1}A, (b_\gamma^\top A)^2) \\
        &= \Sigma^{-1/2}\cov(\Sigma^{-1/2}A, ((\Sigma^{1/2} b_\gamma)^\top \Sigma^{-1/2} A)^2) \\
        &= \Sigma^{-1/2}\cov_{\tilde{\mu}}(\tilde{A}, (\tilde{b}_\gamma^\top \tilde{A})^2),
    \end{align*}
    where $\tilde{A} = \Sigma^{-1/2}A = \sim\mathcal{N}(\tilde{\mu}, \Id)$, $\tilde{\mu}=\Sigma^{-1/2}\mu$ and $\tilde{b}_\gamma = \Sigma^{1/2}b_\gamma$. In particular, since $\tilde{A}$ has unit variance, we can use the above derivations to obtain
    \begin{align*}
        \sg^1 = 2\Sigma^{-1/2}(\tilde{b}_\gamma \tilde{b}_\gamma^\top \tilde{\mu}) = 2 b_\gamma b_\gamma^\top \mu.
    \end{align*}
    In particular, the first shift gradient is the when $\Sigma\neq\Id$ as when $\Sigma=\Id$.
    Similarly, 
    \begin{align*}
        \sg^2 
        &= \cov(\Sigma^{-1}(A-\mu)(A-\mu)^\top \Sigma^{-\top}, (b_\gamma^\top A)^2) \\
        &= \cov(\Sigma^{-1/2}\Sigma^{-1/2}(A-\mu)(A-\mu)^\top \Sigma^{-\top/2}\Sigma^{-\top/2}, (\Sigma^{1/2}b_\gamma)^\top \Sigma^{-1/2}A)^2) \\
        &= \Sigma^{-1/2}\cov_{\tilde{\mu}}((\tilde{A}-\tilde{\mu})(\tilde{A}-\tilde{\mu})^\top, (\tilde{b}_\gamma^\top \tilde{A})^2)\Sigma^{-\top/2} \\
        &= \Sigma^{-1/2} 2 \tilde{b}_\gamma\tilde{b}_\gamma^\top \Sigma^{-\top/2} \\
        &= 2 b_\gamma b_\gamma^\top.
    \end{align*}
    Hence, also when $\Sigma\neq\Id$, the terms of \cref{eq:ar-recon-2,eq:ar-recon-3} matches the expression given by $\sg^1$ and $\sg^2$. This concludes the proof.
\end{proof}

\end{document}